\documentclass[twoside,11pt]{article}
\usepackage[abbrvbib]{jmlr2e_arxiv}

\usepackage{subcaption}

\usepackage{amsmath}


\newcommand\st{~~~\text{s.t.}~~~}

\def\eg{\emph{e.g.}}
\def\ie{\emph{i.e.}}
\def\Hcal{{\mathcal H}}
\def\Kcal{{\mathcal K}}
\def\Fcal{{\mathcal F}}
\def\Pcal{{\mathcal P}}
\def\Xcal{{\mathcal X}}
\def\Real{{\mathbb R}}
\def\vs{}
\def\kmone{k\text{--}1}
\def\kmtwo{k\text{--}2}
\def\nmone{n\text{--}1}
\def\defin{:=}

\def\R{{\mathbb R}}
\def\N{{\mathbb N}}
\def\Z{{\mathbb Z}}
\def\Hc{{\mathcal H}}
\def\Hcbar{\bar{\mathcal H}}
\def\activ{\sigma}

\DeclareMathOperator{\vect}{span}

\DeclareMathOperator{\1}{\mathbf{1}}

\newtheorem{appxtheorem}{Theorem}[section]
\newtheorem{appxlemma}{Lemma}[section]

\let\Omm\Omega
\renewcommand\Omega{{\R^d}}

\title{Group Invariance, Stability to Deformations,\\ and Complexity of Deep Convolutional
Representations}

\author{\name Alberto Bietti \email alberto.bietti@inria.fr \\
		\name Julien Mairal \email julien.mairal@inria.fr \\
		\addr Univ. Grenoble Alpes, Inria, CNRS, Grenoble INP$\hspace*{0.03cm}^*$\hspace*{-0.10cm}, LJK, 38000 Grenoble, France
}
\jmlrheading{1}{2018}{1-49}{4/00}{10/00}{bietti}{Alberto Bietti and Julien Mairal}
\ShortHeadings{Invariance, Stability, and Complexity of Deep Convolutional Representations}{Bietti and Mairal}
\firstpageno{1}

\begin{document}

\maketitle

\begin{abstract}%

The success of deep convolutional architectures is often attributed in part to their
ability to learn multiscale and invariant representations of natural signals.
However, a precise study of these properties and how they affect learning guarantees
is still missing.
In this paper, we consider deep convolutional representations of signals; we study
their invariance to translations and to more general groups of transformations,
their stability to the action of diffeomorphisms, and their ability to preserve signal information.
This analysis is carried 
by introducing a multilayer kernel based on convolutional kernel networks
and by studying the geometry induced by the kernel mapping.
We then characterize the corresponding reproducing kernel Hilbert space (RKHS),
showing that it contains a large class of convolutional neural networks
with homogeneous activation functions.
This analysis allows us to separate data representation from learning,
and to provide a canonical measure of model complexity, the RKHS norm,
which controls both stability and generalization of any learned model.
In addition to models in the constructed RKHS, our stability analysis also applies to convolutional networks
with generic activations such as rectified linear units,
and we discuss its relationship with recent generalization bounds based on spectral~norms.

\end{abstract}
\begin{keywords}
invariant representations, deep learning, stability, kernel methods
\end{keywords}

\def\thefootnote{\fnsymbol{footnote}}
\footnotetext[1]{Institute of Engineering Univ. Grenoble Alpes}
\def\thefootnote{\arabic{footnote}}

\section{Introduction} 
\label{sec:introduction}
The results achieved by deep neural networks for prediction tasks have been
impressive in domains where data is structured and available in large amounts.
In particular, convolutional neural networks~\citep[CNNs,][]{lecun1989backpropagation}
have shown to effectively leverage the local stationarity of natural images at
multiple scales thanks to convolutional operations,
while also providing some translation invariance through
pooling operations. Yet, the exact nature of this invariance and the
characteristics of functional spaces where convolutional neural networks live
are poorly understood; overall, these models are sometimes seen as
clever engineering black boxes that have been designed with a lot of insight collected
since they were introduced.

Understanding the inductive bias of these models is nevertheless
a fundamental question. For instance, a better grasp of the geometry
induced by convolutional representations may bring new intuition about their success,
and lead to improved measures of model complexity.
In turn, the issue of regularization may be solved by providing ways to control the variations of prediction
functions in a principled manner.
One meaningful way to study such variations is to consider the stability of model predictions
to naturally occuring changes of input signals, such as translations and deformations.

Small deformations of natural signals often preserve their main characteristics, such as
class labels (\eg, the same digit with different handwritings
may correspond to the same images up to small deformations), and provide a much richer class
of transformations than translations.
The scattering transform~\citep{mallat2012group,bruna2013invariant} is a recent attempt to
characterize convolutional multilayer architectures based on 
wavelets.  The theory provides an elegant characterization of invariance and stability
properties of signals represented via the scattering operator,
through a notion of Lipschitz stability to the action of diffeomorphisms.
Nevertheless, these networks do not involve ``learning'' in the classical sense since the
filters of the networks are pre-defined, and the resulting architecture differs
significantly from the most used ones, which adapt filters to training data.

In this work, we study these theoretical properties for more standard
convolutional architectures, from the point of view of positive definite
kernels~\citep{scholkopf2001learning}. Specifically, we consider a functional
space derived from a kernel for multi-dimensional signals
that admits a multi-layer and convolutional structure based on the construction
of convolutional kernel networks (CKNs) introduced by \citet{mairal_end--end_2016,mairal2014convolutional}.
The kernel representation follows standard convolutional architectures,
with patch extraction, non-linear (kernel) mappings, and pooling operations.
We show that our functional space contains a large class of CNNs with
smooth homogeneous activation functions.

The main motivation for introducing a kernel framework is to study separately
data representation and predictive models. On the one hand, we study
the translation-invariance properties of the kernel representation and its
stability to the action of diffeomorphisms, obtaining similar guarantees as the
scattering transform~\citep{mallat2012group}, while preserving signal
information.
When the kernel is appropriately designed, we also show how to
obtain signal representations that are invariant to the action of any locally compact
group of transformations, by modifying the construction of the kernel representation
to become \emph{equivariant} to the group action.
On the other hand, we show that these stability
results can be translated to predictive models by
controlling their norm in the functional space, or simply the norm of the
last layer in the case of CKNs~\citep{mairal_end--end_2016}.
With our kernel framework, the RKHS norm also acts as a measure of model complexity,
thus controlling both stability and generalization,
so that stability may lead to improved sample complexity.
Finally, our work suggests that explicitly regularizing CNNs with the RKHS norm
(or approximations thereof) can help obtain more stable models,
a more practical question which we study in follow-up work~\citep{bietti2018regularization}.

A short version of this paper was published at the Neural Information Processing Systems 2017 conference~\citep{bietti2017invariance}.

\subsection{Summary of Main Results}
Our work characterizes properties of deep convolutional models along two main directions.
\begin{itemize}
	\item The first goal is to study \emph{representation} properties of such models,
	independently of training data.
	Given a deep convolutional architecture,
	we study signal preservation as well as invariance and stability properties.
	\item The second goal focuses on \emph{learning} aspects, by studying the complexity
	of learned models based on our representation.
	In particular, our construction relies on kernel methods, allowing us to define a corresponding functional
	space (the RKHS).
	We show that this functional space contains a class of CNNs with smooth homogeneous activations,
	and study the complexity of such models by considering their RKHS norm.
	This directly leads to statements on the generalization of such models, as well
	as on the invariance and stability properties of their predictions.
	\item Finally, we show how some of our arguments extend to more traditional CNNs with
	generic and possibly non-smooth activations (such as ReLU or tanh).
\end{itemize}

\paragraph{Signal preservation, invariance and stability.}
We tackle this first goal by defining a deep convolutional representation based on
hierarchical kernels. We show that the representation preserves signal information and guarantees
near-invariance to translations and stability to deformations in the following sense,
defined by~\citet{mallat2012group}:
for signals $x : \Omega \to \R^{p_0}$ defined on the continuous domain $\R^d$,
we say that a representation $\Phi(x)$ is \emph{stable} to the action of diffeomorphisms if
\begin{equation*}
\|\Phi(L_\tau x) - \Phi(x) \| \leq (C_1 \|\nabla \tau\|_\infty + C_2 \|\tau\|_\infty)\|x\|,
\end{equation*}
where $\tau:\Omega \to \Omega$ is a $C^1$-diffeomorphism, $L_\tau x(u) = x(u - \tau(u))$ its action operator,
and the norms $\|\tau\|_\infty$ and $\|\nabla \tau\|_\infty$ characterize how large the
translation and deformation components are, respectively (see Section~\ref{sec:stability} for formal definitions).
The Jacobian $\nabla \tau$ quantifies the size of local deformations,
so that the first term controls the stability of the representation.
In the case of translations, the first term vanishes ($\nabla \tau = 0$),
hence a small value of $C_2$ is desirable for translation invariance.
We show that such signal preservation and stability properties are valid for the
multilayer kernel representation~$\Phi$ defined in Section~\ref{sec:kernel_construction}
by repeated application of patch extraction, kernel mapping, and pooling operators:
\begin{itemize}
	\item The representation can be discretized with no loss of information,
	by subsampling at each layer with a factor smaller than the patch size;
	\item The translation invariance is controlled by a factor $C_2 = C_2' / \sigma_n$,
	where~$\sigma_n$ represents the ``resolution'' of the last layer,
	and typically increases exponentially with depth;
	\item The deformation stability is controlled by a factor $C_1$ which increases as~$\kappa^{d+1}$,
	where~$\kappa$ corresponds to the patch size at a given layer, that is,
	the size of the ``receptive field'' of a patch relative to the resolution of the previous layer.
\end{itemize}
These results suggest that a good way to obtain a stable representation that preserves signal information
is to use the smallest possible patches at each layer (\eg, 3x3 for images) and perform pooling and downsampling
at a factor smaller than the patch size, with as many layers as needed in order to reach a desired level
of translation invariance~$\sigma_n$.
We show in Section~\ref{sub:stability_with_approximation} that the same invariance and stability guarantees
hold when using kernel approximations as in CKNs,
at the cost of losing signal information.

In Section~\ref{sec:global_invariance_to_group_actions}, we show how to go beyond the translation group,
by constructing similar representations that are invariant to the action of locally compact groups.
This is achieved by modifying patch extraction and pooling operators so that they commute with
the group action operator (this is known as \emph{equivariance}).

\paragraph{Model complexity.}
Our second goal is to analyze the complexity of deep convolutional models
by studying the functional space defined by our kernel representation,
showing that certain classes of CNNs are contained in this space,
and characterizing their norm.

The multi-layer kernel representation defined in Section~\ref{sec:kernel_construction} is
constructed by using kernel mappings defined on local signal patches at each scale,
which replace the linear mapping followed by a non-linearity in standard convolutional networks.
Inspired by~\citet{zhang2016convexified}, we show in Section~\ref{sub:rkhs_activations} that when these kernel mappings come from a class of dot-product kernels, the corresponding RKHS contains functions of the form
\begin{equation*}
  z \mapsto \|z\| \sigma(\langle g, z \rangle / \|z\|),
\end{equation*}
for certain types of smooth activation functions~$\sigma$, where $g$ and $z$ live in a particular Hilbert space.
These behave like simple neural network functions on patches, up to homogeneization.
Note that if~$\sigma$ was allowed to be homogeneous, such as for rectified linear units $\sigma(\alpha) = \max(\alpha, 0)$,
homogeneization would disappear.
By considering multiple such functions at each layer, we construct a CNN in the RKHS of the full multi-layer
kernel in Section~\ref{sub:cnns_rkhs}.
Denoting such a CNN by $f_\sigma$, we show that its RKHS norm can be bounded as
\begin{equation*}
\|f_{\sigma}\|^2 \leq \|w_{n+1}\|^2 ~ C_\sigma^2(\|W_n\|_2^2 ~ C_\sigma^2(\|W_{n-1}\|_2^2 \ldots C_\sigma^2(\|W_2\|_2^2~C_\sigma^2(\|W_1\|_F^2)) \ldots)),
\end{equation*}
where $W_k$ are convolutional filter parameters at layer~$k$,
$w_{n+1}$ carries the parameters of a final linear fully connected layer,
$C_\sigma^2$ is a function quantifying the complexity of the simple functions defined above
depending on the choice of activation~$\sigma$,
and~$\|W_k\|_2$, $\|W_k\|_F$ denote spectral and Frobenius norms, respectively,
(see Section~\ref{sub:cnns_rkhs} for details).
This norm can then control generalization aspects through classical margin bounds,
as well as the invariance and stability of model predictions. Indeed, by using the reproducing property
$f(x) = \langle f, \Phi(x) \rangle$,
this ``linearization'' lets us control stability properties of model predictions through $\|f\|$: 
\[
\text{for all signals}~x~\text{and}~x',~~~|f(x) - f(x')| \leq \|f\| \cdot \|\Phi(x) - \Phi(x')\|,
\]
meaning that the prediction function~$f$ will inherit the stability of $\Phi$ when $\|f\|$ is small.

\paragraph{The case of standard CNNs with generic activations.}
When considering CNNs with generic, possibly non-smooth activations such as rectified linear units (ReLUs),
the separation between a data-independent representation and a learned model is not always
achievable in contrast to our kernel approach.
In particular, the ``representation'' given by the last layer of a learned CNN is often considered by practitioners,
but such a representation is data-dependent in that it is typically trained on a specific task and dataset,
and does not preserve signal information.

Nevertheless, we obtain similar invariance and stability properties for
the predictions of such models in Section~\ref{sub:generic_activations},
by considering a complexity measure given by the product of spectral norms of each linear convolutional mapping in a CNN.
Unlike our study based on kernel methods, such results do not say anything about generalization;
however, relevant generalization bounds based on similar quantities have been derived
(though other quantities in addition to the product of spectral norms appear in the bounds,
and these bounds do not directly apply to CNNs), \eg, by~\citet{bartlett2017spectrally,neyshabur2017pac},
making the relationship between generalization and stability clear in this context as well.

\subsection{Related Work}
Our work relies on image representations introduced in the context of
convolutional kernel
networks~\citep{mairal_end--end_2016,mairal2014convolutional}, which yield a
sequence of spatial maps similar to traditional CNNs, but where
each point on the maps is possibly infinite-dimensional and lives in a reproducing kernel Hilbert space (RKHS).
The extension to signals with $d$ spatial dimensions is straightforward.
Since computing the corresponding Gram matrix as in
classical kernel machines is computationally impractical, 
CKNs provide an approximation scheme
consisting of learning finite-dimensional subspaces of each RKHS's layer, where
the data is projected. The resulting architecture of CKNs resembles traditional
CNNs with a subspace learning interpretation and different unsupervised learning~principles.

Another major source of inspiration is the study of group-invariance and
stability to the action of diffeomorphisms of scattering
networks~\citep{mallat2012group}, which introduced the main formalism and
several proof techniques that were keys to our results.
Our main effort was to extend them to more general CNN architectures and to the kernel framework,
allowing us to provide a clear relationship between stability properties of the representation
and generalization of learned CNN models.
We note that an extension of scattering networks results to more general
convolutional networks was previously given by~\citet{wiatowski2018mathematical};
however, their guarantees on deformations do not improve on the inherent stability properties
of the considered signal, and their study does not consider learning or generalization,
by treating a convolutional architecture with fixed weights as a feature extractor.
In contrast, our stability analysis shows the benefits of deep representations with a clear dependence
on the choice of network architecture through the size of convolutional patches and pooling layers,
and we study the implications for learned CNNs through notions of model complexity.

Invariance to groups of transformations was also studied for more
classical convolutional neural networks from methodological and empirical
points of view~\citep{bruna2013learning,cohen2016group}, and for shallow
learned representations~\citep{anselmi2016invariance} or kernel methods~\citep{haasdonk2007invariant,mroueh2015learning,raj2017local}.
Our work provides a similar group-equivariant construction to~\citep{cohen2016group}, while additionally
relating it to stability. In particular, we show that in order to achieve group invariance,
pooling on the group is only needed at the final layer,
while deep architectures with pooling at multiple scales are mainly beneficial for stability.
For the specific example of the roto-translation group~\citep{sifre2013rotation}, we show that our construction
achieves invariance to rotations while maintaining stability to deformations on the translation group.

Note also that other techniques combining deep neural networks and kernels have been
introduced earlier. Multilayer kernel machines were for instance introduced by~\citet{cho2009kernel,scholkopf1998nonlinear}.
Shallow kernels for images modeling local regions were also proposed by~\citet{scholkopf1997}, and a multilayer construction was proposed by~\citet{bo2011}.
More recently, different models based on kernels have been introduced
by~\citet{anselmi2015deep,daniely2016,montavon2011} to gain some
theoretical insight about classical multilayer neural networks, while kernels are used by~\citet{zhang2016convexified} to
define convex models for two-layer convolutional networks.
Theoretical and practical concerns for learning with multilayer kernels
have been studied in~\citet{daniely2017random,daniely2016,steinwart2016learning,zhang2016l1}
in addition to CKNs.
In particular, \citet{daniely2017random,daniely2016} study certain classes of dot-product kernels
with random feature approximations, \citet{steinwart2016learning} consider
hierarchical Gaussian kernels with learned weights, and~\citet{zhang2016l1} study a convex
formulation for learning a certain class of fully connected neural networks using a hierarchical kernel.
In contrast to these works, our focus is on the kernel \emph{representation} induced by the specific
hierarchical kernel defined in CKNs and the geometry of the RKHS.
Our characterization of CNNs and activation functions contained in the RKHS is similar to
the work of~\citet{zhang2016l1,zhang2016convexified}, but differs in several ways:
we consider general \emph{homogeneous} dot-product kernels, which yield desirable properties of kernel mappings for stability;
we construct generic multi-layer CNNs with pooling in the RKHS, while~\citet{zhang2016l1} only considers fully-connected networks
and~\citet{zhang2016convexified} is limited to two-layer convolutional networks with no pooling;
we quantify the RKHS norm of a CNN depending on its parameters, in particular matrix norms,
as a way to control stability and generalization,
while \citet{zhang2016l1,zhang2016convexified} consider models with constrained parameters,
and focus on convex learning procedures.

\subsection{Notation and Basic Mathematical Tools}
A positive definite kernel~$K$ that operates on a set
$\Xcal$ implicitly defines a reproducing kernel Hilbert space $\Hcal$ of
functions from~$\Xcal$ to~$\Real$, along with a mapping~$\varphi: \Xcal \to
\Hcal$. A \emph{predictive model} associates to every point~$z$
in~$\Xcal$ a label in~$\Real$; it consists of a linear function~$f$ in~$\Hcal$ such
that $f(z) = \langle f, \varphi(z) \rangle_{\Hcal}$, where
$\varphi(z)$ is the \emph{data representation}.
Given now two points $z, z'$ in $\Xcal$, Cauchy-Schwarz's inequality allows us to 
control the variation of the predictive model~$f$ according to the geometry induced by the Hilbert norm $\|.\|_{\Hcal}$:
\begin{equation}
   |f(z) - f(z')| \leq \|f\|_{\Hc} \|\varphi(z) - \varphi(z')\|_{\Hc}. \label{eq:cs}
\end{equation}
This property implies that two points $z$ and~$z'$ that are close to each other
according to the RKHS norm should lead to similar predictions, when the model
$f$ has small norm in~$\Hcal$. 

Then, we consider notation from signal processing similar
to~\citet{mallat2012group}. We call
a signal~$x$ a function in $L^2(\Omega, \Hc)$, where the domain~$\Omega$
represents spatial coordinates, and $\Hc$ is a Hilbert space,
when $\|x\|_{L^2}^2 := \int_\Omega \|x(u)\|^2_{\Hc} du < \infty$, where~$du$ is the
Lebesgue measure on $\R^d$.
Given a linear operator~$T:L^2(\Omega, \Hc) \to
L^2(\Omega, \Hc')$, the operator norm is defined as $\|T\|_{L^2(\Omega, \Hc)
\to L^2(\Omega, \Hc')} := \sup_{\|x\|_{L^2(\Omega, \Hc)} \leq 1}
\|Tx\|_{L^2(\Omega, \Hc')}$. For the sake of clarity, we drop norm
subscripts, from now on, using the notation $\|\cdot\|$ for Hilbert space
norms, $L^2$ norms, and $L^2 \to L^2$ operator norms, while $|\cdot|$ 
denotes the Euclidean norm on~$\R^d$.
We use cursive capital letters (\eg, $\Hcal, \Pcal$) to denote Hilbert spaces,
and non-cursive ones for operators (\eg, $P, M, A$).
Some useful mathematical tools are also
presented in Appendix~\ref{sec:basic_tools}.

\subsection{Organization of the Paper}
The rest of the paper is structured as follows:
\begin{itemize}
	\item In Section~\ref{sec:kernel_construction}, we introduce 
	a multilayer convolutional kernel representation for continuous signals,
	based on a hierarchy of patch extraction, kernel mapping,
	and pooling operators. We present useful properties of this representation such as signal preservation,
	as well as ways to make it practical through discretization and kernel approximations in the context of~CKNs.
	\item In Section~\ref{sec:stability}, we present our main results regarding stability and invariance,
	namely that the kernel representation introduced in Section~\ref{sec:kernel_construction} is near translation-invariant
	and stable to the action of diffeomorphisms. We then show in Section~\ref{sub:stability_with_approximation} that
	the same stability results apply in the presence of kernel approximations such as those of CKNs~\citep{mairal_end--end_2016},
	and describe a generic way to modify the multilayer construction in order to guarantee invariance to the action of
	any locally compact group of transformations in Section~\ref{sec:global_invariance_to_group_actions}.
	\item In Section~\ref{sec:link_with_cnns}, we study the functional spaces induced by our representation,
	showing that simple neural-network like functions with certain smooth activations are contained in
	the RKHS at intermediate layers, and that the RKHS of the full kernel induced by our representation
	contains a class of generic CNNs with smooth and homogeneous activations. We then present upper bounds
	on the RKHS norm of such CNNs, which serves as a measure of complexity, controlling both generalization and stability.
	Section~\ref{sub:generic_activations} studies the stability for CNNs with generic activations such as rectified linear units,
	and discusses the link with generalization.
	\item Finally, we discuss in Section~\ref{sec:discussion_conclusion} how the obtained stability results
	apply to the practical setting of learning prediction functions. In particular, we explain why the
	regularization used in CKNs provides a natural way to control stability,
	while a similar control is harder to achieve with generic CNNs.
\end{itemize}


\section{Construction of the Multilayer Convolutional Kernel} 
\label{sec:kernel_construction}
We now present the  multilayer convolutional kernel, which operates on signals
with $d$ spatial dimensions. The construction follows closely that of
convolutional kernel networks but is generalized
to input signals defined on the continuous domain $\Real^d$.
Dealing with continuous signals is indeed useful to characterize the stability properties
of signal representations to small deformations, as done by~\citet{mallat2012group} in the context of the
scattering transform. The issue of discretization on a discrete
grid is addressed in Section~\ref{sub:kernel_approximation_and_discretization}. 

In what follows, we consider signals $x_0$ that live in
$L^2(\Omega,\Hcal_0)$, where typically $\Hcal_0 = \R^{p_0}$ (\eg, with
$p_0=3$ and $d=2$, the vector $x_0(u)$ in $\R^3$ may represent the RGB pixel value at location $u$ in~$\R^2$).
Then, we build a sequence
of reproducing kernel Hilbert spaces $\Hcal_1, \Hcal_2, \ldots,$ and transform $x_0$ into a
sequence of ``feature maps'', respectively denoted by~$x_1$ in
$L^2(\Omega,\Hcal_1)$, $x_2$ in $L^2(\Omega,\Hcal_2)$, \emph{etc...}  As
depicted in Figure~\ref{fig:kernel}, a new map~$x_k$ is built from the previous
one~$x_{\kmone}$ by applying successively three operators that perform patch
extraction ($P_k$), kernel mapping $(M_k)$ to a new RKHS~$\Hcal_k$, and linear pooling $(A_k)$, respectively. 
When going up in the hierarchy, the points $x_k(u)$ carry information from
larger signal neighborhoods centered at $u$ in $\Omega$ with more invariance,
as we formally show in Section~\ref{sec:stability}. 

\begin{figure}[tb]
	\begin{center}
	\includegraphics[width=0.96\textwidth]{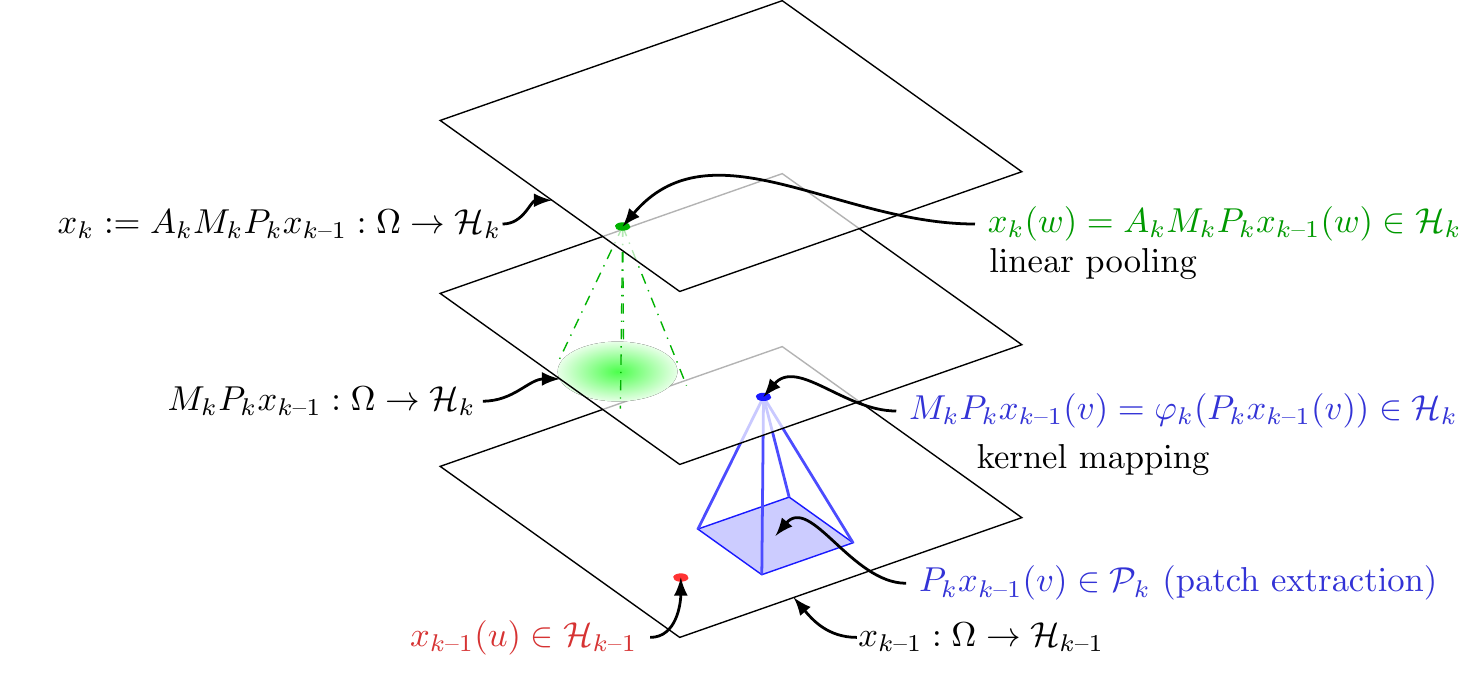}

	\end{center}
	\caption{Construction of the $k$-th signal representation from the $\kmone$-th one. Note that while the domain~$\Omm$ is depicted as a box in~$\R^2$ here, our construction is supported on $\Omm = \R^d$.}
	\label{fig:kernel}
\end{figure}

\paragraph{Patch extraction operator.}
Given the layer~$x_{\kmone}$, we consider a patch shape $S_k$, defined as a
compact centered subset of~$\Omega$, \eg,~a box, and we define the
Hilbert space $\Pcal_k \defin L^2(S_k,\Hcal_{\kmone})$ equipped with the norm
$\|z\|^2=\int_{S_k} \|z(u)\|^2 d
\nu_k(u)$, where $d \nu_k$ is the normalized uniform measure on $S_k$ for every
$z$ in~$\Pcal_k$.  Specifically, we define the (linear) patch extraction operator $P_k:
L^2(\Omega,\Hcal_{\kmone}) \to L^2(\Omega,\Pcal_k)$ such that for all $u$ in~$\Omega$,
\begin{equation*}
   P_k x_{\kmone}(u) = (v \mapsto x_{\kmone}(u + v))_{v \in S_k} \in \Pcal_k.
\end{equation*}
Note that by equipping~$\Pcal_k$ with a normalized measure, it is easy to show
that the operator~$P_k$ preserves the norm---that is, $\|P_k x_{\kmone}\| =
\|x_{\kmone}\|$ and hence~$P_k x_{\kmone}$ is in $L^2(\Omega, \Pcal_k)$.

\paragraph{Kernel mapping operator.}
Then, we map each patch of $x_{\kmone}$ to a RKHS~$\Hcal_k$ thanks to the
kernel mapping $\varphi_k: \Pcal_k \to \Hcal_k$ associated to a positive definite kernel $K_k$ that operates on patches.  It
allows us to define the pointwise operator $M_k$ such that for all $u$ in~$\Omega$,
\begin{equation*}
   M_k P_k x_{\kmone}(u) \defin \varphi_k( P_k x_{\kmone}(u)) \in \Hc_k.
\end{equation*}
In this paper, we consider homogeneous dot-product kernels~$K_k$ operating on~$\Pcal_k$,
defined in terms of a function $\kappa_k: [-1,1] \to \Real$ that satisfies the following constraints:
\begin{equation}
\tag{A1}\label{eq:kappa_assumption}
\kappa_k(u) = \sum_{j=0}^{+\infty} b_j u^j \quad \text{s.t.}  \quad \forall j, b_j \geq 0, \quad \kappa_k(1)=1, \quad \kappa_k'(1)=1,
\end{equation}
assuming convergence of the series $\sum_j b_j$ and $\sum_j j b_j$.
Then, we define the kernel~$K_k$ by
      \begin{equation}
         \label{eq:dp_kernel_appx}
         K_k(z, z') = \|z\| \|z'\| \kappa_k\!\left(\frac{\langle z, z'\rangle}{\|z\| \|z'\|} \right),
      \end{equation}
      if $z,z' \in \Pcal_k \setminus \{0\}$, and $K_k(z,z')=0$ if $z=0$ or $z' =0$.
      The kernel is positive definite since it admits a Maclaurin expansion with only non-negative
      coefficients~\citep{schoenberg,scholkopf2001learning}.
      The condition $\kappa_k(1)=1$ ensures that the RKHS mapping preserves the
      norm---that is, $\|\varphi_k(z)\|=K_k(z,z)^{1/2}=\|z\|$, and thus
      $\|M_k P_k x_{\kmone} (u)\| = \| P_k x_{\kmone} (u)\|$ for all~$u$
      in~$\Omega$; as a consequence, $M_k P_k x_{\kmone}$ is always in
      $L^2(\Omega, \Hc_k)$. 
      The technical condition $\kappa_k'(1) = 1$, where
      $\kappa_k'$ is the first derivative of $\kappa_k$, ensures that the
      kernel mapping~$\varphi_k$ is non-expansive, according to
      Lemma~\ref{lemma:dp_kernels} below.  
\begin{lemma}[Non-expansiveness of the kernel mappings]\label{lemma:dp_kernels}
   Consider a positive-definite kernel of the form~\eqref{eq:dp_kernel_appx} satisfying~\eqref{eq:kappa_assumption}
   with RKHS mapping $\varphi_k: \Pcal_k \to \Hcal_k$. Then, $\varphi_k$ is non-expansive---that is, for all $z,z'$ in $\Pcal_k$,
   \begin{displaymath}
      \|\varphi_k(z)-\varphi_k(z')\| \leq \|z-z'\|.
   \end{displaymath}
   Moreover, we remark that the kernel $K_k$ is lower-bounded by the linear one
   \begin{equation}
      K_k(z,z') \geq \langle z, z' \rangle. \label{eq:lower_linear}
   \end{equation}
\end{lemma}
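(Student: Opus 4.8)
The plan is to prove the lower bound \eqref{eq:lower_linear} first and then \emph{deduce} the non-expansiveness from it, since the two statements are tightly linked through the norm-preservation property $\|\varphi_k(z)\|^2 = K_k(z,z) = \|z\|^2$ (which uses $\kappa_k(1)=1$). Indeed, expanding the squared distance and using the reproducing property $\langle \varphi_k(z), \varphi_k(z')\rangle = K_k(z,z')$ gives
\begin{equation*}
\|\varphi_k(z) - \varphi_k(z')\|^2 = \|z\|^2 - 2 K_k(z,z') + \|z'\|^2,
\end{equation*}
to be compared with $\|z - z'\|^2 = \|z\|^2 - 2\langle z, z'\rangle + \|z'\|^2$. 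Hence non-expansiveness is \emph{equivalent} to $K_k(z,z') \geq \langle z, z'\rangle$, so it suffices to establish the latter.

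For the lower bound, I would set $\rho \defin \langle z, z'\rangle / (\|z\|\|z'\|) \in [-1,1]$ (Cauchy--Schwarz) for nonzero $z, z'$, so that $K_k(z,z') = \|z\|\|z'\|\,\kappa_k(\rho)$ and $\langle z,z'\rangle = \|z\|\|z'\|\,\rho$; the claim then reduces to the scalar inequality $\kappa_k(\rho) \geq \rho$ for all $\rho \in [-1,1]$. The degenerate cases $z=0$ or $z'=0$ are immediate, since both sides vanish and the distance identity above still holds with equality.

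The heart of the argument is this scalar inequality, and the main subtlety is that it must hold on the \emph{whole} interval $[-1,1]$: a plain convexity argument (the tangent of $\kappa_k$ at $\rho=1$ is exactly $\rho$, using $\kappa_k(1)=\kappa_k'(1)=1$) only handles $\rho \in [0,1]$, because the power series need not be convex for $\rho<0$. To cover negative $\rho$ uniformly, I would exploit \emph{both} normalization constraints to rewrite the difference. Using $\sum_j b_j = 1$ and $\sum_j j\,b_j = 1$ to express $b_0 = \sum_{j\geq 2}(j-1)b_j$ and $b_1 - 1 = -\sum_{j\geq 2} j\,b_j$ in terms of the higher coefficients, one obtains the clean decomposition
\begin{equation*}
\kappa_k(\rho) - \rho = \sum_{j\geq 2} b_j\, g_j(\rho), \qquad g_j(\rho) \defin \rho^j - j\rho + (j-1).
\end{equation*}
Since $b_j \geq 0$, it then suffices to show $g_j(\rho) \geq 0$ on $[-1,1]$ for each $j \geq 2$, which follows from a one-line monotonicity check: $g_j(1) = 0$, while $g_j'(\rho) = j(\rho^{j-1} - 1) \leq 0$ on $[-1,1]$ because $\rho^{j-1} \leq |\rho|^{j-1} \leq 1$; thus $g_j$ is non-increasing and stays above its value $g_j(1)=0$. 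Summing against the nonnegative weights $b_j$ yields $\kappa_k(\rho) \geq \rho$, hence \eqref{eq:lower_linear}, and the non-expansiveness then follows from the equivalence established above. The only step requiring genuine care is the rewriting into $\sum_{j\geq 2} b_j\, g_j$, which is precisely where the derivative condition $\kappa_k'(1)=1$ (and not merely $\kappa_k(1)=1$) becomes essential.
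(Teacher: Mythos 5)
Your proof is correct, and it reaches the conclusion by a route that differs from the paper's in two ways. The paper proves a single scalar inequality, $\kappa_k(u) \geq \kappa_k(1) - \kappa_k'(1)(1-u)$ on $[-1,1]$, via the fundamental theorem of calculus together with the term-by-term bound $\kappa_k'(t) \leq \kappa_k'(1)$, which is valid on all of $[-1,1]$ since $t^{j-1} \leq 1$ there; note that this sidesteps convexity entirely, so the difficulty you flag for negative $\rho$ affects only the naive tangent-line argument, not the paper's actual one. The paper then plugs this inequality into the expansion of $\|\varphi_k(z)-\varphi_k(z')\|^2$ and rewrites the bound as $(1-\kappa_k'(1))\bigl|\|z\|-\|z'\|\bigr|^2 + \kappa_k'(1)\|z-z'\|^2$, which yields the more general statement that $\varphi_k$ is $\max(1,\sqrt{\kappa_k'(1)})$-Lipschitz — a generalization the paper reuses later for kernels with $\kappa_k'(1)>1$ (in the experiments and the generic-activation section) — with \eqref{eq:lower_linear} falling out as a by-product when $\kappa_k'(1)=1$. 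You instead observe that, thanks to norm preservation from $\kappa_k(1)=1$, non-expansiveness is \emph{equivalent} to the kernel lower bound, so only \eqref{eq:lower_linear} needs proving; and you establish the scalar inequality $\kappa_k(\rho)\geq\rho$ by a different device, rewriting $\kappa_k(\rho)-\rho = \sum_{j\geq 2} b_j\,(\rho^j - j\rho + j-1)$ using both normalization constraints and checking each term is nonnegative on $[-1,1]$ by monotonicity (the rearrangement is legitimate since $\sum_j j b_j < \infty$ guarantees absolute convergence). Your structure is tidier for the lemma exactly as stated, and the coefficient decomposition is a clean, self-contained alternative to the FTC argument — though both ultimately rest on the same elementary fact $\rho^{j-1}\leq 1$ on $[-1,1]$. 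What your route gives up is the extension to $\kappa_k'(1)\neq 1$: the equivalence trick collapses there, since the $\rho_k$-Lipschitz property no longer reduces to a clean kernel lower bound, and one needs the paper's interpolation between $\bigl|\|z\|-\|z'\|\bigr|^2$ and $\|z-z'\|^2$ to get the constant $\max(1,\sqrt{\kappa_k'(1)})$.
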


    From the proof of the lemma, given in
      Appendix~\ref{sec:choices_of_kernels}, one may notice that the assumption
      $\kappa_k'(1) = 1$ is not critical and 
may be safely replaced by $\kappa_k'(1) \leq 1$. Then, the 
non-expansiveness property would be preserved. Yet, we have chosen a
stronger constraint since it yields a few simplifications in the stability analysis,
      where we use the relation~(\ref{eq:lower_linear}) that requires $\kappa_k'(1)=1$.
More generally, the kernel mapping is Lipschitz continuous with constant $\rho_k = \max(1, \sqrt{\kappa_k'(1)})$.
Our stability results hold in a setting with $\rho_k > 1$, but with constants $\prod_{k} \rho_k$ that may grow
exponentially with the number of layers.

Examples of
functions~$\kappa_k$ that satisfy the properties~\eqref{eq:kappa_assumption} are now
given below:
\vspace*{-0.4cm}
\begin{center}
   \renewcommand{\arraystretch}{1.5}
\begin{tabular}{|l|l|}
   \hline
   exponential & $\kappa_{\text{exp}}(\langle z, z' \rangle) = e^{\langle z, z' \rangle  - 1}$ \\
   \hline
   inverse polynomial & $\kappa_{\text{inv-poly}}(\langle z, z' \rangle) =  \frac{1}{2-\langle z, z' \rangle}$ \\
   \hline
   polynomial, degree $p$ & $\kappa_{\text{poly}}(\langle z, z' \rangle) = \frac{1}{(c+1)^p}(c+ \langle z, z' \rangle)^p~~~~\text{with}~~~ c= p-1$ \\ 
   \hline
   arc-cosine, degree 1 & $\kappa_{\text{acos}}(\langle z, z' \rangle) = \frac{1}{\pi}\left(\sin(\theta) + (\pi-\theta)\cos(\theta)\right) ~\text{with}~ \theta = \arccos(\langle z, z' \rangle)$ \\
   \hline
   Vovk's, degree 3 & $\kappa_{\text{vovk}}(\langle z, z' \rangle) = \frac{1}{3} \left(\frac{1-\langle z, z' \rangle^3}{1-\langle z, z' \rangle}\right) = \frac{1}{3}\left(1+ \langle z, z' \rangle + \langle z, z' \rangle^2\right)$ \\
   \hline
\end{tabular}
\end{center}

We note that the inverse
polynomial kernel was used by~\citet{zhang2016l1,zhang2016convexified} to build
convex models of fully connected networks and two-layer convolutional neural networks,
while the arc-cosine kernel appears in early deep kernel machines~\citep{cho2009kernel}.
Note that the homogeneous exponential kernel reduces to the Gaussian
kernel for unit-norm vectors. Indeed, for all $z, z'$ such that $\|z\|=\|z'\|=1$, we have 
\begin{displaymath}
   \kappa_{\text{exp}}(\langle z, z' \rangle) = e^{\langle z,z' \rangle -1} = e^{-\frac{1}{2}\|z-z'\|^2},
\end{displaymath}
and thus, we may refer to kernel~(\ref{eq:dp_kernel_appx}) with the function $\kappa_{\text{exp}}$ as the homogeneous Gaussian kernel. The kernel 
$\kappa(\langle z, z' \rangle) = e^{\alpha (\langle z,z' \rangle -1)} = e^{-\frac{\alpha}{2}\|z-z'\|^2}$ with $\alpha \neq 1$ may also be used here, but 
we choose $\alpha =1$ for simplicity since $\kappa'(1)=\alpha$ (see discussion above).

\paragraph{Pooling operator.}
The last step to build the layer $x_k$ consists of pooling neighboring values to achieve local shift-invariance. We apply a linear convolution operator $A_k$ with a Gaussian filter of scale $\sigma_k$, $h_{\sigma_k}(u) := \sigma_k^{-d}h(u/\sigma_k)$, where $h(u) = (2 \pi)^{-d/2} \exp(-|u|^2/2)$. Then, for all $u$ in~$\Omega$,
\begin{equation}
   x_k(u) = A_k M_k P_k x_{\kmone}(u) = \int_{\Real^d} h_{\sigma_k}(u - v) M_k P_k x_{\kmone}(v) dv \in \Hc_k, \label{eq:xk}
\end{equation}
where the integral is a Bochner integral~\citep[see,][]{diestel,muandet2017kernel}.
By applying Schur's test to the integral operator~$A_k$ (see Appendix~\ref{sec:basic_tools}),
we obtain that the operator norm $\|A_k\|$ is less than $1$.
Thus, $x_k$ is in $L^2(\Omega, \Hc_k)$, with $\|x_k\| \leq \|M_k P_k x_{\kmone}\|$.
Note that a similar pooling operator is used in the scattering transform~\citep{mallat2012group}.

\paragraph{Multilayer construction and prediction layer.}
Finally, we obtain a multilayer representation by composing multiple times the
previous operators. In order to increase invariance with each layer and to increase the size of the receptive fields
(that is, the neighborhood of the original signal considered in a given patch), the size
of the patch~$S_k$ and pooling scale~$\sigma_k$ typically grow exponentially
with~$k$, with $\sigma_k$ and the patch size $\sup_{c\in S_k} |c|$ of the same order. With~$n$
layers, the maps~$x_n$ may then be written
\begin{equation}
   x_n \defin A_n M_n P_n A_{\nmone} M_{\nmone} P_{\nmone} \, \cdots \, A_1 M_1  P_1 x_0 ~\in~ L^2(\Omega,\Hc_n). \label{eq:final_repr}
\end{equation}
It remains to define a kernel from this representation, that will play the same role as the ``fully
connected'' layer of classical convolutional neural networks.
For that purpose, we simply consider the following linear kernel defined for all $x_0, x_0'$ in~$L^2(\Omega,\Hc_0)$
by using the corresponding
feature maps $x_n, x_n'$ in~$L^2(\Omega,\Hc_n)$ given by our multilayer construction~\eqref{eq:final_repr}:
\begin{equation}
\label{eq:linear_prediction_kernel}
   \Kcal_{n}(x_0,x_0') = \langle x_n, x_n' \rangle = \int_{u \in \Omega} \langle x_n(u), x_n'(u) \rangle du.
\end{equation}
Then, the RKHS $\Hc_{\mathcal K_n}$ of ${\mathcal K}_n$ contains all functions of the form $f(x_0) = \langle w, x_n \rangle$ with $w$ in $L^2(\Omega,\Hc_n)$ (see Appendix~\ref{sec:basic_tools}).

We note that one may also consider nonlinear kernels, such as a Gaussian kernel: 
\begin{equation}
\label{eq:gauss_prediction_kernel}
   \Kcal_{n}(x_0,x_0') = e^{-\frac{\alpha}{2}\|x_n-x_n'\|^2}.
\end{equation}
Such kernels are then associated to a RKHS denoted by~$\Hc_{n+1}$,
along with a kernel mapping $\varphi_{n+1}: L^2(\Omega,\Hc_n) \to \Hc_{n+1}$ which we call \emph{prediction layer}, so that
the final representation is given by~$\varphi_{n+1}(x_n)$ in $\Hc_{n+1}$.
We note that~$\varphi_{n+1}$ is non-expansive for the Gaussian kernel when~$\alpha \leq 1$ (see Section~\ref{sub:nonexp_proofs}),
and is simply an isometric linear mapping for the linear kernel.
Then, we have the relation
$
   {\mathcal K}_n(x_0, x_0') := \langle \varphi_{n+1}(x_n), \varphi_{n+1}(x_n') \rangle,
$
and in particular, the RKHS $\Hc_{\mathcal K_n}$ of ${\mathcal K}_n$ contains all functions of the form $f(x_0) = \langle w, \varphi_{n+1}(x_n) \rangle$ with $w$ in~$\Hc_{n+1}$, see Appendix~\ref{sec:basic_tools}.

\subsection{Signal Preservation and Discretization} 
\label{sub:kernel_approximation_and_discretization} 
In this section, we show that the multilayer kernel representation 
preserves all information about the signal at each layer, and besides, each feature
map~$x_k$ can be sampled on a discrete set with no loss of information. This
suggests a natural approach for discretization which will be discussed after the 
following lemma, whose proof is given in Appendix~\ref{sec:proofs}.
\begin{lemma}[Signal recovery from sampling]
\label{lemma:signal_recovery}
Assume that~$\Hcal_k$ contains all linear functions~$z \mapsto \langle g, z \rangle$
with~$g$ in~$\Pcal_k$ (this is true for all kernels~$K_k$ described in the previous section,
   according to Corollary~\ref{corollary:linear} in Section~\ref{sub:rkhs_activations} later);
   then, the signal~$x_{\kmone}$ can be recovered
from a sampling of~$x_k$ at discrete locations in a set~$\Omm$ as soon as $\Omm + S_k = \Omega$
   (\ie, the union of patches centered at these points covers~$\Omega$). It
   follows that~$x_k$ can be reconstructed from such a sampling.
\end{lemma}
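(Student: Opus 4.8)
The plan is to exploit the assumption that every linear functional $z \mapsto \langle g, z\rangle_{\Pcal_k}$ belongs to $\Hcal_k$ in order to \emph{linearly invert} the kernel mapping $M_k$ pointwise, after which the reconstruction of $x_{k-1}$ reduces to a patch-covering argument together with the inversion of the Gaussian pooling $A_k$. Concretely, for $g \in \Pcal_k$ let $f_g \in \Hcal_k$ denote the representer of $z \mapsto \langle g, z\rangle$. By the reproducing property, for any patch $z = P_k x_{k-1}(v)$,
\[
\langle M_k P_k x_{k-1}(v), f_g\rangle_{\Hcal_k} = \langle \varphi_k(z), f_g\rangle_{\Hcal_k} = f_g(z) = \langle g, P_k x_{k-1}(v)\rangle_{\Pcal_k}.
\]
Since the functionals $\{\langle g, \cdot\rangle : g \in \Pcal_k\}$ separate points of $\Pcal_k$, this shows that $M_k$ is injective on patches and that $P_k x_{k-1}(v)$ is recovered, linearly, from $M_k P_k x_{k-1}(v)$.

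First I would record the two remaining inversions. The pooling $A_k$ is convolution with the Gaussian $h_{\sigma_k}$, whose Fourier transform never vanishes; hence $A_k$ is injective and $M_k P_k x_{k-1}$ is determined by $x_k = A_k M_k P_k x_{k-1}$. Combined with the previous paragraph, the patch map $v \mapsto P_k x_{k-1}(v)$ is determined by $x_k$. Next, the patch at location $v$ is by definition $P_k x_{k-1}(v) = (w \mapsto x_{k-1}(v+w))_{w \in S_k}$, so knowing it pins down $x_{k-1}$ on the translate $v + S_k$ (in the $L^2$ sense). Ranging $v$ over a set $\Omm$ with $\Omm + S_k = \Omega$, the corresponding patches cover all of $\Omega$, so $x_{k-1}$ is recovered everywhere; applying $A_k M_k P_k$ to the reconstructed $x_{k-1}$ then reproduces $x_k$ on all of $\Omega$, which gives the final ``it follows'' claim.

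The step I expect to be the main obstacle is reconciling the \emph{pointwise} patch readout above with the fact that a single sample $x_k(u)$ is a Gaussian \emph{average} of $M_k P_k x_{k-1}$ over a neighborhood of $u$: extracting the patch at one point requires inverting $A_k$, which a priori needs $x_k$ on a whole neighborhood rather than on the discrete set $\Omm$. The resolution is that $x_k$ is smoothed at scale $\sigma_k$, while the condition $\Omm + S_k = \Omega$ corresponds to a grid spacing of the order of the patch size; since $\sigma_k$ is chosen of the same order, this spacing matches the effective resolution of the smoothed signal, so sampling on $\Omm$ loses no information. Because the Gaussian is not compactly supported in frequency, some care is needed here, and I would make the deconvolution-from-samples argument precise (or invoke the relevant sampling result), treating the injectivity of $A_k$ and the covering condition $\Omm + S_k = \Omega$ as the two ingredients that jointly yield lossless sampling.
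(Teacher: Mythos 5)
Your overall strategy---use the linear functionals in $\Hcal_k$ to undo the nonlinear map $M_k$, use the covering condition $\Omm + S_k = \Omega$ to patch local information together, and invert the Gaussian pooling by deconvolution---contains the right ingredients, but the order in which you apply them creates a gap that you acknowledge without resolving, and your proposed resolution cannot work as stated. You first invert $A_k$ (``$M_k P_k x_{\kmone}$ is determined by $x_k$'') and only then read out patches pointwise. Inverting $A_k$ by Fourier division requires knowing $x_k$ on the whole continuous domain, whereas the lemma only provides the samples $\{x_k(u)\}_{u \in \Omm}$. Your suggested fix---that the grid spacing matches the smoothing scale $\sigma_k$, so ``sampling on $\Omm$ loses no information''---is not a valid argument: a Gaussian low-pass filter does not bandlimit the signal (its Fourier transform has full support), so no Shannon-type sampling theorem applies, and discrete samples of a Gaussian-smoothed signal do genuinely lose information in general. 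There is no ``relevant sampling result'' to invoke at this step.

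The paper's proof circumvents this by performing the inversions in the opposite order, exploiting the fact that the linear measurement functionals commute with pooling. For $w \in \Pcal_k$ let $f_w \in \Hcal_k$ be the function $z \mapsto \langle w, z\rangle$. Since $\langle f_w, \varphi_k(z)\rangle = \langle w, z\rangle$ is \emph{linear} in $z$, the map $y \mapsto \sum_{w \in B}\langle f_w, y\rangle\, w$ (with $B$ an orthonormal basis of $\Pcal_k$) not only inverts $M_k$ pointwise but also commutes with the linear operator $A_k$; hence from a \emph{single} sample $x_k(u) = A_k M_k P_k x_{\kmone}(u)$ one can compute
\[
\sum_{w \in B} \langle f_w, x_k(u)\rangle\, w \;=\; A_k P_k x_{\kmone}(u) \;=\; P_k A_k x_{\kmone}(u),
\]
the last equality using that patch extraction commutes with pooling. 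The right-hand side is the entire patch of the \emph{pooled} signal $A_k x_{\kmone}$ around $u$, so knowing $x_k$ on $\Omm$ determines $A_k x_{\kmone}$ on $\Omm + S_k = \Omega$, that is, on the whole continuous domain. Only at this point does the paper deconvolve: since $A_k x_{\kmone}$ is now known everywhere and the Fourier transform of $h_{\sigma_k}$ is strictly positive, the division $\mathcal{F}(\langle f, A_k x_{\kmone}(\cdot)\rangle)/\mathcal{F}(h_{\sigma_k})$ is legitimate and recovers $\langle f, x_{\kmone}(\cdot)\rangle$ for every $f$ in an orthonormal basis of $\Hcal_{\kmone}$, hence $x_{\kmone}$ itself. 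In short: the covering condition must be spent on reconstructing the pooled signal on all of $\Omega$ \emph{before} deconvolution, not after; spending it afterwards, as in your proposal, leaves you with a deconvolution-from-samples problem that has no exact solution.
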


The previous construction defines a kernel representation for general
signals in~$L^2(\Omega, \Hc_0)$, which is an abstract object defined for
theoretical purposes. In practice, signals are discrete, and it is
thus important to discuss the problem of discretization.
For clarity, we limit the presentation to 1-dimensional signals ($d = 1$),
but the arguments can easily be extended to higher
dimensions~$d$ when using box-shaped patches. Notation from the previous
section is preserved, but we add a bar on top of all discrete
analogues of their continuous counterparts. \eg, $\bar{x}_k$ is a
discrete feature map in $\ell^2(\Z,{\Hcbar_k})$ for some RKHS $\Hcbar_k$.

\paragraph{Input signals $x_0$ and~$\bar{x}_0$.}
Discrete signals acquired by a physical device may be seen as local
integrators of signals defined on a continuous domain (\eg, sensors from
digital cameras integrate the pointwise distribution of photons in a spatial and temporal window). Then, consider a signal~$x_0$ in
$L^2(\Omega,\Hcal_0)$ and $s_0$ a sampling interval.
By defining 
$\bar{x}_0$ in $\ell_2(\Z,\Hcal_0)$ such that $\bar{x}_0[n] = x_0(n s_0)$ for all $n$ in~$\Z$, it is thus
natural to assume that $x_0 \!=\! A_0 x$, where $A_0$ is a pooling operator (local
integrator) applied to an original continuous signal $x$. The role of $A_0$ is to prevent aliasing and
reduce high frequencies; typically, the scale $\sigma_0$ of~$A_0$ should be of the same magnitude
as $s_0$, which we choose to be $s_0=1$ without loss of generality.
This natural assumption is kept later for the stability analysis.

\paragraph{Multilayer construction.}
We now want to build discrete feature maps~$\bar{x}_k$ in~$\ell^2(\Z, \Hcbar_k)$ at each layer~$k$ involving subsampling with a factor~$s_k$ with respect to~$\bar{x}_{\kmone}$.
We now define the discrete analogues of the operators~$P_k$ (patch extraction), $M_k$ (kernel mapping), and~$A_k$ (pooling) as follows: for $n \in \Z$,
\begin{align*}
   \bar{P}_k \bar{x}_{\kmone}[n] &\defin \frac{1}{\sqrt{e_k}}(\bar{x}_{\kmone}[n], \bar{x}_{\kmone}[n + 1], \ldots, \bar{x}_{\kmone}[n + e_k - 1]) \in \bar{\Pcal}_k \defin \Hcbar_{\kmone}^{e_k} \\
   \bar{M}_k \bar{P}_k \bar{x}_{\kmone}[n] &\defin \bar{\varphi}_k( \bar{P}_k \bar{x}_{\kmone}[n] ) \in \Hcbar_k \\
   \bar{x}_k[n] \!=\! \bar{A}_k \bar{M}_k \bar{P}_k \bar{x}_{\kmone}[n] &\defin \frac{1}{\sqrt{s_k}}\sum_{m \in \Z} \bar{h}_k[n s_k - m] \bar{M}_k \bar{P}_k \bar{x}_{\kmone}[m] \!=\! (\bar{h}_k \ast \bar{M}_k \bar{P}_k \bar{x}_{\kmone})[n s_k] \in \Hcbar_k,
\end{align*}
where (i) $\bar{P}_k$ extracts a patch of size $e_k$ starting at position $n$
in $\bar{x}_{\kmone}[n]$,
which lives in the Hilbert space $\bar{\Pcal}_k$ defined as the
direct sum of $e_k$ times $\bar{\Hcal}_{\kmone}$; (ii)~$\bar{M}_k$ is a
kernel mapping identical to the continuous case, which preserves the norm, like $M_k$;
(iii)~$\bar{A}_k$ performs a convolution with a Gaussian filter and a subsampling operation with factor $s_k$. 
The next lemma shows that under mild assumptions, this construction preserves signal information.

\begin{lemma}[Signal recovery with subsampling]
\label{lemma:discretization}
   Assume that~$\Hcbar_k$ contains the linear functions~$z \mapsto \langle w, z
   \rangle$ for all~$w$ in $\bar{\Pcal}_{k}$ and that $e_k \geq s_k$.
   Then, $\bar{x}_{\kmone}$ can be recovered from~$\bar{x}_k$.
\end{lemma}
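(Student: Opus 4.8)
The plan is to recover $\bar{x}_{\kmone}$ by inverting, in reverse order, the three operators $\bar{A}_k$, $\bar{M}_k$, $\bar{P}_k$ that produce $\bar{x}_k$, isolating the single genuinely delicate step. Two of the three inversions are essentially immediate. The kernel-mapping step $\bar{M}_k$ is injective and explicitly invertible: since $\Hcbar_k$ contains every linear function $\ell_w : z \mapsto \langle w, z \rangle$ with $w \in \bar{\Pcal}_k$, the reproducing property gives $\langle \ell_w, \bar{\varphi}_k(p) \rangle_{\Hcbar_k} = \langle w, p \rangle_{\bar{\Pcal}_k}$ for any patch $p$, so letting $w$ range over an orthonormal basis of $\bar{\Pcal}_k$ recovers $p$ from $\bar{\varphi}_k(p)$. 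Hence from the values $\bar{M}_k \bar{P}_k \bar{x}_{\kmone}[m] = \bar{\varphi}_k(\bar{P}_k \bar{x}_{\kmone}[m])$ I can read off each patch $\bar{P}_k \bar{x}_{\kmone}[m] = \tfrac{1}{\sqrt{e_k}}(\bar{x}_{\kmone}[m], \ldots, \bar{x}_{\kmone}[m + e_k - 1])$.

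For the patch-extraction step, I would only need these patches on the subsampling grid $m \in s_k \Z$: the indices each reveals form the block $\{m, \ldots, m + e_k - 1\}$, and consecutive blocks (spaced by $s_k$) cover $\Z$ precisely when $e_k \geq s_k$, which is exactly the stated hypothesis. This yields every coordinate $\bar{x}_{\kmone}[n]$, and once $\bar{x}_{\kmone}$ is known one can recompute $\bar{x}_k$ itself.

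The crux is the pooling step: recovering $z := \bar{M}_k \bar{P}_k \bar{x}_{\kmone}$, at least on the grid $s_k \Z$, from $\bar{x}_k[n] = (\bar{h}_k \ast z)[n s_k]$. I would first record that the discrete Gaussian $\bar{h}_k$ has a strictly positive discrete-time Fourier transform, so convolution by $\bar{h}_k$ is injective and admits a Fourier-multiplier inverse on $\ell^2(\Z, \Hcbar_k)$; the obstacle is the subsampling, since $z \mapsto (\bar{h}_k \ast z)[\,\cdot\, s_k]$ is not injective on unstructured sequences. The plan is to overcome the subsampling using the redundancy built into the range of $\bar{M}_k \bar{P}_k$: consecutive patches share $e_k - 1$ of their $e_k$ coordinates, so $z$ is highly structured, and I expect this overlap (again under $e_k \geq s_k$) to pin down $z$ from the subsampled measurements, after which the Gaussian is removed coordinatewise.

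I expect this pooling-with-subsampling inversion to be the main obstacle, exactly as in the continuous statement of Lemma~\ref{lemma:signal_recovery}, and I would organize the argument to mirror that proof: establish injectivity of the composite map $\bar{x}_{\kmone} \mapsto \bar{x}_k$ rather than build a generic left-inverse of the lossy middle operator, then reconstruct $\bar{x}_{\kmone}$ explicitly from the recovered patches. The single quantitative input throughout is the inequality $e_k \geq s_k$, which simultaneously guarantees patch coverage and supplies the redundancy needed to undo the subsampling.
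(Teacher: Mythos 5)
Your proposal contains a genuine gap at exactly the step you flag as the crux. You correctly isolate the difficulty (inverting the pooling-plus-subsampling map) and correctly intuit that the patch overlap under $e_k \geq s_k$ is the relevant redundancy, but you never supply the argument: you write that you ``expect'' the overlap to pin down $z := \bar{M}_k \bar{P}_k \bar{x}_{\kmone}$ from the subsampled measurements. This is where your ordering of the inversions gets you stuck. By trying to undo $\bar{A}_k$ \emph{before} undoing $\bar{M}_k$, you must invert a subsampled convolution on the nonlinearly constrained set of sequences lying in the range of $\bar{M}_k \bar{P}_k$ (values of a kernel map evaluated on overlapping patches), and there is no routine way to exploit that nonlinear constraint; nothing in your sketch converts the overlap into usable equations.

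The paper's proof avoids this by reversing the order: it undoes the kernel map \emph{through} the pooling, using linearity. One pairs $\bar{x}_k[n]$ with the linear functions $f_{j,w}(z) := e_k^{1/2}\langle w, z_j\rangle \in \Hcbar_k$ (coordinate projections of the patch against $w \in \Hcbar_{\kmone}$); since the pairing is linear it passes inside the pooling sum, and the shift structure of patches, $(\bar{P}_k \bar{x}_{\kmone}[m])_j = e_k^{-1/2}\, \bar{x}_{\kmone}[m+j]$, turns the coordinate index $j$ into a translation of the output:
\begin{equation*}
\langle f_{j,w}, s_k^{-1/2}\, \bar{x}_k[n] \rangle = \left(\bar{h}_k \ast \langle w, \bar{x}_{\kmone} \rangle\right)[n s_k + j].
\end{equation*}
Since $\{n s_k + j : n \in \Z,\ 0 \leq j \leq e_k - 1\} = \Z$ precisely when $e_k \geq s_k$, this yields the \emph{fully sampled} scalar signal $\bar{h}_k \ast \langle w, \bar{x}_{\kmone}\rangle$ on all of $\Z$; the subsampling is undone before any deconvolution takes place. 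Only then does one deconvolve — a well-posed operation because the discrete Fourier transform of the Gaussian $\bar{h}_k$ is strictly positive and bounded away from zero — and letting $w$ range over an orthonormal basis of $\Hcbar_{\kmone}$ recovers $\bar{x}_{\kmone}$. So the mechanism your proposal is missing is precisely this linearization step, which converts the patch redundancy you invoke into a simple index-coverage fact and reduces the ``crux'' to a standard fully-sampled deconvolution.
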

The proof is given in Appendix~\ref{sec:proofs}. The result relies on recovering patches using linear ``measurement'' functions
and deconvolution of the pooling operation.
While such a deconvolution operation can be unstable, it may be possible to obtain more stable recovery mechanisms
by also considering non-linear measurements, a question which we leave open.

\paragraph{Links between the parameters of the discrete and continuous models.}
Due to subsampling, the patch size in the continuous and discrete
models are related by a multiplicative factor. Specifically, a patch of size~$e_k$
with discretization corresponds to a patch~$S_k$ of diameter $e_k
s_{k-1}s_{k-2}\ldots s_1$ in the continuous case. The same holds true for the scale parameter~$\sigma_k$ of the Gaussian pooling. 

\subsection{Practical Implementation via Convolutional Kernel Networks}\label{subsec:ckn} 
Besides discretization, convolutional kernel networks add two modifications to implement in practice the image
representation we have described. First, it uses feature maps
with finite spatial support, which introduces border effects that we do not study~\citep[like][]{mallat2012group},
but which are negligible when dealing with large realistic images.
Second, CKNs use finite-dimensional approximations of the kernel feature map.
Typically, each RKHS's mapping is approximated by performing a projection onto a subspace of finite dimension,
which is a classical approach to make kernel methods work at large scale~\citep{fine2001efficient,smola2000sparse,williams2001}.
If we consider the kernel mapping~$\varphi_k: \Pcal_k \to \Hcal_k$ at layer~$k$, the orthogonal projection onto the finite-dimensional subspace~$\mathcal{F}_k = \vect(\varphi_k(z_1), \ldots, \varphi_k(z_{p_k})) \subseteq \Hc_k$, where the $z_i$'s are $p_k$ anchor points in $\Pcal_k$, is given by the linear operator~$\Pi_k: \Hcal_k \to \mathcal{F}_k$ defined for~$f$ in $\Hc_k$ by
\begin{equation}
\label{eq:ckn_proj}
   \Pi_k f \defin \sum_{1 \leq i,j \leq p_k} (K_{ZZ}^{-1})_{ij} \langle \varphi_k(z_i), f \rangle \varphi_k(z_j),
\end{equation}
where~$K_{ZZ}^{-1}$ is the inverse (or pseudo-inverse) of the~$p_k \times p_k$ kernel matrix~$[K_k(z_i, z_j)]_{ij}$.
As an orthogonal projection operator, $\Pi_k$ is non-expansive, \ie,~$\|\Pi_k\| \leq 1$.
We can then define the new approximate version~$\tilde{M}_k$ of the kernel mapping operator~$M_k$ by
\begin{equation}
\label{eq:ckn_kernel_operator}
   \tilde{M}_k P_k x_{\kmone}(u) \defin \Pi_k \varphi_k( P_k x_{\kmone}(u)) \in \Fcal_k.
\end{equation}
Note that all points in the feature map~$\tilde{M}_k P_k x_{\kmone}$ lie in the $p_k$-dimensional space~$\mathcal{F}_k \subseteq \Hcal_k$, which allows us to represent each point $\tilde{M}_k P_k x_{\kmone}(u)$ by the finite dimensional vector
\begin{equation}
\label{eq:ckn_proj_repr}
   \psi_k(P_k x_{\kmone}(u)) \defin K_{ZZ}^{-1/2} K_Z(P_k x_{\kmone}(u)) \in \R^{p_k},
\end{equation}
with~$K_Z(z) := (K_k(z_1, z), \ldots, K_k(z_{p_k}, z))^{\top}$; this
finite-dimensional representation preserves the Hilbertian inner product and
norm\footnote{We have $\langle \psi_k(z), \psi_k(z') \rangle_2 = \langle \Pi_k \varphi_k(z), \Pi_k \varphi_k(z') \rangle_{\Hc_k}$. See~\citet{mairal_end--end_2016} for details.} in $\Fcal_k$
so that $\|\psi_k(P_k x_{\kmone}(u))\|^2_2 = \|\tilde{M}_k P_k x_{\kmone}(u)\|^2_{\Hc_k}$.

Such a finite-dimensional mapping is compatible with the
multilayer construction, which builds~$\Hcal_k$ by manipulating points
from~$\Hcal_{\kmone}$. Here, the approximation 
provides points in $\Fcal_k \subseteq \Hcal_k$, which remain in~$\Fcal_k$ after pooling
since $\Fcal_k$ is a linear subspace. Eventually, the sequence of RKHSs
$\{\Hcal_k\}_{k \geq 0}$ is not affected by the finite-dimensional
approximation.
Besides, the stability results we will
present next are preserved thanks to the non-expansiveness of the projection.
In contrast, other kernel approximations such as random Fourier
features~\citep{rahimi2007} do not provide points in
the RKHS \citep[see][]{bach2017}, and their effect on the
functional space derived from the multilayer construction is unclear.

It is then possible to derive theoretical results for the CKN model, which
appears as a natural implementation of the kernel constructed previously; yet,
we will also show in Section~\ref{sec:link_with_cnns} that the results apply
more broadly to CNNs that are contained in the functional space associated to
the kernel. However, the stability of these CNNs depends on their RKHS norm,
which is hard to control. In contrast, for CKNs,
stability is typically controlled by the norm of the final prediction layer.

\section{Stability to Deformations and Group Invariance} 
\label{sec:stability}
In this section, we study the translation invariance and the stability under
the action of diffeomorphisms of the kernel representation described in
Section~\ref{sec:kernel_construction} for continuous signals.
In addition to translation invariance, it is desirable to have a representation
that is stable to small local deformations.
We describe such deformations using a $C^1$-diffeomorphism $\tau:\Omega \to \Omega$,
and let~$L_\tau$ denote the linear operator defined by $L_\tau x(u) = x(u - \tau(u))$.
We use a similar characterization of stability to the one introduced by~\citet{mallat2012group}:
the representation $\Phi(\cdot)$ is \emph{stable} under the action of diffeomorphisms
if there exist two non-negative constants $C_1$ and $C_2$ such that
\begin{equation}
\label{eq:stability_generic}
\|\Phi(L_\tau x) - \Phi(x) \| \leq (C_1 \|\nabla \tau\|_\infty + C_2 \|\tau\|_\infty)\|x\|,
\end{equation}
where $\nabla \tau$ is the Jacobian of $\tau$, $\|\nabla \tau\|_\infty \!:=\! \sup_{u \in \Omega} \|\nabla \tau(u)\|$, and $\|\tau\|_\infty \!:=\! \sup_{u \in \Omega} |\tau(u)|$.
The quantity~$\|\nabla \tau(u)\|$ measures the size of the deformation
at a location~$u$, and like~\citet{mallat2012group},
we assume the regularity condition $\|\nabla \tau\|_\infty \leq 1/2$,
which implies that the deformation is invertible~\citep{allassonniere2007towards,trouve2005local}
and helps us avoid degenerate situations.
In order to have a near-translation-invariant
representation, we want~$C_2$ to be small (a translation is a diffeomorphism
with $\nabla \tau=0$), and indeed we will show that~$C_2$ is proportional
to~$1/\sigma_n$, where~$\sigma_n$ is the scale of the last pooling layer, which
typically increases exponentially with the number of layers~$n$.
When~$\nabla \tau$ is non-zero, the diffeomorphism deviates from a translation,
producing local deformations controlled by~$\nabla \tau$.

\paragraph{Additional assumptions.}
In order to study the stability of the representation~\eqref{eq:final_repr}, we
assume that the input signal $x_0$ may be written as $x_0=A_0 x$, where $A_0$
is an initial pooling operator at scale $\sigma_0$, which allows us to control
the high frequencies of the signal in the first layer. As discussed previously
in Section \ref{sub:kernel_approximation_and_discretization}, this assumption is
natural and compatible with any physical acquisition device.
Note that~$\sigma_0$ can be taken arbitrarily small, so that this assumption does not limit the
generality of our results. Then, we are interested in understanding the stability of the representation
\begin{displaymath}
   \Phi_n(x) \defin A_n M_n P_n A_{\nmone}
   M_{\nmone} P_{\nmone} \, \cdots \, A_1 M_1  P_1 A_0 x.
\end{displaymath}
We do not consider a prediction layer~$\varphi_{n+1}$ here for simplicity, but note that if we add one on top of~$\Phi_n$, based on a linear of Gaussian kernel, then the stability
of the full representation $\varphi_{n+1} \circ \Phi_n$ immediately follows from that of~$\Phi_n$ thanks to the non-expansiveness of~$\varphi_{n+1}$ (see Section~\ref{sec:kernel_construction}).
Then, we make an assumption that relates the scale of the pooling operator at
layer $k-1$ with the diameter of the patch~$S_k$: we assume indeed that there exists
$\kappa > 0$ such that for all $k \geq 1$,
\begin{equation}
   \sup_{c \in S_k} |c| \leq \kappa \sigma_{k-1}. \tag{A2}\label{eq:patch_assumption}
\end{equation}
The scales~$\sigma_k$ are typically exponentially increasing with the layers~$k$, and characterize the
``resolution'' of each feature map. This assumption corresponds to considering patch sizes that
are adapted to these intermediate resolutions.
Moreover, the stability bounds we obtain hereafter increase with~$\kappa$,
which leads us to believe that small patch sizes
lead to more stable representations, something which matches well the trend of using small, 3x3
convolution filters at each scale in modern deep architectures \citep[\eg,~][]{simonyan2014very}.

Finally, before presenting our stability results, we recall a few properties of the operators involved in the representation $\Phi_n$, 
which are heavily used in the analysis.
\begin{center}
\fbox{
   \parbox{0.95\textwidth}{
\begin{enumerate}
   \item \textbf{Patch extraction operator}: $P_k$ is linear and preserves the norm;
   \item \textbf{Kernel mapping operator}: $M_k$ preserves the norm and is non-expansive;
   \item \textbf{Pooling operator}: $A_k$ is linear and non-expansive $\|A_k\| \leq 1$;
\end{enumerate}
}
}
\end{center}

The rest of this section is organized into three parts. 
We present the main stability results in Section~\ref{sub:stability_results},
explain their compatibility with kernel approximations in
Section~\ref{sub:stability_with_approximation},
and provide numerical experiment for demonstrating the stability of the kernel representation in
Section~\ref{sub:experiments}. 
Finally, we introduce mechanisms to achieve invariance to any group of transformations in Section~\ref{sec:global_invariance_to_group_actions}.

\subsection{Stability Results and Translation Invariance} 
\label{sub:stability_results}

Here, we show that our kernel representation~$\Phi_n$ satisfies the stability property~\eqref{eq:stability_generic},
with a constant~$C_2$ inversely proportional to~$\sigma_n$, thereby achieving near-invariance to translations.
The results are then extended to more general transformation groups in Section~\ref{sec:global_invariance_to_group_actions}.

\paragraph{General bound for stability.}
The following result gives an upper bound on the quantity of interest, $\| \Phi_n(L_\tau x) - \Phi_n(x) \|$, in terms of the norm of various linear operators which control how~$\tau$ affects each layer. An important object of study is the commutator of linear operators~$A$ and~$B$, which  is denoted by $[A, B] = AB - BA$.
\begin{proposition}[Bound with operator norms]
\label{prop:general_bound}
  For any $x$ in $L^2(\Omega,\Hcal_0)$, we have
\begin{equation}
\| \Phi_n(L_\tau x) - \Phi_n(x) \| \leq \left( \sum_{k=1}^{n} \|[P_k A_{k-1}, L_\tau]\|
		 + \|[A_n, L_\tau]\| + \|L_\tau A_n - A_n\| \right) \|x\|.
\end{equation}
\end{proposition}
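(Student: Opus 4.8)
The plan is to transport the warping operator $L_\tau$ from the input side of $\Phi_n$ to its output, collecting one operator-norm commutator at each stage, and then to account for the residual mismatch between $L_\tau \Phi_n(x)$ and $\Phi_n(x)$. The crucial structural observation is that the pointwise kernel mapping $M_k$ commutes with $L_\tau$: since $(M_k z)(u) = \varphi_k(z(u))$ acts pointwise while $(L_\tau y)(u) = y(u-\tau(u))$ merely reparametrizes the domain, one checks directly that $M_k L_\tau = L_\tau M_k$. Thus the only operators that fail to commute with $L_\tau$ are the patch-extraction and pooling operators, and grouping the composition as $\Phi_n = A_n M_n B_n M_{n-1} B_{n-1} \cdots M_1 B_1$ with $B_k := P_k A_{k-1}$ explains why the bound is phrased through the commutators $[P_k A_{k-1}, L_\tau] = [B_k, L_\tau]$.

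First I would split by the triangle inequality, $\|\Phi_n(L_\tau x) - \Phi_n(x)\| \le \|\Phi_n(L_\tau x) - L_\tau \Phi_n(x)\| + \|L_\tau \Phi_n(x) - \Phi_n(x)\|$. The second term is immediate: writing $\Phi_n(x) = A_n u_n$ with $u_n := M_n B_n \cdots M_1 B_1 x$, we have $L_\tau \Phi_n(x) - \Phi_n(x) = (L_\tau A_n - A_n) u_n$, and since every operator building $u_n$ is norm-nonexpansive ($P_k$ and $M_k$ preserve the norm, $A_k$ satisfies $\|A_k\|\le 1$), we get $\|u_n\| \le \|x\|$ and hence the term $\|L_\tau A_n - A_n\|\,\|x\|$.

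For the first term I would run a telescoping hybrid argument. For $j=0,\dots,n$ set $u_j := M_j B_j \cdots M_1 B_1 x$ (with $u_0 := x$) and define the hybrid $w_j := A_n M_n B_n \cdots M_{j+1} B_{j+1}\, L_\tau\, u_j$, where the leading block reduces to $A_n$ when $j=n$; thus $w_0 = \Phi_n(L_\tau x)$, $w_n = A_n L_\tau u_n$, and I set $w_{n+1} := L_\tau A_n u_n = L_\tau \Phi_n(x)$. Then $\|\Phi_n(L_\tau x) - L_\tau \Phi_n(x)\| \le \sum_{j=0}^{n}\|w_j - w_{j+1}\|$. For $j \le n-1$, consecutive hybrids differ only by swapping $M_{j+1} B_{j+1} L_\tau$ for $L_\tau M_{j+1} B_{j+1} = M_{j+1} L_\tau B_{j+1}$ acting on $u_j$; the non-expansiveness of $M_{j+1}$ gives $\|w_j - w_{j+1}\| \le \|[B_{j+1}, L_\tau] u_j\| \le \|[B_{j+1}, L_\tau]\|\,\|x\|$, while the final transition $w_n \to w_{n+1}$ produces $[A_n, L_\tau] u_n$ and hence $\|[A_n, L_\tau]\|\,\|x\|$. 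Summing these terms together with the residual $\|L_\tau A_n - A_n\|\,\|x\|$ recovers exactly the claimed bound.

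The step I expect to require the most care is handling the nonlinearity of $M_k$: a naive linear expansion of $\Phi_n L_\tau - L_\tau \Phi_n$ into a sum of commutators is invalid since $M_k$ is not linear. The telescoping above circumvents this because at each stage the discrepancy is passed through a single nonlinear map $M_{j+1}$, whose non-expansiveness converts the bound on $M_{j+1}(a) - M_{j+1}(b)$ into the bound $\|a-b\|$ on its arguments, after which the remaining operators to the left, being non-expansive, never amplify the accumulated error. The commutation identity $M_k L_\tau = L_\tau M_k$ and the uniform control $\|u_j\| \le \|x\|$ are the two ingredients that make each term collapse cleanly to an operator-norm commutator times $\|x\|$.
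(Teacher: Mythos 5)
Your proposal is correct and follows essentially the same argument as the paper: both exploit the pointwise commutation $M_k L_\tau = L_\tau M_k$, the non-expansiveness of $A_k$, $P_k$, $M_k$, and a telescoping (hybrid) insertion that collects one commutator $\|[P_k A_{k-1}, L_\tau]\|$ per layer. The only cosmetic difference is bookkeeping: you split off $\|L_\tau A_n - A_n\|\,\|x\|$ at the start via the middle term $L_\tau \Phi_n(x)$, whereas the paper telescopes first, arrives at $\|A_n L_\tau - A_n\|\,\|x\|$, and then applies the decomposition $A_n L_\tau = [A_n, L_\tau] + L_\tau A_n$ at the end.
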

For translations~$L_\tau x(u) = L_c x(u) = x(u - c)$, it is easy to see that patch extraction and pooling operators commute with~$L_c$ (this is also known as \emph{covariance} or \emph{equivariance} to translations), so that we are left with the term~$\|L_c A_n - A_n\|$, which should control translation invariance. For general diffeomorphisms~$\tau$, we no longer have exact covariance, but we show below that commutators are stable to~$\tau$, in the sense that $\|[P_k A_{k-1}, L_\tau]\|$ is controlled by~$\|\nabla \tau\|_\infty$, while $\|L_\tau A_n - A_n\|$ is controlled by~$\|\tau\|_\infty$ and decays with the pooling size~$\sigma_n$.

\paragraph{Bound on $\|[P_k A_{k-1}, L_\tau]\|$.}
We note that $P_k z$ can be identified with $(L_c z)_{c \in S_k}$ isometrically for all $z$ in $L^2(\Omega,\Hcal_{\kmone})$, since $\|P_k z\|^2 = \int_{S_k} \|L_c z\|^2 d \nu_k(c)$ by Fubini's theorem. Then,
\begin{align*}
\|P_k A_{k-1} L_\tau z - L_\tau P_k A_{k-1} z\|^2 &= \int_{S_k} \|L_c A_{k-1} L_\tau z - L_\tau L_c A_{k-1} z\|^2 d \nu_k(c) \\
	&\leq \sup_{c \in S_k} \|L_c A_{k-1} L_\tau z - L_\tau L_c A_{k-1} z\|^2,
\end{align*}
so that $\|[P_k A_{k-1}, L_\tau]\| \leq \sup_{c\in S_k} \|[L_c A_{k-1}, L_\tau]\|$. The following result lets us bound the commutator $\|[L_c A_{k-1}, L_\tau]\|$ when $|c| \leq \kappa \sigma_{k-1}$, which is satisfied under assumption~\eqref{eq:patch_assumption}.

\begin{lemma}[Stability of shifted pooling]
\label{lemma:stability}
Consider $A_\sigma$ the pooling operator with kernel $h_\sigma(u) = \sigma^{-d} h(u/\sigma)$. If $\|\nabla \tau\|_\infty \leq 1/2$, there exists a constant $C_1$ such that for any $\sigma$ and $|c| \leq \kappa \sigma$, we have
\begin{equation*}
\|[L_c A_\sigma, L_\tau]\| \leq C_1 \| \nabla \tau\|_\infty,
\end{equation*}
where $C_1$ depends only on $h$ and $\kappa$.
\end{lemma}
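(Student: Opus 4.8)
The plan is to represent the commutator as an integral operator and bound its operator norm through Schur's test (see Appendix~\ref{sec:basic_tools}). First I would observe that $L_c A_\sigma$ is just convolution with the shifted Gaussian $\phi := h_\sigma(\cdot - c)$, so that $L_c A_\sigma z(u) = \int \phi(u-v)\, z(v)\, dv$ for $z$ in $L^2(\Omega,\Hcal)$. Writing $\xi := (I-\tau)^{-1}$, which is well defined since $\|\nabla\tau\|_\infty \le 1/2$, and applying the change of variables $v \mapsto v - \tau(v)$ inside $L_c A_\sigma L_\tau z$, the commutator $[L_c A_\sigma, L_\tau]$ becomes an integral operator with kernel
\[ k(u,w) = \phi(u - \xi(w))\,|\det \nabla\xi(w)| - \phi(u - \tau(u) - w). \]
I would then split $k = k_A + k_B$, where $k_A := \phi(u-\xi(w))\,(|\det\nabla\xi(w)| - 1)$ isolates the Jacobian correction and $k_B := \phi(u-\xi(w)) - \phi(u - \tau(u) - w)$ isolates the shift in the argument of $\phi$, and bound each piece separately in both Schur directions.

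The term $k_A$ is the easy one: since $\phi$ is an $L^1$-normalized Gaussian ($\|\phi\|_1 = 1$) and $\|\nabla\tau\|_\infty \le 1/2$ yields $\bigl|\,|\det\nabla\xi| - 1\,\bigr| \le C\|\nabla\tau\|_\infty$ with $C$ dimensional, both $\sup_w \int |k_A(u,w)|\,du$ and $\sup_u \int |k_A(u,w)|\,dw$ are $O(\|\nabla\tau\|_\infty)$, the latter after changing variables back through $v = \xi(w)$, whose Jacobian is bounded.

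The core estimate is for $k_B$. Setting $a_1 := u - \xi(w)$ and $a_2 := u - \tau(u) - w$, the defining relation $\xi(w) - w = \tau(\xi(w))$ produces the crucial cancellation $a_1 - a_2 = \tau(u) - \tau(\xi(w))$, so the mean value theorem gives $|a_1 - a_2| \le \|\nabla\tau\|_\infty\,|u - \xi(w)| = \|\nabla\tau\|_\infty\,|a_1|$. Applying the mean value theorem to $\phi$ along the segment $y_t := a_2 + t(a_1 - a_2)$, $t \in [0,1]$, bounds $|k_B|$ by $\|\nabla\tau\|_\infty\,|a_1| \int_0^1 |\nabla h_\sigma(y_t - c)|\,dt$. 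Here the hypothesis $|c| \le \kappa\sigma$ enters: since $|a_1 - a_2| \le \tfrac12 |a_1|$, one gets $|a_1| \le 2|y_t - c| + 2\kappa\sigma$, so the Gaussian decay can absorb the factor $|a_1|$. Integrating over $u$ for fixed $w$ and $t$, the map $u \mapsto y_t$ has derivative $I - (1-t)\nabla\tau(u)$, invertible with determinant bounded below by $2^{-d}$, so the substitution $z = y_t - c$ reduces the integral to $\int \bigl(2|z| + 2\kappa\sigma\bigr)\,|\nabla h_\sigma(z)|\,dz$. The scale-free Gaussian moment identities $\int |z|\,|\nabla h_\sigma(z)|\,dz = C_h$ and $\sigma\int |\nabla h_\sigma(z)|\,dz = C_h'$ make this equal to $2C_h + 2\kappa C_h'$, independent of $\sigma$; a symmetric computation controls $\sup_u \int |k_B(u,w)|\,dw$. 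Schur's test then combines the four bounds into $\|[L_c A_\sigma, L_\tau]\| \le C_1 \|\nabla\tau\|_\infty$ with $C_1$ depending only on $h$ and $\kappa$.

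The main obstacle is the interplay between the shift $c$ and the pooling scale $\sigma$: one must verify that the constant is genuinely scale-free, which hinges on pairing $|c| \le \kappa\sigma$ with the $\sigma$-scaling of $\nabla h_\sigma$ so that $\kappa\sigma \cdot \int |\nabla h_\sigma|$ remains bounded. Secondary care is needed to ensure that every change of variables (for $\xi$, and for $u \mapsto y_t$ uniformly in $t$) has Jacobian bounded above and below independently of $\tau$, $c$, and $\sigma$, which is exactly what the regularity condition $\|\nabla\tau\|_\infty \le 1/2$ guarantees.
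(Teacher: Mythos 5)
Your proof is correct, and it takes a genuinely different route from the paper's, even though both rest on Schur's test (Lemma~\ref{lemma:schur}) and on pairing $|c| \le \kappa\sigma$ with the $\sigma$-scaling of $\nabla h_\sigma$. The paper first conjugates, writing $[L_c A_\sigma, L_\tau] = L_c T L_\tau$ with $T = A_\sigma - L_c^{-1} L_\tau L_c A_\sigma L_\tau^{-1}$, pays the factor $\|L_\tau\| \le 2^{d/2}$, and splits the kernel of $T$ around the \emph{linearized} pivot $h_\sigma((I - \nabla\tau(u))(z - u))\det(I - \nabla\tau(u))$: the piece $T_1$ is exactly the kernel of Lemma~E.1 of \citet{mallat2012group} and is independent of $c$, while all dependence on the shift is confined to the remainder $T_2$. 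You instead apply Schur's test directly to the commutator, expressing its kernel through the shifted filter $\phi = h_\sigma(\cdot - c)$ and $\xi = (I - \tau)^{-1}$, and split around the pivot $\phi(u - \xi(w))$: a Jacobian-correction term $k_A$, controlled by $\|\phi\|_1 = 1$ together with $\bigl|\,|\det\nabla\xi| - 1\,\bigr| \le C_d\,\|\nabla\tau\|_\infty$ ($C_d$ dimensional), and an argument-shift term $k_B$, handled in a single mean-value step thanks to the cancellation $a_1 - a_2 = \tau(u) - \tau(\xi(w))$ and the absorption $|a_1| \le 2|y_t - c| + 2\kappa\sigma$ into the scale-free quantities $\int |z|\,|\nabla h_\sigma(z)|\,dz$ and $\sigma \int |\nabla h_\sigma(z)|\,dz$. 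Your organization avoids both the conjugation and the linearization pivot, gives the two error terms transparent meanings, and needs only first-moment conditions on $\nabla h$ rather than the pointwise polynomial decay bounds the paper invokes. What the paper's organization buys in exchange is that its pointwise estimates bound $|k_1|$ and $|k_2|$ by integrable functions of $(z-u)/\sigma$ alone, so both Schur directions follow from the same change of variables; your bound on $|k_B|$ is not a function of $u - w$ alone, so the direction $\sup_u \int |k_B(u,w)|\,dw$ is not literally symmetric. Concretely, as a function of $w$ the segment point $y_t$ has Jacobian $-\bigl[(1-t)I + t\nabla\xi(w)\bigr]$, whose determinant is bounded below not by a naive operator-norm estimate (which degenerates as $t \to 1$) but via the eigenvalues of $\nabla\tau$, giving $3^{-d}$ under $\|\nabla\tau\|_\infty \le 1/2$; alternatively, substitute $v = \xi(w)$ first, after which your $u$-side computation applies verbatim. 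With that verification spelled out, and the bounded Jacobian factors (of order $2^d$) carried into the constant, your argument is complete and yields a $C_1$ depending only on $h$, $\kappa$, and the dimension, exactly as the paper's does.
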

A similar result can be found in Lemma E.1 of \citet[][]{mallat2012group} for commutators of the form $[A_\sigma, L_\tau]$,
but we extend it to handle integral operators~$L_c A_\sigma$ with a shifted kernel.
The proof (given in Appendix~\ref{sub:stability_proof}) follows closely~\citet{mallat2012group} and relies on the fact that~$[L_c A_\sigma, L_\tau]$ is an integral operator
in order to bound its norm via Schur's test.
Note that~$\kappa$ can be made larger, at the cost of an increase of the constant~$C_1$ of the order $\kappa^{d+1}$.

\paragraph{Bound on $\|L_\tau A_n - A_n\|$.}
We bound the operator norm $\|L_\tau A_n - A_n\|$ in terms of~$\|\tau\|_\infty$ using the following result due to \citet[Lemma 2.11]{mallat2012group}, with~$\sigma = \sigma_n$:
\begin{lemma}[Translation invariance]
\label{lemma:transl_invariance}
If $\|\nabla \tau\|_\infty \leq 1/2$, we have
\begin{equation*}
\|L_\tau A_\sigma - A_\sigma\| \leq \frac{C_2}{\sigma} \|\tau\|_\infty,
\end{equation*}
with $C_2 = 2^d \cdot \|\nabla h\|_1$.
\end{lemma}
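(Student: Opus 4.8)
The plan is to recognize $L_\tau A_\sigma - A_\sigma$ as an integral operator and to bound its operator norm via Schur's test (see Appendix~\ref{sec:basic_tools}). Writing the pooling action explicitly, $A_\sigma x(u) = \int_{\R^d} h_\sigma(u-v)\, x(v)\, dv$, the composition with $L_\tau$ gives $L_\tau A_\sigma x(u) = \int_{\R^d} h_\sigma(u - \tau(u) - v)\, x(v)\, dv$, so that
\begin{equation*}
(L_\tau A_\sigma - A_\sigma) x(u) = \int_{\R^d} k(u,v)\, x(v)\, dv, \qquad k(u,v) := h_\sigma(u - \tau(u) - v) - h_\sigma(u - v).
\end{equation*}
Since $h_\sigma$ is scalar, we have pointwise $\|(L_\tau A_\sigma - A_\sigma)x(u)\| \le \int |k(u,v)|\,\|x(v)\|\,dv$, which reduces the problem to Schur's test applied to the scalar kernel $|k(u,v)|$: it suffices to bound the two quantities $\sup_u \int_{\R^d} |k(u,v)|\,dv$ and $\sup_v \int_{\R^d} |k(u,v)|\,du$.

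The key estimate is a first-order (mean value) control of the difference defining $k$. Setting $a = u-v$ and writing $h_\sigma(a - \tau(u)) - h_\sigma(a) = -\int_0^1 \tau(u)\cdot \nabla h_\sigma(a - t\tau(u))\,dt$, I obtain
\begin{equation*}
|k(u,v)| \le |\tau(u)| \int_0^1 |\nabla h_\sigma(u - v - t\tau(u))|\,dt \le \|\tau\|_\infty \int_0^1 |\nabla h_\sigma(u - v - t\tau(u))|\,dt.
\end{equation*}
A scaling computation gives $\|\nabla h_\sigma\|_1 = \sigma^{-1}\|\nabla h\|_1$: differentiating $h_\sigma(u) = \sigma^{-d} h(u/\sigma)$ produces an extra factor $\sigma^{-1}$, and the change of variables $w = u/\sigma$ cancels the $\sigma^{-d}$. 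For the row sum, fixing $u$ and substituting $a = u - v$ (Jacobian one) yields $\int_{\R^d}|k(u,v)|\,dv \le \|\tau\|_\infty\,\|\nabla h_\sigma\|_1 = \frac{\|\nabla h\|_1}{\sigma}\|\tau\|_\infty$, which is in particular at most $2^d \frac{\|\nabla h\|_1}{\sigma}\|\tau\|_\infty$.

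The main obstacle is the column sum, where the variable of integration is $u$ and both $u-v$ and $\tau(u)$ depend on $u$. Here I substitute $w = u - t\tau(u)$ for each fixed $t$, whose Jacobian is $I - t\nabla\tau(u)$. The assumption $\|\nabla\tau\|_\infty \le 1/2$ (hence $\|t\nabla\tau\|_\infty \le 1/2$ for $t \in [0,1]$) plays a double role: it guarantees that this map is a global diffeomorphism, so that the change of variables is legitimate, and it forces $|\det(I - t\nabla\tau(u))| \ge (1 - 1/2)^d = 2^{-d}$, which is exactly where the factor $2^d$ appears. This gives $\int_{\R^d} |\nabla h_\sigma(u - v - t\tau(u))|\,du \le 2^d \|\nabla h_\sigma\|_1$ uniformly in $t$ and $v$, whence $\sup_v \int_{\R^d}|k(u,v)|\,du \le 2^d \frac{\|\nabla h\|_1}{\sigma}\|\tau\|_\infty$. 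Combining the two bounds through Schur's test, whose output is the geometric mean of the row and column suprema, yields $\|L_\tau A_\sigma - A_\sigma\| \le 2^d \frac{\|\nabla h\|_1}{\sigma}\|\tau\|_\infty$, that is, the claim with $C_2 = 2^d\|\nabla h\|_1$. The delicate points to check carefully are the invertibility of $u \mapsto u - t\tau(u)$ and the lower bound on its Jacobian determinant; the remaining steps are routine scaling and the Schur estimate.
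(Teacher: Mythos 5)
Your proof is correct, and it follows essentially the same route as the source: the paper does not prove this lemma itself but cites Lemma~2.11 of \citet{mallat2012group}, whose argument is precisely yours --- write $L_\tau A_\sigma - A_\sigma$ as an integral operator with kernel $h_\sigma(u-\tau(u)-v) - h_\sigma(u-v)$, control it by the fundamental theorem of calculus and the scaling $\|\nabla h_\sigma\|_1 = \sigma^{-1}\|\nabla h\|_1$, and conclude by Schur's test, with the factor $2^d$ coming from the Jacobian bound $|\det(I - t\nabla\tau(u))| \geq (1-\|\nabla\tau\|_\infty)^d \geq 2^{-d}$ in the change of variables $u \mapsto u - t\tau(u)$. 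This is also exactly the toolbox the paper uses in its proof of Lemma~\ref{lemma:stability} (Appendix~\ref{sub:stability_proof}), so your argument is fully consistent with the paper's treatment.
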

Combining Proposition~\ref{prop:general_bound} with Lemmas~\ref{lemma:stability} and~\ref{lemma:transl_invariance}, we now obtain the following~result:
\begin{theorem}[Stability bound]
\label{thm:stability}
   Assume~\eqref{eq:patch_assumption}. If $\|\nabla \tau\|_\infty \leq 1/2$, we have
\begin{equation}
\label{eq:stability_bound}
\| \Phi_n(L_\tau x) - \Phi_n(x) \| \leq \left( C_1 \left(1 + n \right) \|\nabla \tau\|_\infty
		 + \frac{C_2}{\sigma_n} \|\tau\|_\infty \right) \|x\|.
\end{equation}
\end{theorem}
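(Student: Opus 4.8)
The plan is to feed the operator-norm decomposition of Proposition~\ref{prop:general_bound} into the two technical lemmas, bounding each of the three families of terms separately and then collecting constants. Proposition~\ref{prop:general_bound} controls $\|\Phi_n(L_\tau x) - \Phi_n(x)\|$ by $\|x\|$ times the sum $\sum_{k=1}^n \|[P_k A_{k-1}, L_\tau]\| + \|[A_n, L_\tau]\| + \|L_\tau A_n - A_n\|$, so it suffices to bound each piece. The deformation-sensitive commutators will be handled by Lemma~\ref{lemma:stability}, and the residual translation term by Lemma~\ref{lemma:transl_invariance}.

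For the commutators $\|[P_k A_{k-1}, L_\tau]\|$ in the sum, I would first invoke the reduction established just before Lemma~\ref{lemma:stability}, namely $\|[P_k A_{k-1}, L_\tau]\| \leq \sup_{c \in S_k} \|[L_c A_{k-1}, L_\tau]\|$, which follows from the isometric identification of $P_k$ with the family $(L_c)_{c \in S_k}$. Assumption~\eqref{eq:patch_assumption} guarantees that every $c \in S_k$ satisfies $|c| \leq \kappa \sigma_{k-1}$, so Lemma~\ref{lemma:stability} applies with $\sigma = \sigma_{k-1}$ and yields $\|[L_c A_{k-1}, L_\tau]\| \leq C_1 \|\nabla \tau\|_\infty$ uniformly in $c$. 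Hence each of the $n$ terms in the sum is at most $C_1 \|\nabla \tau\|_\infty$, contributing $n C_1 \|\nabla \tau\|_\infty$ in total. This includes the $k=1$ term involving $A_0$ at scale $\sigma_0$, which is precisely the instance of~\eqref{eq:patch_assumption} for $k=1$.

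For the isolated commutator $\|[A_n, L_\tau]\|$, the key observation is that it is the special case $c = 0$ of Lemma~\ref{lemma:stability}: writing $A_n = L_0 A_n$ with $|0| = 0 \leq \kappa \sigma_n$, the lemma gives $\|[A_n, L_\tau]\| \leq C_1 \|\nabla \tau\|_\infty$, and this single term supplies the ``$+1$'' in the factor $(1+n)$. Finally, Lemma~\ref{lemma:transl_invariance} with $\sigma = \sigma_n$ bounds the translation-sensitive term by $\|L_\tau A_n - A_n\| \leq (C_2/\sigma_n)\|\tau\|_\infty$. Summing the three contributions then produces $\big(C_1(1+n)\|\nabla \tau\|_\infty + (C_2/\sigma_n)\|\tau\|_\infty\big)\|x\|$, as claimed.

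The assembly itself is essentially bookkeeping once the two lemmas are in hand; the genuine analytic difficulty has been front-loaded into Lemma~\ref{lemma:stability}, whose Schur-test bound on the shifted-pooling commutator is the real obstacle. Within the present proof, the only point demanding care is the uniform-in-$c$ application of Lemma~\ref{lemma:stability} across all layers: one must verify that the hypothesis $|c| \leq \kappa \sigma_{k-1}$ holds at the relevant pooling scale for each $k$ (including the initial $A_0$), and recognize that $\|[A_n, L_\tau]\|$ is recovered as the degenerate $c=0$ instance rather than requiring a separate estimate.
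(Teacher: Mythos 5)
Your proof is correct and follows exactly the paper's intended argument: the paper proves Theorem~\ref{thm:stability} precisely by combining Proposition~\ref{prop:general_bound} with Lemma~\ref{lemma:stability} (applied uniformly in $c$ under assumption~\eqref{eq:patch_assumption}, with the $c=0$ case handling $\|[A_n, L_\tau]\|$ and supplying the extra $+1$) and Lemma~\ref{lemma:transl_invariance} at scale $\sigma_n$. Your bookkeeping of the $(1+n)$ factor and the $C_2/\sigma_n$ term matches the paper's construction exactly.
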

This result matches the desired notion of stability in Eq.~(\ref{eq:stability_generic}), with a
translation-invariance factor that decays with~$\sigma_n$.
We discuss implications of our bound, and compare it with related work on stability in Section~\ref{sub:bound_discussion}.
We also note that our bound yields a worst-case guarantee on stability, in the sense that it holds for any signal~$x$.
In particular, making additional assumptions on the signal (\eg, smoothness) may lead to improved stability.
The predictions for a specific model may also be more stable than applying~\eqref{eq:cs} to our stability bound,
for instance if the filters are smooth enough.

\begin{remark}[Stability for Lipschitz non-linear mappings]
While the previous results require non-expansive non-linear mappings~$\varphi_k$,
it is easy to extend the result to the following more general condition
\begin{equation*}
\|\varphi_k(z) - \varphi_k(z')\| \leq \rho_k \|z - z'\| \quad \text{ and } \quad \|\varphi_k(z)\| \leq \rho_k \|z\|.
\end{equation*}
Indeed, the proof of Proposition~\ref{prop:general_bound} easily extends to this setting,
giving an additional factor~$\prod_k \rho_k$ in the bound~\eqref{eq:stability_generic}.
The stability bound~\eqref{eq:stability_bound} then becomes
\begin{equation}
\label{eq:stability_bound_lipschitz}
\| \Phi_n(L_\tau x) - \Phi_n(x) \| \leq \left(\prod_{k=1}^n \rho_k \right)\left( C_1 \left(1 + n \right) \|\nabla \tau\|_\infty
       + \frac{C_2}{\sigma_n} \|\tau\|_\infty \right) \|x\|.
\end{equation}
This will be useful for obtaining stability of CNNs with generic activations such as ReLU (see Section~\ref{sub:generic_activations}),
and this also captures the case of kernels with $\kappa_k'(1) > 1$ in Lemma~\ref{lemma:dp_kernels}.
\end{remark}

\subsection{Discussion of the Stability Bound (Theorem~\ref{thm:stability})}
\label{sub:bound_discussion}

In this section, we discuss the implications of our stability bound~\eqref{eq:stability_bound},
and compare it to related work on the stability of the scattering transform~\citep{mallat2012group}
as well as the work of~\citep{wiatowski2018mathematical} on more general convolutional models.

\paragraph{Role of depth.}
Our bound displays a linear dependence on the number of layers~$n$ in the stability constant $C_1 (1 + n)$.
We note that a dependence on a notion of
depth (the number of layers~$n$ here) also appears in~\cite{mallat2012group}, with a factor equal to
the maximal length of ``scattering paths'', and with the same condition $\|\nabla \tau\|_{\infty} \leq 1/2$.
Nevertheless, the number of layers is tightly linked to the patch sizes, and we now show how a deeper architecture
can be beneficial for stability.
Given a desired level of translation-invariance~$\sigma_f$
and a given initial resolution~$\sigma_0$, the above bound together with the discretization
results of Section~\ref{sub:kernel_approximation_and_discretization} suggest that one can obtain
a stable representation that preserves signal information by taking small patches at each layer
and subsampling with a factor equal to the patch size (assuming a patch size greater than one)
until the desired level of invariance is reached:
in this case we have~$\sigma_f / \sigma_0 \approx \kappa^n$,
where~$\kappa$ is of the order of the patch size, so that $n = O(\log(\sigma_f / \sigma_0) / \log(\kappa))$,
and hence the stability constant~$C_1 (1 + n)$ grows with~$\kappa$ as~$\kappa^{d+1} / \log(\kappa)$,
explaining the benefit of small patches, and thus of deeper models.

\paragraph{Norm preservation.}
While the scattering representation preserves the norm of the input signals when the length of scattering paths
goes to infinity, in our setting the norm may decrease with depth due to pooling layers.
However, we show in Appendix~\ref{sub:norm_preservation} that a part of the signal norm is still preserved,
particularly for signals with high energy in the low frequencies, as is the case for natural images~\citep[\eg,][]{torralba2003statistics}.
This justifies that the bounded quantity in~\eqref{eq:stability_bound} is relevant and non-trivial.
Nevertheless, we recall that despite a possible loss in norm, our (infinite-dimensional) representation~$\Phi(x)$ preserves
signal information, as discussed in Section~\ref{sub:kernel_approximation_and_discretization}.

\paragraph{Dependence on signal bandwidth.}
We note that our stability result crucially relies on the assumption~$\sigma_0 > 0$,
which effectively limits its applicability to signals with frequencies bounded by~$\lambda_0 \approx 1/\sigma_0$.
While this assumption is realistic in practice for digital signals, our bound degrades as~$\sigma_0$ approaches 0,
since the number of layers~$n$ grows as $\log (1 / \sigma_0)$, as explained above.
This is in contrast to the stability bound of~\citet{mallat2012group}, which holds uniformly over any such~$\sigma_0$,
thanks to the use of more powerful tools from harmonic analysis such as the Cotlar-Stein lemma,
which allows to control stability simultaneously at all frequencies thanks to the structure of the wavelet transform,
something which seems more challenging in our case due to the non-linearities separating different scales.

We note that it may be difficult to obtain meaningful stability results for an unbounded frequency support
given a fixed architecture,
without making assumptions about the filters of a specific model.
In particular, if we consider a model with a high frequency Fourier or cosine filter at the first layer,
supported on a large enough patch relative
to the corresponding wavelength, this will cause instabilities,
particularly if the input signal has isolated high frequencies~\citep[see, \eg,][]{bruna2013invariant}.
By the arguments of Section~\ref{sec:link_with_cnns}, such an unstable model~$g$ is in the RKHS, and we then have
that the final representation~$\Phi(\cdot)$ is also unstable, since
\begin{align*}
\|\Phi(L_\tau x) - \Phi(x)\| &= \sup_{f \in \Hc_{\Kcal_n}, \|f\| \leq 1} \langle f, \Phi(L_\tau x) - \Phi(x) \rangle \\
  &\geq \frac{1}{\|g\|} \langle g, \Phi(L_\tau x) - \Phi(x) \rangle = \frac{1}{\|g\|} (g(L_\tau x) - g(x)).
\end{align*}

\paragraph{Comparison with~\citet{wiatowski2018mathematical}.}
The work of~\citet{wiatowski2018mathematical} also studies deformation stability for generic convolutional
network models, however their ``deformation sensitivity'' result only shows that the representation is
as sensitive to deformations as the original signal, something which is also applicable here thanks to the
non-expansiveness of our representation. Moreover, their bound does not show the dependence on deformation size
(the Jacobian norm), and displays a translation invariance part that degrades linearly with~$1/\sigma_0$.
In contrast, the translation invariance part of our bound is independent of~$\sigma_0$,
and the overall bound only depends logarithmically on~$1/\sigma_0$,
by exploiting architectural choices such as pooling layers and patch sizes.

\subsection{Stability with Kernel Approximations} 
\label{sub:stability_with_approximation}

As in the analysis of the scattering transform of~\cite{mallat2012group}, we
have characterized the stability and shift-invariance of the data
representation for continuous signals, in order to give some intuition about
the properties of the corresponding discrete representation, which we have
described in Section~\ref{sub:kernel_approximation_and_discretization}.

Another approximation performed in the CKN model of
\cite{mairal_end--end_2016} consists of adding projection steps 
on finite-dimensional subspaces of the RKHS's layers, as discusssed in Section~\ref{subsec:ckn}.
Interestingly, the stability properties we have obtained previously are
compatible with these steps. We may indeed replace the
operator~$M_k$ with the operator $\tilde{M}_k z(u) = \Pi_k \varphi_k(z(u))$
for any map $z$ in $L^2(\Omega,\Pcal_k)$, instead of $M_k z(u) =
\varphi_k(z(u))$; $\Pi_k: \Hcal_k \to \Fcal_k$ is here an orthogonal projection operator
onto a linear subspace, given in~\eqref{eq:ckn_proj}. Then, $\tilde{M}_k$ does not
necessarily preserve the norm anymore, but $ \| \tilde{M}_k z\| \leq
\|z\|$, with a loss of information equal to~$\| M_k z - \tilde{M}_k z\|$ corresponding to the quality of
approximation of the kernel~$K_k$ on the points~$z(u)$. On the other hand, the
non-expansiveness of $M_k$ is satisfied thanks to the
non-expansiveness of the projection.
In summary, it is possible to show that the conclusions of Theorem~\ref{thm:stability} remain valid 
when adding the CKN projection steps at each layer, but some signal information is lost in the process.

\subsection{Empirical Study of Stability}
\label{sub:experiments}

In this section, we provide numerical experiments to demonstrate the stability properties
of the kernel representations defined in Section~\ref{sec:kernel_construction} on discrete images.

\begin{figure}[tb]
		\centering
		\includegraphics[width=0.6\textwidth]{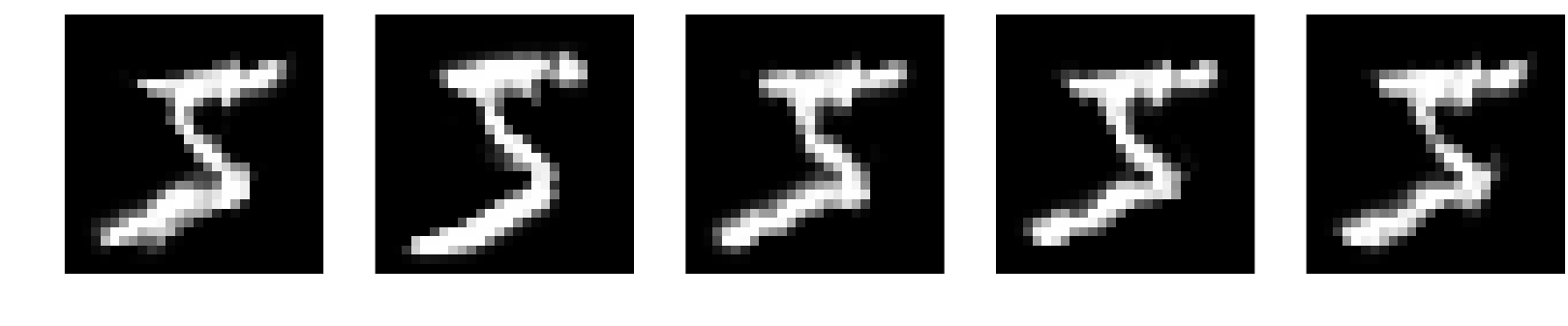}\\
		\includegraphics[width=0.6\textwidth]{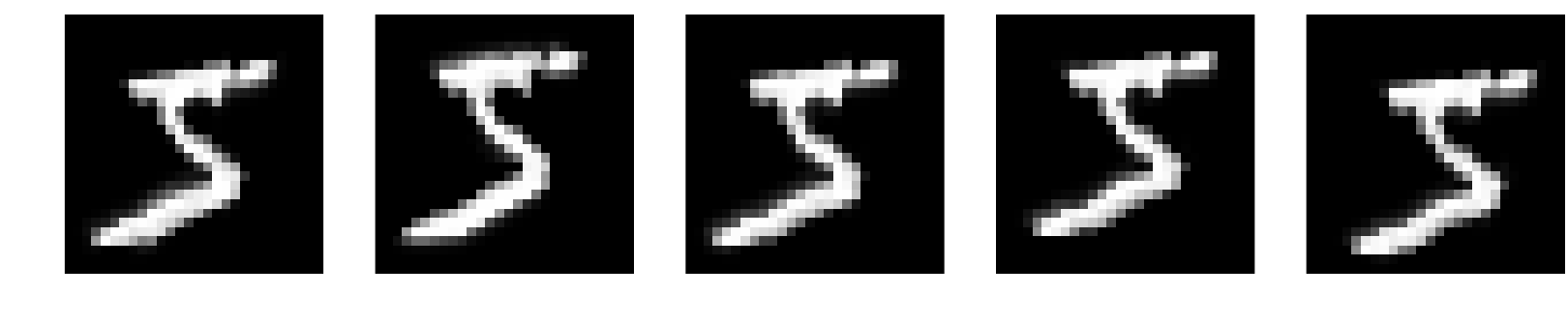}\\
		\includegraphics[width=0.6\textwidth]{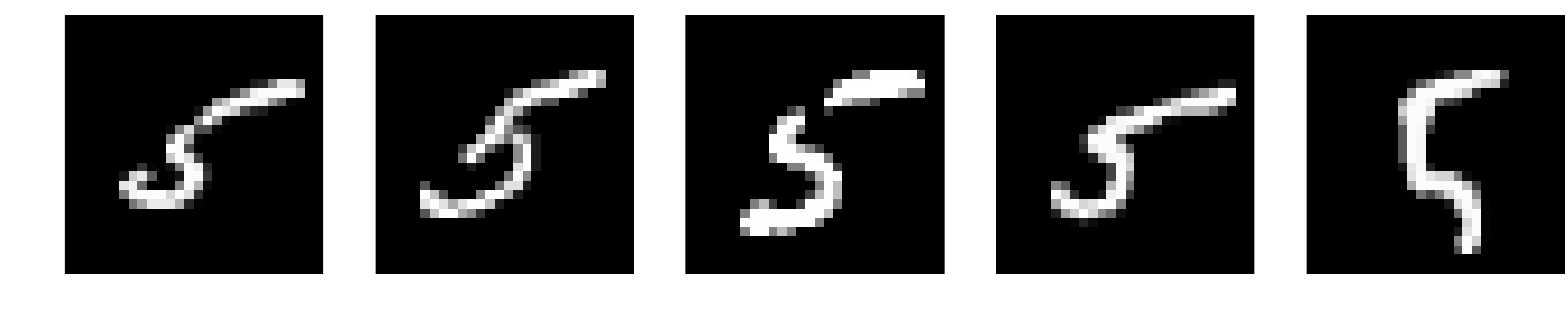}\\
		\includegraphics[width=0.6\textwidth]{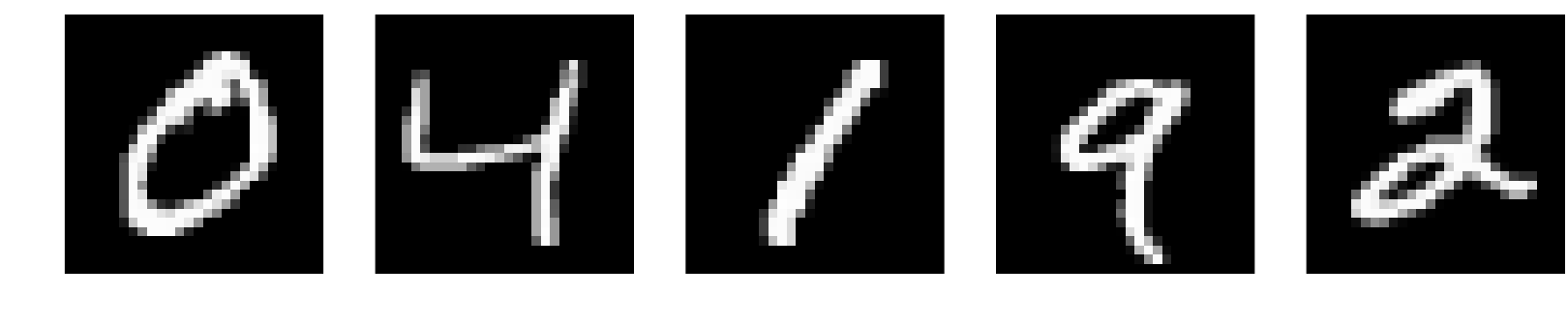}\\
	\caption{MNIST digits with transformations considered in our numerical study of stability.
	Each row gives examples of images from a set of digits that are compared to a reference image of a ``5''.
	From top to bottom:
	deformations with $\alpha = 3$;
	translations and deformations with $\alpha = 1$;
	digits from the training set with the same label ``5'' as the reference digit;
	digits from the training set with any label.
	}
	\label{fig:imnist}
\end{figure}

We consider images of handwritten digits from the Infinite MNIST dataset of~\citet{loosli2007training},
which consists of 28x28 grayscale MNIST digits augmented with small translations and deformations.
Translations are chosen at random from one of eight possible directions, while deformations
are generated by considering small smooth deformations $\tau$, and approximating $L_\tau x$
using a tangent vector field $\nabla x$ containing partial derivatives of the signal~$x$ along
the horizontal and vertical image directions.
We introduce a deformation parameter~$\alpha$ to control such deformations, which are then given by
\[
L_{\alpha\tau} x(u) = x(u - \alpha\tau(u)) \approx x(u) - \alpha \tau(u) \cdot \nabla x(u).
\]
Figure~\ref{fig:imnist} shows examples of different deformations, with various values of~$\alpha$,
with or without translations, generated from a reference image of the digit ``5''.
In addition, one may consider that a given reference image of a handwritten digit can
be deformed into different images of the same digit, and perhaps even into a different digit (\eg, a ``1'' may be deformed into a ``7'').
Intuitively, the latter transformation corresponds to a ``larger'' deformation than the former,
so that a prediction function that is stable to deformations should be preferable for a classification task.
The aim of our experiments is to quantify this stability, and to study how it is affected by architectural choices
such as patch sizes and pooling scales.

We consider a full kernel representation, discretized as described in Section~\ref{sub:kernel_approximation_and_discretization}.
We limit ourselves to 2 layers in order to make the computation of the full kernel tractable.
Patch extraction is performed with zero padding in order to preserve the size of the previous feature map.
We use a homogeneous dot-product kernel as in Eq.~\eqref{eq:dp_kernel_appx} with $\kappa(z) = e^{\rho (z - 1)}$, $\rho = 1 / (0.65)^2$.
Note that this choice yields $\kappa'(z) = \rho > 1$, giving an $\rho$-Lipschitz kernel mapping
instead of a non-expansive one as in Lemma~\ref{lemma:dp_kernels} which considers $\rho = 1$.
However, values of~$\rho$ larger than one typically lead to better empirical performance for classification~\citep{mairal_end--end_2016},
and the stability results of Section~\ref{sec:stability} are still valid with an additional factor $\rho^n$ (with $n=2$ here) in Eq.~\eqref{eq:stability_bound}.
For a subsampling factor~$s$, we apply a Gaussian filter with scale $\sigma = s / \sqrt{2}$ before downsampling.
Our C++ implementation for computing the full kernel given two images
is available at \url{https://github.com/albietz/ckn_kernel}.

\begin{figure}[tb]
	\centering
	\begin{subfigure}[c]{.32\textwidth}
		\includegraphics[width=\textwidth]{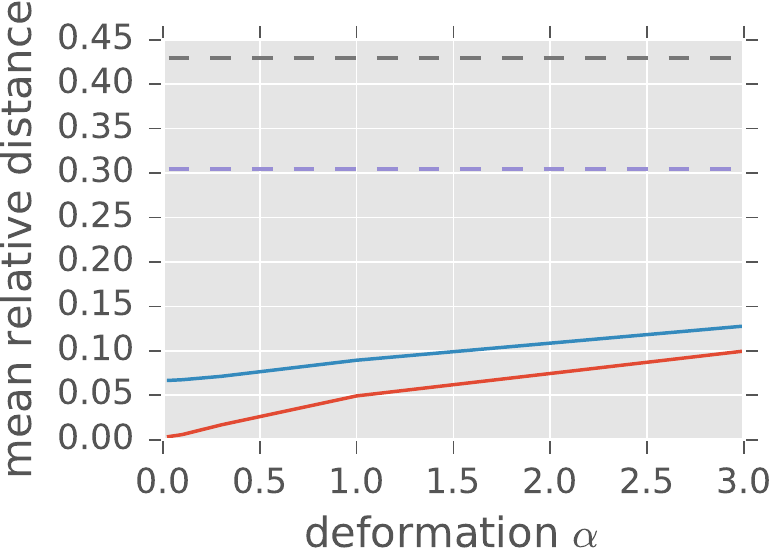}\vspace{-0.1cm}\caption{}\label{fig:imnist_deform}
	\end{subfigure}
	\begin{subfigure}[c]{.32\textwidth}
		\includegraphics[width=\textwidth]{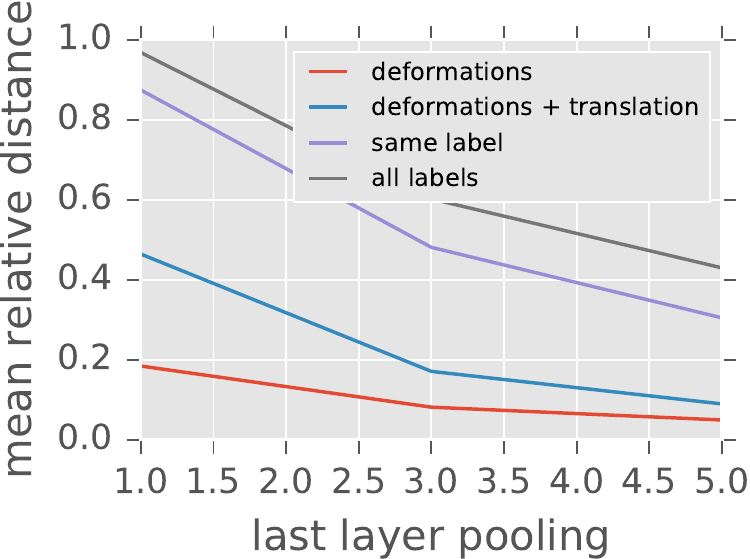}\vspace{-0.1cm}\caption{}\label{fig:imnist_inv}
	\end{subfigure}
	\begin{subfigure}[c]{.32\textwidth}
		\includegraphics[width=\textwidth]{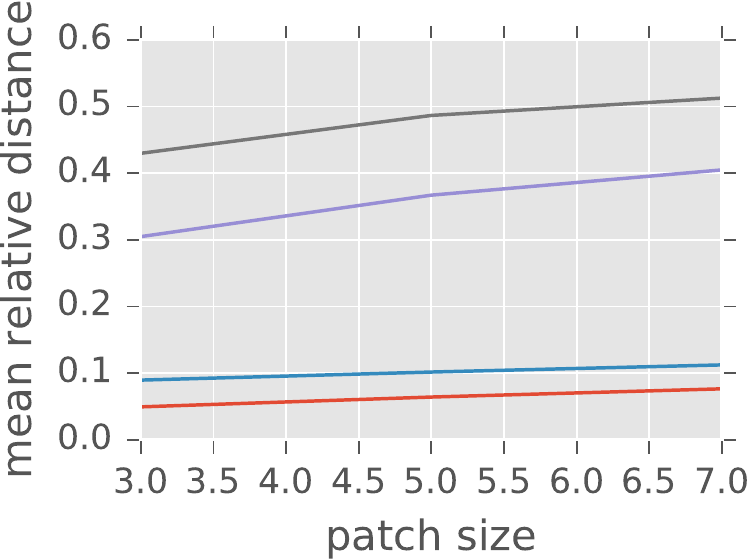}\vspace{-0.1cm}\caption{}\label{fig:imnist_patch}
	\end{subfigure}
	\caption{Average relative representation distance for various 2-layer models.
	Lines in the legend corresponds to rows of images in Figure~\ref{fig:imnist}.
	In (b-c), deformations are obtained with $\alpha = 1$.
	We show the impact on relative distance of:
	(a) the value of $\alpha$ in deformations, in~$\{0.01, 0.03, 0.1, 0.3, 1, 3\}$;
	(b) the subsampling factor of the final pooling layer, in~$\{1, 3, 5\}$;
	(c) the patch size, in~$\{3, 5, 7\}$.
	}
	\label{fig:imnist_rel_dist}
\end{figure}

In Figure~\ref{fig:imnist_rel_dist}, we show average relative distance in representation space between
a reference image and images from various sets of 20 images
(either generated transformations, or images appearing in the training set).
For a given architecture~$A$ and set~$S$ of images, the average relative distance to an image~$x$ is given by
\[
\frac{1}{|S|} \sum_{x' \in S} \frac{\|\Phi_A(x') - \Phi_A(x)\|}{\|\Phi_A(x)\|} = \frac{1}{|S|} \sum_{x' \in S} \frac{\sqrt{K_A(x, x) + K_A(x', x') - 2 K_A(x, x')}}{\sqrt{K_A(x, x)}},
\]
where~$\Phi_A$ denotes the kernel representation for architecture~$A$ and
$K_A(x, x')$ the corresponding kernel.
We normalize by~$\|\Phi_A(x)\|$ in order to reduce sensitivity to the choice of architecture.
We start with a $(3, 2)$-layer followed by a $(3, 5)$-layer,
where $(p, s)$ indicates a layer with patch size~$p$ and subsampling~$s$.
In Figure~\ref{fig:imnist_inv}, we vary the subsampling factor of the second layer,
and in Figure~\ref{fig:imnist_patch} we vary the patch size of both layers.

Each row of Figure~\ref{fig:imnist} shows digits and deformed versions.
Intuitively, it should be easier to deform an image of a handwritten 5 into a different image of a 5,
than into a different digit. Indeed, Figure~\ref{fig:imnist_rel_dist} shows
that the average relative distance for images
with different labels is always larger than for images with the same label,
which in turn is larger than for small deformations and translations of the reference image.

Adding translations on top of deformations increases distance in all cases, and Figure~\ref{fig:imnist_inv}
shows that this gap is smaller when using larger subsampling factors in the last layer.
This agrees with the stability bound~\eqref{eq:stability_bound}, which shows that a larger
pooling scale at the last layer increases translation invariance.
Figure~\ref{fig:imnist_deform} highlights the dependence of the distance on the deformation size~$\alpha$,
which is near-linear as in Eq.~\eqref{eq:stability_bound} (note that $\alpha$ controls the Jacobian of the deformation).
Finally, Figure~\ref{fig:imnist_patch} shows that larger patch sizes can make the representations less stable,
as discussed in Section~\ref{sec:stability}.


\subsection{Global Invariance to Group Actions} 
\label{sec:global_invariance_to_group_actions}

In Section~\ref{sub:stability_results}, we have seen how the kernel representation of Section~\ref{sec:kernel_construction}
creates invariance to translations by commuting with the action of translations at intermediate layers, and how the last pooling layer on the translation group governs the final level of invariance. It is often useful to encode invariances to different groups of transformations, such as rotations or reflections \citep[see, \eg,~][]{cohen2016group,mallat2012group,raj2017local,sifre2013rotation}. Here, we show how this can be achieved by defining adapted patch extraction and pooling operators that commute with the action of a transformation group~$G$ (this is known as group covariance or equivariance). We assume that~$G$ is locally compact such that we can define a left-invariant Haar measure~$\mu$---that is, a measure on~$G$ that satisfies $\mu(gS) = \mu(S)$ for any Borel set~$S \subseteq G$ and~$g$ in~$G$. We assume the initial signal~$x(u)$ is defined on~$G$, and we define subsequent feature maps on the same domain.
The action of an element~$g$ in~$G$ is denoted by~$L_g$, where $L_g x(u) = x(g^{-1} u)$.
In order to keep the presentation simple, we ignore some issues related to the general
construction in~$L^2(G)$ of our signals and operators,
which can be made more precise using tools from abstract harmonic analysis~\citep[\eg,][]{folland2016course}.

\paragraph{Extending a signal on~$G$.}
We note that the original signal is defined on a domain~$\Omega$
which may be different from the transformation group~$G$ that acts on~$\Omega$
(\eg, for 2D images the domain is~$\R^2$ but~$G$ may also include a rotation angle).
The action of~$g$ in~$G$ on the original signal defined on~$\Omega$, denoted~$\tilde x(\omega)$
yields a transformed signal $L_g \tilde x(\omega) = \tilde x(g^{-1} \cdot \omega)$, where~$\cdot$ denotes group action.
This requires an appropriate extension of the signal to~$G$ that preserves the meaning of signal transformations.
We make the following assumption:
every element~$\omega$ in $\Omega$ can be reached with a transformation $u_\omega$ in $G$
from a neutral element~$\epsilon$ in $\Omega$ (\eg, $\epsilon = 0$),
as $\omega = u_\omega \cdot \epsilon$.
Note that for 2D images ($d = 2$), this typically requires a group~$G$ that is ``larger'' than translations,
such as the roto-translation group, while it is not satisfied, for instance, for rotations only.
A similar assumption is made by~\citet{kondor2018generalization}.
Then, one can extend the original signal~$\tilde x$
by defining $x(u) := \tilde x(u \cdot \epsilon)$. Indeed, we then have
\[
L_g x(u_\omega) = x(g^{-1}u_\omega) = \tilde x((g^{-1} u_\omega) \cdot \epsilon) = \tilde x(g^{-1} \cdot \omega),
\]
so that the signal $(x(u_\omega))_{\omega \in \Omega}$ preserves the structure of~$\tilde x$.
We detail this below for the example of roto-translations on 2D images.
Then, we are interested in defining a layer---that is, a succession of patch
extraction, kernel mapping, and pooling operators---that commutes with $L_g$, in
order to achieve equivariance to~$G$.

\paragraph{Patch extraction.}
We define patch extraction as follows
\begin{equation*}
   P x(u) = (x(uv))_{v \in S}~~~~\text{for all}~~u \in G,
\end{equation*}
where $S \subset G$ is a patch shape centered at the identity element. $P$ commutes with~$L_g$ since
\begin{equation*}
P L_g x(u) = (L_g x(uv))_{v\in S} = (x(g^{-1}uv))_{v\in S} = P x(g^{-1}u) = L_g P x(u).
\end{equation*}

\paragraph{Kernel mapping.}
The pointwise operator~$M$ is defined exactly as in Section~\ref{sec:kernel_construction}, and thus commutes with~$L_g$.

\paragraph{Pooling.}
The pooling operator on the group~$G$ is defined  by
\begin{equation*}
A x(u) = \int_G x(uv) h(v) d \mu(v) = \int_G x(v) h(u^{-1}v) d \mu(v), 
\end{equation*}
where $h$ is a pooling filter typically localized around the identity element. The construction is similar to~\citet{raj2017local} and it is easy to see from the first expression of $A x(u)$ that $A L_g x(u) = L_g A x(u)$, making the pooling operator $G$-equivariant.
One may also pool on a subset of the group by only integrating over the subset in the first expression,
an operation which is also $G$-equivariant.

In our analysis of stability in Section~\ref{sub:stability_results},
we saw that inner pooling layers are useful to guarantee stability to local deformations,
while global invariance is achieved mainly through the last pooling layer.
In some cases, one only needs stability to a subgroup of~$G$, while achieving invariance to the whole group,
\eg, in the roto-translation group~\citep{oyallon2015deep,sifre2013rotation},
one might want invariance to a global rotation but stability to local translations.
Then, one can perform patch extraction and pooling just on the subgroup to stabilize (\eg, translations)
in intermediate layers, while pooling on the entire group at the last layer to achieve the global group invariance.

\paragraph{Example with the roto-translation group.}
We consider a simple example on 2D images where one wants global invariance to rotations
in addition to near-invariance and stability to translations as in Section~\ref{sub:stability_results}.
For this, we consider the roto-translation group~\citep[see, \eg,][]{sifre2013rotation},
defined as the \emph{semi-direct} product of translations~$\R^2$ and rotations~$SO(2)$,
denoted by $G = \R^2 \rtimes SO(2)$, with the following group operation
\[
g g' = (v + R_\theta v', \theta + \theta'),
\]
for $g = (v, \theta)$, $g' = (v', \theta')$ in~$G$, where~$R_\theta$ is a rotation matrix in~$SO(2)$.
The element $g = (v, \theta)$ in $G$ acts on a location~$u \in \R^2$ by combining a rotation and a translation:
\begin{align*}
g \cdot u &= v + R_\theta u \\
g^{-1} \cdot u &=(-R_{-\theta}v, -\theta) \cdot u = R_{-\theta}(u - v).
\end{align*}
For a given image $\tilde x$ in $L^2(\R^2)$,
our equivariant construction outlined above
requires an extension of the signal to the group~$G$.
We consider the Haar measure given by $d\mu((v, \theta)) := dv d \mu_c(\theta)$,
where~$dv$ is the Lebesgue measure on~$\R^2$ and~$d\mu_c$ the normalized Haar measure on the unit circle.
Note that~$\mu$ is left-invariant, since the determinant of rotation matrices
that appears in the change of variables is 1.
We can then define $x$ by $x((u, \eta)) := \tilde x (u)$ for any angle $\eta$, which is in~$L^2(G)$ and
preserves the definition of group action on the original signal $\tilde x$ since
\[
L_g x ((u, \eta)) = x(g^{-1}(u, \eta)) = x((g^{-1}\cdot u, \eta - \theta)) = \tilde x(g^{-1} \cdot u) = L_g \tilde x(u).
\]
That is, we can study the action of~$G$ on 2D images in~$L^2(\R^2)$ by studying
the action on the extended signals in~$L^2(G)$ defined above.

We can now define patch extraction and pooling operators $P, A : L^2(G) \to L^2(G)$ only on the translation subgroup,
by considering a patch shape $S = \{(v, 0)\}_{v \in \tilde S} \subset G$ with $\tilde S \subset \R^2$ for~$P$,
and defining pooling by
$
A x(g) = \int_\Omega x(g(v,0)) h(v) dv,
$
where~$h$ is a Gaussian pooling filter with scale~$\sigma$ defined on~$\R^2$.

The following result, proved in Appendix~\ref{sec:proofs}, shows analogous results
to the stability lemmas of Section~\ref{sub:stability_results} for the operators $P$ and $A$.
For a diffeomorphism~$\tau$, we denote by $L_\tau$ the action operator given by
$L_\tau x((u, \eta)) = x((\tau(u), 0)^{-1}(u, \eta)) = x((u - \tau(u), \eta))$.
\begin{lemma}[Stability with roto-translation patches]
\label{lemma:roto_stability}
If $\|\nabla \tau\|_\infty \leq 1/2$, and the following condition holds $\sup_{c \in \tilde S} |c| \leq \kappa \sigma$, we have
\begin{equation*}
\|[P A, L_\tau]\| \leq C_1 \| \nabla \tau\|_\infty,
\end{equation*}
with the same constant~$C_1$ as in Lemma~\ref{lemma:stability}, which depends on~$h$ and~$\kappa$.
Similarly, we have
\begin{equation*}
\|L_\tau A - A\| \leq \frac{C_2}{\sigma} \|\tau\|_\infty,
\end{equation*}
with~$C_2$ as defined in Lemma~\ref{lemma:transl_invariance}.
\end{lemma}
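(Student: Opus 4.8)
The plan is to exploit the fibered structure of the roto-translation group $G = \R^2 \rtimes SO(2)$ over the angular coordinate, reducing the operators $P$, $A$ and $L_\tau$ to operators acting on each fiber $L^2(\R^2)$ exactly as in the translation-only setting, and then to invoke Lemmas~\ref{lemma:stability} and~\ref{lemma:transl_invariance} fiberwise. Throughout I would write a point of $G$ as $(w, \eta)$ with $w \in \R^2$ and $\eta$ the rotation angle, and identify $L^2(G)$ with $L^2(SO(2); L^2(\R^2))$ under the Haar measure $d\mu((v,\theta)) = dv\, d\mu_c(\theta)$, so that $\eta$ indexes a fiber on which each operator below acts.

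For the first bound I would mirror the patch-extraction reduction used for the translation group. Writing $T_{(c,0)} x(u) := x(u(c,0))$ for the right translation by a patch element, the normalized measure on $\tilde S$ lets me identify $P$ isometrically with the stack $(T_{(c,0)})_{c \in \tilde S}$, and a short computation using the same $\tau(w)$ across slices shows that $L_\tau$ acts slice-wise on this stack. Fubini then gives, exactly as in the translation case, $\|P A L_\tau x - L_\tau P A x\|^2 = \int_{\tilde S} \|T_{(c,0)} A L_\tau x - L_\tau T_{(c,0)} A x\|^2 d\nu(c)$, hence $\|[P A, L_\tau]\| \leq \sup_{c \in \tilde S} \|[T_{(c,0)} A, L_\tau]\|$. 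Here I use that $T_{(c,0)}$ preserves the $L^2(G)$ norm, which holds because on each fiber it is merely a spatial translation, as seen next.

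It then remains to bound $\|[T_{(c,0)} A, L_\tau]\|$. Writing things out on the fiber $\eta$, one finds $T_{(c,0)} x((w,\eta)) = x((w + R_\eta c, \eta))$, so $T_{(c,0)}$ acts as the spatial translation $L_{-R_\eta c}$; the pooling $A x((w,\eta)) = \int x((w + R_\eta v, \eta)) h_\sigma(v)\, dv$ reduces to the standard translation-pooling $A_\sigma$ after the change of variables $v \mapsto R_{-\eta} v$, using the rotation-invariance of the Gaussian $h_\sigma$; and $L_\tau$ acts on the fiber as the spatial deformation $L_\tau$ on $L^2(\R^2)$. Crucially none of these operators mixes angles, so $[T_{(c,0)}A, L_\tau]$ is a direct integral over $\eta$ of the operators $[L_{-R_\eta c} A_\sigma, L_\tau]$ on $L^2(\R^2)$, and its norm equals the supremum over $\eta$ of the fiber norms. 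Since $|{-R_\eta c}| = |c| \leq \kappa \sigma$, Lemma~\ref{lemma:stability} bounds each fiber norm by $C_1 \|\nabla \tau\|_\infty$ with the same constant $C_1$, yielding the first claim. The second claim follows identically: $L_\tau A - A$ is the direct integral over $\eta$ of $L_\tau A_\sigma - A_\sigma$, each bounded by $(C_2/\sigma)\|\tau\|_\infty$ through Lemma~\ref{lemma:transl_invariance}.

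The main obstacle is making the fiber decomposition fully rigorous: justifying that $L^2(G)$ splits as a direct integral over the angle under the chosen Haar measure, that each of $P$, $A$, $L_\tau$, $T_{(c,0)}$ is block-diagonal with respect to it, and that the operator norm of such a block-diagonal operator is the essential supremum of its fiber norms. The two facts that make the constants come out identical to the translation case, which I would emphasize, are that the rotation $R_\eta$ only changes the \emph{direction} of the patch shift while preserving its magnitude $|c|$ (so that Lemma~\ref{lemma:stability}, which depends only on $|c| \leq \kappa\sigma$ and not on direction, applies with an unchanged $C_1$), and that the rotation-invariance of the Gaussian filter removes any angle dependence from the pooling operator.
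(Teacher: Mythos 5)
Your proposal is correct and follows essentially the same route as the paper's proof: both decompose $L^2(G)$ into fibers over the angle $\eta$, use the rotation invariance of the Gaussian to reduce the pooling to the standard $A_\sigma$ on each fiber, observe that rotations preserve the patch radius so that $|R_\eta c| = |c| \leq \kappa\sigma$, and then invoke Lemmas~\ref{lemma:stability} and~\ref{lemma:transl_invariance} fiberwise. The only (cosmetic) difference is the order of the two reductions --- you perform the patch-stack identification on $L^2(G)$ first and then pass to fibers, whereas the paper passes to fibers first and applies the patch reduction of Section~\ref{sub:stability_results} within each fiber --- which changes nothing in the constants or the substance of the argument.
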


By constructing a multi-layer representation~$\Phi_n(x)$ in $L^2(G)$ using similar operators at each layer,
we can obtain a similar stability result to Theorem~\ref{thm:stability}.
By adding a global pooling operator~$A_c : L^2(G) \to L^2(\R^2)$ after the last layer,
defined, for~$x \in L^2(G)$, as
\[
A_c x(u) = \int x((u, \eta)) d \mu_c(\eta),
\]
we additionally obtain global invariance to rotations,
as shown in the following theorem.
\begin{theorem}[Stability and global rotation invariance]
\label{thm:roto_stability}
   Assume~\eqref{eq:patch_assumption} for patches~$\tilde S$ at each layer.
   Define the operator $L_{(\tau, \theta)} x((u, \eta)) = x((\tau(u), \theta)^{-1} (u, \eta))$,
   and define the diffeomorphism $\tau_\theta: u \mapsto R_{-\theta} \tau(u)$.
   If $\|\nabla \tau\|_\infty \leq 1/2$, we have
\begin{align*}
\| A_c \Phi_n(L_{(\tau, \theta)} x) - A_c \Phi_n(x) \|
	&\leq \|\Phi_n(L_{R_{\tau_\theta}} x) - \Phi_n(x) \|\\
	&\leq \left( C_1 \left(1 + n \right) \|\nabla \tau\|_\infty
		 + \frac{C_2}{\sigma_n} \|\tau\|_\infty \right) \|x\|.
\end{align*}
\end{theorem}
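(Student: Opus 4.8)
The plan is to reduce the roto-translation statement to the pure-translation bound already established in Theorem~\ref{thm:stability}, by separating the global rotation~$\theta$ (which the final angular pooling~$A_c$ must absorb) from the local deformation~$\tau$ (which the inner layers must stabilize). The starting point is the exact $G$-equivariance of the representation: each operator making up~$\Phi_n$ commutes with~$L_g$ for every~$g \in G$ --- patch extraction and pooling by the right-multiplication and integration identities established just above the statement, and the kernel mapping because it is pointwise --- so that $\Phi_n \circ L_g = L_g \circ \Phi_n$ and, in particular, $\Phi_n$ commutes with the pure rotation~$L_{(0,\theta)}$.

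For the first inequality I would factor the combined action into a spatial deformation and a rotation of~$G$. Writing out $(\tau(u),\theta)^{-1}(u,\eta) = (R_{-\theta}(u-\tau(u)), \eta - \theta)$ shows that $L_{(\tau,\theta)}$ is a pure spatial deformation composed with the rotation~$L_{(0,\theta)}$, the deformation field being $\tau_\theta = R_{-\theta}\tau$ up to a rotation of its argument; in either case, since $R_\theta \in SO(2)$ is an isometry, $\|\nabla \tau_\theta\|_\infty = \|\nabla\tau\|_\infty$ and $\|\tau_\theta\|_\infty = \|\tau\|_\infty$. I would then push~$L_{(0,\theta)}$ through~$\Phi_n$ using equivariance, and through~$A_c$ using the identity $A_c L_{(0,\theta)} = L'_\theta A_c$, where~$L'_\theta$ is a rotation of the output domain~$\R^2$ and hence norm-preserving. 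Combining the isometry of~$L'_\theta$ with the non-expansiveness $\|A_c\|\le 1$ then discards both the rotation and the pooling inside the norm, yielding $\|A_c\Phi_n(L_{(\tau,\theta)}x) - A_c\Phi_n(x)\| \le \|\Phi_n(L_{\tau_\theta}x) - \Phi_n(x)\|$.

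The second inequality is then the deformation-stability bound of Theorem~\ref{thm:stability} applied verbatim to the pure deformation~$L_{\tau_\theta}$. I would invoke the group analogue of Proposition~\ref{prop:general_bound} to telescope the error across the~$n$ layers into a sum of commutators $\|[P_k A_{k-1}, L_{\tau_\theta}]\|$ together with the last-layer terms $\|[A_n, L_{\tau_\theta}]\|$ and $\|L_{\tau_\theta} A_n - A_n\|$, and control each using Lemma~\ref{lemma:roto_stability}, which (under assumption~\eqref{eq:patch_assumption} for the patches~$\tilde S$) gives $\|[PA, L_{\tau_\theta}]\| \le C_1\|\nabla\tau_\theta\|_\infty$ and $\|L_{\tau_\theta}A - A\| \le (C_2/\sigma)\|\tau_\theta\|_\infty$ with exactly the constants of Lemmas~\ref{lemma:stability} and~\ref{lemma:transl_invariance}. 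Summing the~$n$ commutator terms and the two last-layer terms produces $C_1(1+n)\|\nabla\tau_\theta\|_\infty + (C_2/\sigma_n)\|\tau_\theta\|_\infty$, and substituting the norm identities $\|\nabla\tau_\theta\|_\infty=\|\nabla\tau\|_\infty$, $\|\tau_\theta\|_\infty=\|\tau\|_\infty$ (with $\|x\|$ invariant under the rotation) recovers the stated right-hand side.

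I expect the main obstacle to be the first inequality, precisely because~$A_c$ pools only the angular variable while retaining the spatial one: the global rotation is not annihilated but turned into the spatial rotation~$L'_\theta$ of the pooled map. The care needed is to verify that this residual operator is a genuine isometry of the output space, so that it drops out of the norm, and to reconcile it with the reference $A_c\Phi_n(x)$ --- this is where the angular invariance granted by~$A_c$ and the spatial equivariance of the construction must be combined correctly, and where the semidirect-product structure of $G = \R^2 \rtimes SO(2)$ (the coupling of the patch shift~$(v,0)$ with the orientation through~$R_\eta$) genuinely enters. Establishing the exact $G$-equivariance of~$\Phi_n$ when the patch and pooling operators are defined only on the translation subgroup is the other technical point I would check carefully, since it underpins the whole reduction.
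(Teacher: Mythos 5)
Your reduction and the treatment of the second inequality follow the paper's proof: the factorization $L_{(\tau,\theta)} = L_{(0,\theta)} L_{\tau_\theta}$ (with the deformation field equal to $R_{-\theta}\tau$ up to a rotation of its argument), the exact $G$-equivariance of $\Phi_n$, the norm identities $\|\nabla \tau_\theta\|_\infty = \|\nabla\tau\|_\infty$ and $\|\tau_\theta\|_\infty = \|\tau\|_\infty$, and the conclusion via Lemma~\ref{lemma:roto_stability} together with an adapted version of Proposition~\ref{prop:general_bound} are all exactly what the paper does. The gap is in the first inequality, at the step you yourself flagged as the main obstacle. After pushing the rotation through $\Phi_n$ and $A_c$ with your commutation relation $A_c L_{(0,\theta)} = L'_\theta A_c$, you are left with
\begin{equation*}
\bigl\| L'_\theta \, A_c \Phi_n(L_{\tau_\theta} x) - A_c \Phi_n(x) \bigr\|,
\end{equation*}
and the isometry of $L'_\theta$ does \emph{not} let you discard it: an isometry satisfies $\|L'_\theta a - L'_\theta b\| = \|a - b\|$, but here $L'_\theta$ hits only the first term of the difference, and $\|L'_\theta a - b\| \neq \|a - b\|$ in general. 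If you try to repair this with a triangle inequality, you pick up the residual term $\|L'_\theta A_c \Phi_n(x) - A_c \Phi_n(x)\|$, which is not controlled by the deformation at all --- for $\tau = 0$ it \emph{is} the left-hand side of the theorem, and it vanishes only if the pooled feature map $A_c\Phi_n(x)$ is radially symmetric, which fails for generic $x$. So with the commutation relation you derived, the claimed inequality cannot be completed.

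What the paper's proof uses instead is the strictly stronger identity $A_c L_{(0,\theta)} = A_c$: the angular pooling \emph{annihilates} the global rotation, leaving no residual spatial rotation, after which non-expansiveness $\|A_c\| \leq 1$ (proved via Schur's test with the kernel $\delta_u(v)$) gives the first inequality in one line. For that identity to hold, $A_c$ must average over the full rotation orbit, \ie, in the parametrization where the group law is $gg' = (v + R_\theta v', \theta + \theta')$, one needs $A_c x(u) = \int x((0,\eta)(u,0))\, d\mu_c(\eta) = \int x((R_\eta u, \eta))\, d\mu_c(\eta)$, integrating along cosets of the rotation subgroup; then the change of variable $\eta' = \eta - \theta$ absorbs the rotation entirely, spatial part included. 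Your relation $A_c L_{(0,\theta)} = L'_\theta A_c$ is what the formula $\int x((u,\eta))\, d\mu_c(\eta)$ gives when read literally, and it is precisely the reading under which the theorem's first bound cannot be derived. The missing ingredient in your proposal is therefore this exact annihilation property of the angular pooling (equivalently, the coset-averaging form of $A_c$), in place of the isometry argument.
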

We note that a similar result may be obtained when $G = \Omega \rtimes H$, where~$H$ is any compact group,
with a possible additional dependence on how elements of~$H$ affect the size of patches.


\section{Link with Existing Convolutional Architectures} 
\label{sec:link_with_cnns}
In this section, we study the functional spaces (RKHS) that arise from our multilayer kernel representation,
and examine the connections with more standard convolutional architectures.
The motivation of this study is that if a CNN model~$f$ is in the RKHS, then it can be written in a ``linearized''
form $f(x) = \langle f, \Phi(x) \rangle$, so that our study of stability of the kernel representation~$\Phi$
extends to predictions using $|f(x) - f(x')| \leq \|f\| \|\Phi(x) - \Phi(x')\|$.

We begin by considering in Section~\ref{subsec:kernels_kk} the intermediate kernels~$K_k$,
showing that their RKHSs contain simple neural-network-like functions defined on patches with smooth activations,
while in Section~\ref{sub:cnns_rkhs} we show that a certain class of generic CNNs
are contained in the RKHS $\Hc_{\Kcal_n}$ of the full multilayer kernel~$\Kcal_n$ and characterize their norm.
This is achieved by considering particular functions in each intermediate RKHS defined in terms of the convolutional
filters of the CNN. A consequence of these results is that
our stability and invariance properties from Section~\ref{sec:stability} are valid for this broad class of CNNs.

\subsection{Activation Functions and Kernels~$K_k$}\label{subsec:kernels_kk} 

\label{sub:rkhs_activations}
Before introducing formal links between our kernel representation and classical
convolutional architectures, we study in more details the kernels~$K_k$
described in Section~\ref{sec:kernel_construction} and their RKHSs~$\Hcal_k$.
In particular, we are interested in characterizing which types of functions live in~$\Hcal_k$. 
The next lemma extends some results of~\citet{zhang2016l1,zhang2016convexified},
originally developed for the inverse polynomial and Gaussian kernels; it shows
that the RKHS may contain simple ``neural network'' functions with activations~$\sigma$
that are smooth enough. 
\begin{lemma}[Activation functions and RKHSs~$\Hcal_k$]
\label{lemma:activations}
   Let $\sigma: [-1,1] \to \Real$ be a function that admits a
   polynomial expansion $\sigma(u) := \sum_{j=0}^\infty a_j u^j$.
   Consider a kernel~$K_k$ from Section~\ref{sec:kernel_construction}, given in~\eqref{eq:dp_kernel_appx},
   with $\kappa_k(u)=\sum_{j=0}^{\infty} b_j u^j$, and $b_j \geq 0$ for all~$j$.
   Assume further that $a_j = 0$ whenever $b_j = 0$, and define the function $C_\sigma^2(\lambda^2) := \sum_{j=0}^\infty
   (a_j^2/b_j) \lambda^{2j}$.
   Let $g$ in~$\Pcal_k$ be such that $C_\sigma^2(\|g\|^2) < \infty$.
 Then, the RKHS $\Hcal_k$ contains the function
\begin{equation}
\label{eq:activations}
  f: z \mapsto \|z\| \sigma(\langle g, z \rangle / \|z\|),
\end{equation}
and its norm satisfies~$\|f\| \leq C_\sigma(\|g\|^2)$.
\end{lemma}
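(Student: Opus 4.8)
The plan is to build an explicit feature map for the homogeneous dot-product kernel $K_k$ from its Maclaurin expansion, and then exhibit a representer of $f$ in the associated feature space. Using $\kappa_k(u)=\sum_{j} b_j u^j$ together with the identity $\langle z,z'\rangle^j = \langle z^{\otimes j}, z'^{\otimes j}\rangle$ on the tensor space $\Pcal_k^{\otimes j}$, I would first verify
\[
K_k(z,z') = \sum_{j=0}^\infty b_j\, \|z\|^{1-j}\|z'\|^{1-j}\,\langle z^{\otimes j}, z'^{\otimes j}\rangle = \langle \varphi(z), \varphi(z')\rangle,
\]
where $\varphi(z) \defin (\sqrt{b_j}\,\|z\|^{1-j} z^{\otimes j})_{j\ge 0}$ lives in $\mathcal{G} \defin \bigoplus_{j\ge 0}\Pcal_k^{\otimes j}$ (and $\varphi(0)\defin 0$). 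Since $\|\varphi(z)\|^2 = K_k(z,z) = \|z\|^2\kappa_k(1) = \|z\|^2 < \infty$ by the condition $\kappa_k(1)=1$, this map is well defined, and the standard uniqueness of the RKHS reproducing a given kernel (Appendix~\ref{sec:basic_tools}) identifies $\Hcal_k$ with the functions $z \mapsto \langle w, \varphi(z)\rangle$, $w\in\mathcal{G}$, whose norm equals the smallest norm of such a representer.

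Next I would expand $f$ in the same tensor powers. Writing $\sigma(u)=\sum_j a_j u^j$, for $z\neq 0$ one has
\[
f(z) = \|z\|\sum_{j=0}^\infty a_j\Big(\tfrac{\langle g, z\rangle}{\|z\|}\Big)^j = \sum_{j=0}^\infty a_j\, \|z\|^{1-j}\,\langle g^{\otimes j}, z^{\otimes j}\rangle .
\]
This suggests the representer $w = (w_j)_{j\ge 0}$ with $w_j \defin (a_j/\sqrt{b_j})\, g^{\otimes j}$ whenever $b_j>0$, and $w_j \defin 0$ otherwise; the hypothesis that $a_j=0$ whenever $b_j=0$ guarantees this is well defined and that no term is dropped. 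A direct computation then gives $\langle w, \varphi(z)\rangle = \sum_j (a_j/\sqrt{b_j})\sqrt{b_j}\,\|z\|^{1-j}\langle g^{\otimes j}, z^{\otimes j}\rangle = f(z)$, so $f$ is exactly the function represented by $w$. Using $\|g^{\otimes j}\|^2 = \|g\|^{2j}$, its norm satisfies
\[
\|w\|^2 = \sum_{j=0}^\infty \frac{a_j^2}{b_j}\|g\|^{2j} = C_\sigma^2(\|g\|^2) < \infty,
\]
so $w \in \mathcal{G}$ and $f \in \Hcal_k$ with $\|f\| \le \|w\| = C_\sigma(\|g\|^2)$, as claimed.

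The main obstacle is analytic rigor rather than the algebra: I must justify that the series defining $\varphi$, $w$, and $f$ all converge and that the termwise manipulations—in particular the pointwise identity $f(z)=\langle w,\varphi(z)\rangle$ and the interchange of summation with the inner product—are legitimate. This is where the finiteness of $C_\sigma^2(\|g\|^2)$ is essential: combined with $b_j \le \kappa_k(1)=1$, it yields $\sum_j a_j^2\|g\|^{2j} < \infty$, which controls the radius of convergence of $\sigma$ and ensures absolute convergence on the relevant domain $|\langle g,z\rangle/\|z\|| \le \|g\|$. I would also treat the degenerate point $z=0$ separately, where both sides vanish, and make precise the identification of $\Hcal_k$ with the tensor feature realization of~$\mathcal{G}$. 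Taking $\sigma(u)=u$ (so $a_1=1$, all other $a_j=0$) recovers the linear functions $z\mapsto\langle g,z\rangle$ with norm $\|g\|$, which is the content of the Corollary invoked in Lemma~\ref{lemma:signal_recovery}.
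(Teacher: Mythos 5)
Your proof is correct and follows essentially the same route as the paper's: the explicit tensor-power feature map $\psi(z)=(\sqrt{b_j}\,\|z\|^{1-j}z^{\otimes j})_{j\geq 0}$ for the homogeneous dot-product kernel, the representer $(a_j/\sqrt{b_j})\,g^{\otimes j}$, the norm computation yielding $C_\sigma^2(\|g\|^2)$, and the feature-map characterization of the RKHS (Theorem~\ref{thm:rkhs}). The only differences are cosmetic: you carry out the homogeneous computation directly, whereas the paper first restricts to unit-norm $z$ and then ``extends with similar calculations,'' and you are somewhat more explicit about the $b_j=0$ terms and the convergence issues, which the paper glosses over.
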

Noting that for all examples of~$\kappa_k$ given in Section~\ref{sec:kernel_construction}, we have $b_1 > 0$,
this result implies the next corollary, which was also found to be useful in our analysis.
\begin{corollary}[Linear functions and RKHSs]\label{corollary:linear}
   The RKHSs~$\Hc_k$ for the examples of~$\kappa_k$ given in Section~\ref{sec:kernel_construction}
   contain all linear functions of the form $z \mapsto \langle g, z \rangle$ with $g$ in~$\Pcal_k$.
\end{corollary}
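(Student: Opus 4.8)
The plan is to derive this directly from Lemma~\ref{lemma:activations} by choosing the activation $\sigma$ to be the identity function. First, I would set $\sigma(u) = u$, which clearly admits the polynomial expansion $\sigma(u) = \sum_{j=0}^\infty a_j u^j$ with $a_1 = 1$ and $a_j = 0$ for all $j \neq 1$. Plugging this $\sigma$ into the function~\eqref{eq:activations} from the lemma gives
\[
f: z \mapsto \|z\| \, \sigma(\langle g, z\rangle / \|z\|) = \|z\| \cdot \frac{\langle g, z\rangle}{\|z\|} = \langle g, z \rangle,
\]
which is exactly the desired linear function (the homogeneization disappears precisely because $\sigma$ is linear, hence homogeneous of degree one).

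The key verification is that Lemma~\ref{lemma:activations} actually applies with this choice of $\sigma$. I would check its two hypotheses. The compatibility condition requires $a_j = 0$ whenever $b_j = 0$; since the only nonzero coefficient of $\sigma$ is $a_1 = 1$, this reduces to requiring $b_1 > 0$, which is noted to hold for all the example kernels $\kappa_k$ listed in Section~\ref{sec:kernel_construction}. The finiteness condition requires $C_\sigma^2(\|g\|^2) < \infty$; here
\[
C_\sigma^2(\lambda^2) = \sum_{j=0}^\infty (a_j^2 / b_j) \lambda^{2j} = \frac{1}{b_1} \lambda^2,
\]
since only the $j=1$ term survives, so $C_\sigma^2(\|g\|^2) = \|g\|^2 / b_1 < \infty$ for every $g$ in $\Pcal_k$. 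Both hypotheses are therefore met for arbitrary $g$, and the lemma guarantees that $f(z) = \langle g, z\rangle$ lies in $\Hcal_k$ (with norm bounded by $\|g\|/\sqrt{b_1}$).

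There is no genuine obstacle here — the corollary is an immediate specialization — so the only thing to be careful about is confirming that $b_1 > 0$ for each listed kernel, which rules out the degenerate case where the compatibility assumption fails and the linear term is unavailable. I would note that this holds by inspection: the exponential, inverse-polynomial, polynomial, arc-cosine, and Vovk kernels all have a strictly positive linear Maclaurin coefficient. Given that $g \in \Pcal_k$ was arbitrary, this establishes that $\Hcal_k$ contains every linear function of the stated form.
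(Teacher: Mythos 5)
Your proof is correct and matches the paper's own argument: the paper derives Corollary~\ref{corollary:linear} precisely by noting that $b_1 > 0$ for all the listed kernels and specializing Lemma~\ref{lemma:activations} to the identity activation $\sigma(u) = u$, for which the homogeneization cancels and $C_\sigma^2(\|g\|^2) = \|g\|^2/b_1 < \infty$. Your verification of the hypotheses (the compatibility condition $a_j = 0$ whenever $b_j = 0$, and the finiteness of $C_\sigma^2$) is exactly the needed detail, so there is nothing to add.
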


The previous lemma shows that for many choices of smooth functions $\sigma$,
the RKHS $\Hcal_k$ contains the functions of the form~\eqref{eq:activations}.
While the non-homogeneous functions $z \mapsto \sigma(\langle g, z \rangle)$
are standard in neural networks, the homogeneous variant is not.
Yet, we note that (i)~the most successful
activation function, namely rectified linear units, is homogeneous---that is, 
$\text{relu}(\langle g, z \rangle) = \|z\|\text{relu}( \langle g, z \rangle /\|z\|)$; 
(ii)~while $\text{relu}$ is nonsmooth and thus not in our RKHSs, there exists a
smoothed variant that satisfies the conditions of Lemma~\ref{lemma:activations}
for useful kernels. As noticed by~\citet{zhang2016l1,zhang2016convexified}, this
is for instance the case for the inverse polynomial kernel. In
Figure~\ref{fig:relu}, we plot and compare these different variants of ReLU.

\begin{figure}[tb]
	\centering
	\includegraphics[width=.48\textwidth]{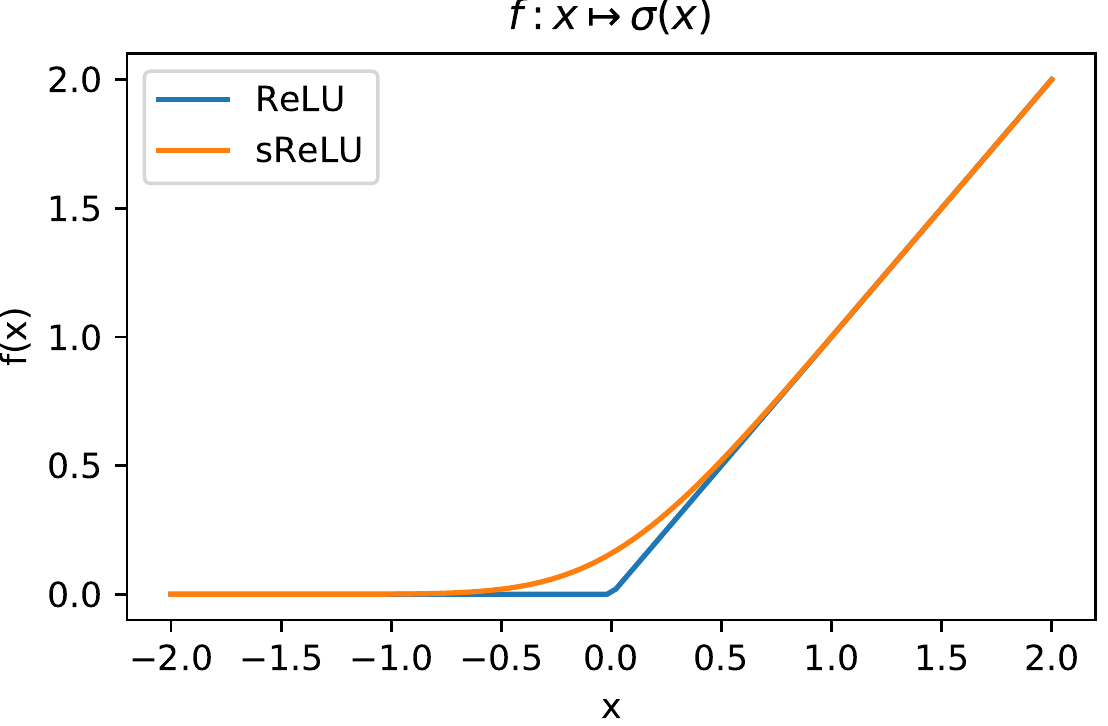}
	\includegraphics[width=.48\textwidth]{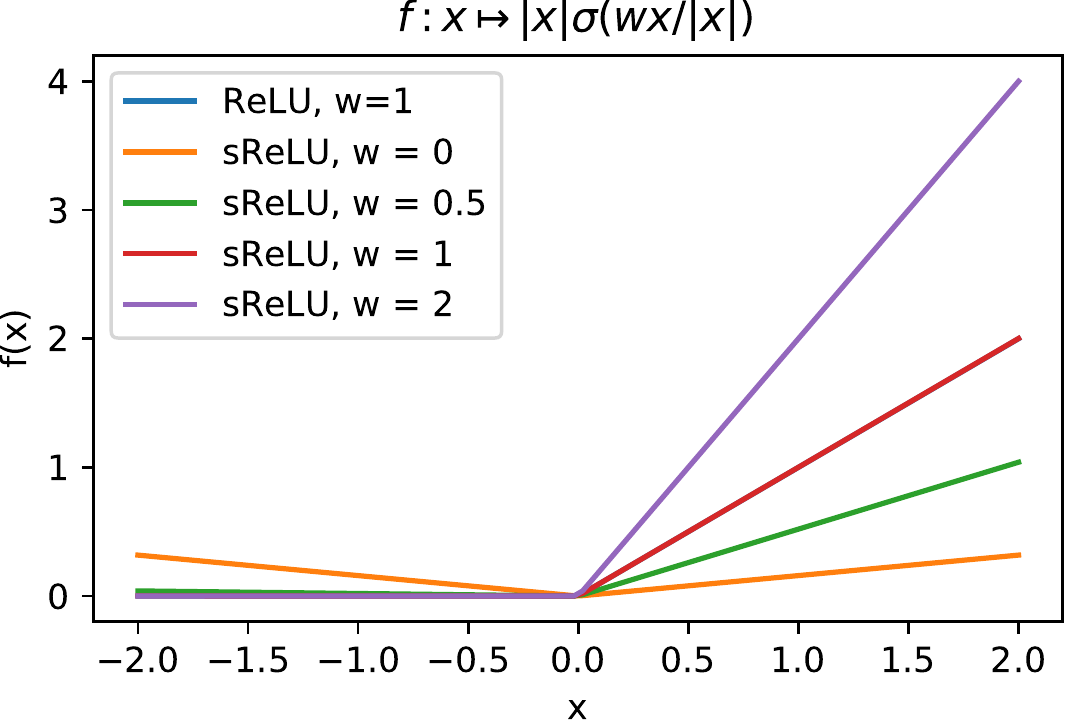}
        \vs
        \vs
        \vs
	\caption{Comparison of one-dimensional functions obtained with relu and smoothed relu (sReLU) activations. (Left) non-homogeneous setting of~\citet{zhang2016l1,zhang2016convexified}. (Right) our homogeneous setting, for different values of the parameter~$w$. Note that for~$w \geq 0.5$, sReLU and ReLU are indistinguishable.}
	\label{fig:relu}
        \vs
        \vs
        \vs
\end{figure}

\subsection{Convolutional Neural Networks and their Complexity}
\label{sub:cnns_rkhs}

We now study the connection between the kernel representation defined in Section~\ref{sec:kernel_construction}
and CNNs. Specifically, we show that the RKHS of the final kernel~$\Kcal_n$ obtained from
our kernel construction contains a set of
CNNs on continuous domains with certain types of smooth homogeneous activations.
An important consequence is that the stability results of
previous sections apply to this class of CNNs, although the stability depends on the
RKHS norm, as discussed later in Section~\ref{sec:discussion_conclusion}.
This norm also serves as a measure of model complexity, thus controlling both generalization and stability.

\vs
\paragraph{CNN maps construction.}
We now define a CNN function~$f_\activ$ that takes as input an image~$z_0$ in $L^2(\Omega, \R^{p_0})$
with~$p_0$ channels, and build a sequence of feature maps, represented at layer~$k$ as a
function~$z_k$ in $L^2(\Omega,\R^{p_k})$ with $p_k$ 
channels; the map~$z_k$ is obtained from~$z_{\kmone}$ by performing linear convolutions with a set of filters
$(w_k^i)_{i=1,\ldots,p_k}$, followed by a pointwise activation function~$\sigma$ to
obtain an intermediate feature map~$\tilde{z}_k$, then by applying 
a linear pooling filter.
Note that each $w_k^i$ is in $L^2(S_k,\R^{p_{\kmone}})$, with
channels denoted by~$w_k^{ij}$ in $L^2(S_k,\R)$.
Formally, the intermediate map $\tilde{z}_k$ in $L^2(\Omega,\R^{p_k})$ is obtained by
\begin{equation}
   \tilde{z}_k^i(u) = n_k(u) \sigma\! \left( \langle w_k^i, P_k z_{\kmone}(u) \rangle / n_k(u) \right),  \label{eq:cnn}
\end{equation}
where $\tilde{z}_k(u) = (\tilde{z}_k^1(u),\ldots,\tilde{z}_k^{p_k}(u))$ is in $\Real^{p_k}$, and $P_k$ is a patch
extraction operator for finite-dimensional maps.  The activation involves a pointwise non-linearity~$\sigma$
along with a quantity $n_k(u) \defin \|P_k x_{k-1}(u)\|$
in~(\ref{eq:cnn}), which is due to the homogenization, and which is
independent of the filters $w_k^i$.
Finally, the map $z_k$ is obtained by using a pooling operator as in Section~\ref{sec:kernel_construction}, with $z_k = A_k \tilde{z}_k$, and $z_0=x_0$.

\vs
\paragraph{Prediction layer.}
For simplicity, we consider the case of a linear fully connected prediction layer.
In this case, the final CNN prediction function~$f_{\sigma}$ is given by
\begin{equation*}
   f_\activ (x_0) = \langle w_{n+1},  z_n \rangle,
\end{equation*}
with parameters~$w_{n+1}$ in $L^2(\Omega, \R^{p_n})$.
We now show that such a CNN function is contained in the RKHS of the kernel~$\Kcal_n$ defined
in~\eqref{eq:linear_prediction_kernel}.

\paragraph{Construction in the RKHS.}
The function~$f_\sigma$ can be constructed recursively from intermediate functions that lie in the
RKHSs~$\Hc_k$, of the form~\eqref{eq:activations}, for appropriate activations~$\activ$.
Specifically, we define initial quantities
 $f_1^{i}$ in $\mathcal{H}_1$ and $g^i_1$ in $\Pcal_1$ for~$i = 1, \ldots, p_1$ such that
\begin{align*}
   g_1^i &= w_1^{i}  \in L^2(S_1,\Real^{p_0}) = L^2(S_1,\Hcal_0) = \Pcal_1,\\
f_1^i(z) &= \|z\| \activ(\langle g^0_i, z \rangle / \|z\|) \quad \text{ for } z \in \Pcal_1, 
\end{align*}
and we define, from layer $\kmone$, the quantities $f_k^{i}$ in $\mathcal{H}_k$ and $g_{k}^i$ in $\Pcal_k$
for $i = 1, \ldots, p_k$:
\begin{align*}
   g_{k}^i(v) &= \sum_{j=1}^{p_{\kmone}} w_{k}^{ij}(v) f_{\kmone}^j ~~~\text{where}~~~ w_k^i(v) = (w_k^{ij}(v))_{j=1,\ldots,p_{\kmone}},\\
f_k^i(z) &= \|z\| \activ(\langle g_{k}^i, z \rangle / \|z\|) \quad \text{ for } z \in \Pcal_k.
\end{align*}
For the linear prediction layer, we define~$g_\sigma$ in $L^2(\Omega, \Hc_n)$ by:
\begin{align*}
   g_{\sigma}(u) &= \sum_{j=1}^{p_n} w_{n+1}^j(u) f_n^j \quad \text{ for all } u\in \Omega,
\end{align*}
so that the function~$f: x_0 \mapsto \langle g_\sigma, x_n \rangle$ is in the RKHS of~$\Kcal_n$,
where~$x_n$ is the final representation given in Eq.~\eqref{eq:final_repr}.
In Appendix~\ref{sub:cnn_construction_rkhs}, we show that $f = f_\sigma$, which implies that
the CNN function~$f_\sigma$ is in the RKHS.
We note that a similar construction for fully connected multilayer networks with
constraints on weights and inputs was given by~\citet{zhang2016l1}.

\paragraph{Norm of the CNN~$f_\sigma$.}
We now study the RKHS norm of the CNN constructed above.
This quantity is important as it controls the stability and invariance of the predictions of a learned model through~\eqref{eq:cs}.
Additionally, the RKHS norm provides a way to control model complexity, and can lead to generalization bounds, \eg, through Rademacher complexity and margin bounds~\citep{boucheron2005theory,shalev2014understanding}.
In particular, such results rely on the following upper bound on the empirical Rademacher complexity of a function class with bounded RKHS norm $\mathcal{F}_\lambda = \{f \in \Hc_{\mathcal K_n} : \|f\| \leq \lambda \}$, for a dataset $\{x^{(1)}, \ldots, x^{(N)}\}$:
\begin{equation}
R_N(\mathcal F_\lambda) \leq \frac{\lambda \sqrt{\frac{1}{N} \sum_{i=1}^N \Kcal_n(x^{(i)}, x^{(i)})}}{\sqrt{N}}.
\end{equation}
The bound remains valid when only considering CNN functions in~$\mathcal F_\lambda$ of the form $f_\sigma$, since such a function class is contained in~$\mathcal F_\lambda$.
If we consider a binary classification task with training labels~$y^{(i)}$ in~$\{-1, 1\}$,
on can then obtain a margin-based bound for any function~$f_N$ in~$\mathcal F_\lambda$ obtained from the training set
and any margin~$\gamma > 0$: with probability $1- \delta$, we have~\citep[see, \eg,][]{boucheron2005theory}
\begin{equation}
\label{eq:margin_bound}
L(f_N) \leq L_N^\gamma(f_N) + O \left( \frac{\lambda \sqrt{\frac{1}{N} \sum_{i=1}^N \Kcal_n(x^{(i)}, x^{(i)})}}{\gamma \sqrt{N}} + \sqrt{\frac{\log(1/\delta)}{N}} \right),
\end{equation}
with
\begin{align*}
L(f) &= \mathbb{P}_{(x, y) \sim \mathcal{D}}(y f(x) < 0) \\
L_N^\gamma(f) &= \frac{1}{N} \sum_{i=1}^N \1\{y^{(i)} f(x^{(i)}) < \gamma \},
\end{align*}
where $\mathcal{D}$ is the distribution of data-label pairs $(x^{(i)}, y^{(i)})$.
Intuitively, the margin~$\gamma$ corresponds to a level of confidence, and~$L_N^\gamma$ measures
training error when requiring confident predictions.
Then, the bound on the gap between this training error and the true expected error~$L(f_N)$ becomes larger
for small confidence levels, and is controlled by the model complexity~$\lambda$ and the sample size~$N$.

Note that the bound requires a fixed value of~$\lambda$ used during training, but in practice,
learning under a constraint~$\|f\| \leq \lambda$ can be difficult, especially for CNNs
which are typically trained with stochastic gradient descent with little regularization.
However, by considering values of~$\lambda$ on a logarithmic scale and taking a union bound,
one can obtain a similar bound with $\|f_N\|$ instead of~$\lambda$, up to logarithmic factors~\citep[see, \eg,][Theorem 26.14]{shalev2014understanding},
where~$f_N$ is obtained from the training data.
We note that various authors have recently considered other norm-based complexity measures to control
the generalization of neural networks with more standard activations \citep[see, \eg,~][]{bartlett2017spectrally,liang2017fisher,neyshabur2017exploring,neyshabur2015norm}.
However, their results are typically obtained for fully connected networks on finite-dimensional inputs,
while we consider CNNs for input signals defined on continuous domains.
The next proposition (proved in Appendix~\ref{sub:cnn_construction_rkhs}) characterizes the norm of~$f_\sigma$
in terms of the $L^2$ norms of the filters~$w_k^{ij}$,
and follows from the recursive definition
of the intermediate RKHS elements~$f_k^i$.

\begin{proposition}[RKHS norm of CNNs]\label{prop:cnns}
Assume the activation~$\activ$ satisfies~$C_\activ(a) < \infty$ for all~$a \geq 0$, where~$C_\activ$ is
defined for a given kernel in Lemma~\ref{lemma:activations}.
Then, the CNN function~$f_\activ$ defined above is in the RKHS~$\Hcal_{\mathcal{K}_n}$, with norm
\begin{align*}
   \|f_\activ\|^2 \leq p_n \sum_{i=1}^{p_n} \|w_{n+1}^i\|_2^2 B_{n,i},
\end{align*}
   where $B_{n,i}$ is defined by~$B_{1,i} = C_\activ^2(\|w_1^i\|_2^2 )$ and $B_{k,i} = C_\activ^2 \left(p_{\kmone} \sum_{j=1}^{p_{\kmone}} \|w_k^{ij}\|^2_2 B_{\kmone,j} \right)$.
\end{proposition}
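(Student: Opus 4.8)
The plan is to prove the bound by induction on the layer index $k$, tracking the RKHS norm of each intermediate function $f_k^i \in \Hcal_k$ defined in the recursive construction. The key tool is Lemma~\ref{lemma:activations}, which states that if $g \in \Pcal_k$ satisfies $C_\activ^2(\|g\|^2) < \infty$, then the homogeneous activation function $f_k^i(z) = \|z\| \activ(\langle g_k^i, z\rangle/\|z\|)$ lies in $\Hcal_k$ with norm bounded by $\|f_k^i\| \leq C_\activ(\|g_k^i\|^2)$. So the heart of the argument is to control $\|g_k^i\|^2_{\Pcal_k}$ in terms of the norms of the filters $w_k^{ij}$ and the previously established bounds $B_{\kmone,j}$ on $\|f_{\kmone}^j\|^2$.

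First I would set up the base case: since $g_1^i = w_1^i \in \Pcal_1$, we have $\|g_1^i\|^2 = \|w_1^i\|_2^2$ directly, so Lemma~\ref{lemma:activations} gives $\|f_1^i\|^2 \leq C_\activ^2(\|w_1^i\|_2^2) = B_{1,i}$. For the inductive step, I would compute $\|g_k^i\|^2_{\Pcal_k}$ from its definition $g_k^i(v) = \sum_{j=1}^{p_{\kmone}} w_k^{ij}(v) f_{\kmone}^j$. Expanding the squared norm in $\Pcal_k = L^2(S_k, \Hcal_{\kmone})$ gives a double sum $\int_{S_k} \sum_{j,j'} w_k^{ij}(v) w_k^{ij'}(v) \langle f_{\kmone}^j, f_{\kmone}^{j'}\rangle \, d\nu_k(v)$, and the natural way to bound this is to apply Cauchy--Schwarz (or an $\ell_2$ triangle inequality argument over the $p_{\kmone}$ terms) together with the induction hypothesis $\|f_{\kmone}^j\|^2 \leq B_{\kmone,j}$. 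The extra factor of $p_{\kmone}$ appearing in $B_{k,i} = C_\activ^2(p_{\kmone} \sum_j \|w_k^{ij}\|_2^2 B_{\kmone,j})$ is exactly what one picks up from bounding the cross terms of the double sum by switching from an inner-product estimate to the sum of squared magnitudes via Cauchy--Schwarz over the $j$-index. Using monotonicity of $C_\activ^2$ (which holds since it is a power series with nonnegative coefficients, inherited from the $a_j^2/b_j \geq 0$ structure), I would then conclude $\|f_k^i\|^2 \leq C_\activ^2(\|g_k^i\|^2) \leq B_{k,i}$, closing the induction.

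For the final prediction layer, the same type of estimate applies to $g_\sigma(u) = \sum_{j=1}^{p_n} w_{n+1}^j(u) f_n^j$, viewed as an element of $L^2(\Omega, \Hcal_n)$. Since the prediction kernel $\Kcal_n$ is the linear kernel~\eqref{eq:linear_prediction_kernel} and $f: x_0 \mapsto \langle g_\sigma, x_n\rangle$, the RKHS norm is just $\|f\|^2 = \|g_\sigma\|^2_{L^2(\Omega,\Hcal_n)}$. Expanding this norm as above and applying Cauchy--Schwarz over the $p_n$ channels together with the layer-$n$ bound $\|f_n^j\|^2 \leq B_{n,j}$ yields $\|f\|^2 \leq p_n \sum_{i=1}^{p_n} \|w_{n+1}^i\|_2^2 B_{n,i}$, which is the claimed inequality. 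The identification $f = f_\sigma$ is assumed to be established separately in Appendix~\ref{sub:cnn_construction_rkhs}.

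The main obstacle I anticipate is handling the cross terms in the double sum cleanly, specifically justifying the appearance of the $p_{\kmone}$ factor. The inner products $\langle f_{\kmone}^j, f_{\kmone}^{j'}\rangle$ need not be nonnegative, so one cannot naively bound them by $\sqrt{B_{\kmone,j} B_{\kmone,j'}}$ and sum; instead I expect the argument to group the sum as a norm of a vector-valued object and invoke the Cauchy--Schwarz inequality $\big(\sum_j a_j b_j\big)^2 \leq p_{\kmone} \sum_j a_j^2 b_j^2$ or an analogous pointwise estimate inside the integral over $S_k$, which is precisely where the dimension factor $p_{\kmone}$ enters. A secondary technical point is ensuring $C_\activ^2(\|g_k^i\|^2) < \infty$ at each step so that Lemma~\ref{lemma:activations} applies, which follows from the finiteness assumption $C_\activ(a) < \infty$ for all $a \geq 0$ in the proposition's hypothesis.
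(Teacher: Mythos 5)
Your proposal is correct and follows essentially the same route as the paper's proof: Lemma~\ref{lemma:activations} to bound each $\|f_k^i\| \leq C_\activ(\|g_k^i\|^2)$, the inequality $\|\sum_j a_j\|^2 \leq p_{\kmone}\sum_j \|a_j\|^2$ (your Cauchy--Schwarz over the channel index) to produce the $p_{\kmone}$ factor, monotonicity of $C_\activ^2$ to close the recursion, and Theorem~\ref{thm:rkhs} to pass from $g_\sigma$ to the RKHS norm at the prediction layer. Two minor points that do not affect your argument: the RKHS norm satisfies only $\|f_\sigma\| \leq \|g_\sigma\|$ (it is an infimum over representations, not an equality), and your worry about cross terms is unfounded, since bounding $|\langle f_{\kmone}^j, f_{\kmone}^{j'}\rangle| \leq \|f_{\kmone}^j\|\,\|f_{\kmone}^{j'}\|$ termwise and then applying Cauchy--Schwarz to the resulting squared sum also yields the claimed factor.
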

Note that this upper bound need not grow exponentially with depth when the filters have small norm and $C_\sigma$ takes small values around zero.
However, the dependency of the bound on the number of feature maps~$p_k$ of each layer~$k$ may not be satisfactory
in situations where the number of parameters is very large, which is common in successful deep learning architectures.
The following proposition removes this dependence, relying instead on matrix spectral norms.
Similar quantities have been used recently to obtain useful generalization bounds for neural networks~\citep{bartlett2017spectrally,neyshabur2017pac}.

\begin{proposition}[RKHS norm of CNNs using spectral norms]\label{prop:cnns_spectral}
Assume the activation~$\activ$ satisfies~$C_\activ(a) < \infty$ for all~$a \geq 0$, where~$C_\activ$ is
defined for a given kernel in Lemma~\ref{lemma:activations}.
Then, the CNN function~$f_\activ$ defined above is in the RKHS~$\Hcal_{\mathcal{K}_n}$, with norm
\begin{equation}
\label{eq:spectral_bound}
\|f_{\sigma}\|^2 \leq \|w_{n+1}\|^2 ~ C_\sigma^2(\|W_n\|_2^2 ~ C_\sigma^2(\|W_{n-1}\|_2^2 \ldots C_\sigma^2(\|W_2\|_2^2~C_\sigma^2(\|W_1\|_F^2)) \ldots)).
\end{equation}
The norms are defined as follows:
\begin{align*}
\|W_k\|_2^2 &= \int_{S_k} \|W_k(u)\|_2^2 d \nu_k(u),~~ \text{ for }k = 2, \ldots, n \\
\|W_1\|_F^2 &= \int_{S_1} \|W_1(u)\|_F^2 d \nu_1(u),
\end{align*}
where~$W_k(u)$ is the matrix $(w_k^{ij}(u))_{ij}$, $\| \cdot \|_2$ the spectral norm, and $\| \cdot \|_F$ the Frobenius norm.
\end{proposition}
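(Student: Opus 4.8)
The plan is to propagate a single scalar bound on the quantity $b_k := \sum_{i=1}^{p_k} \|f_k^i\|^2$ through the layers, where the $f_k^i \in \Hcal_k$ are the intermediate RKHS elements from the recursive construction, and to show $b_k \leq \gamma_k$ for the scalars defined by $\gamma_1 = C_\sigma^2(\|W_1\|_F^2)$ and $\gamma_k = C_\sigma^2(\|W_k\|_2^2 \gamma_{k-1})$. The target bound \eqref{eq:spectral_bound} then reads $\|f_\sigma\|^2 \leq \|w_{n+1}\|^2 \gamma_n$, and will follow by treating the linear prediction layer as a one-output instance of the same propagation step. Throughout I would use that $a \mapsto C_\sigma^2(a) = \sum_{j \geq 0} (a_j^2/b_j)\, a^j$ is nonnegative, nondecreasing, and convex.

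First I would establish the pointwise spectral inequality that replaces the crude $p_k$-dependent bound of Proposition~\ref{prop:cnns}. Fix $v \in S_k$ and write $g_k^i(v) = \sum_{j=1}^{p_{k-1}} w_k^{ij}(v) f_{k-1}^j$ with matrix $W_k(v) = (w_k^{ij}(v))_{ij}$. Viewing $(f_{k-1}^j)_j$ as a vector with entries in $\Hcal_{k-1}$ and letting $M = (\langle f_{k-1}^j, f_{k-1}^{j'}\rangle)_{jj'}$ be its (positive semidefinite) Gram matrix, a direct expansion gives
\begin{equation*}
\sum_{i=1}^{p_k} \|g_k^i(v)\|^2 = \Tr\big(W_k(v)^\top W_k(v)\, M\big) \leq \|W_k(v)\|_2^2\, \Tr(M) = \|W_k(v)\|_2^2 \sum_{j=1}^{p_{k-1}} \|f_{k-1}^j\|^2,
\end{equation*}
where the inequality holds because $W_k(v)^\top W_k(v)$ is positive semidefinite with eigenvalues at most $\|W_k(v)\|_2^2$, so that $\Tr((\|W_k(v)\|_2^2 I - W_k(v)^\top W_k(v))\, M) \geq 0$. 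Integrating over $v \in S_k$ against $\nu_k$ and pulling out the constant $\sum_j \|f_{k-1}^j\|^2$ yields $\sum_i \|g_k^i\|^2 \leq \|W_k\|_2^2\, b_{k-1}$, with $\|W_k\|_2^2 = \int_{S_k} \|W_k(v)\|_2^2\, d\nu_k(v)$ exactly as defined in the proposition.

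Next I would invoke Lemma~\ref{lemma:activations}, giving $\|f_k^i\|^2 \leq C_\sigma^2(\|g_k^i\|^2)$, and then collapse the sum over $i$ using superadditivity of $C_\sigma^2$. Since the constant term $a_0^2/b_0$ vanishes (as $\sigma(0) = 0$, i.e.\ $a_0 = 0$) and all other coefficients are nonnegative, the elementary inequality $\sum_i t_i^j \leq (\sum_i t_i)^j$, valid for $j \geq 1$ and nonnegative $t_i$, gives $\sum_i C_\sigma^2(t_i) \leq C_\sigma^2(\sum_i t_i)$. Combined with the previous step and monotonicity,
\begin{equation*}
b_k = \sum_i \|f_k^i\|^2 \leq \sum_i C_\sigma^2(\|g_k^i\|^2) \leq C_\sigma^2\Big(\sum_i \|g_k^i\|^2\Big) \leq C_\sigma^2(\|W_k\|_2^2\, b_{k-1}),
\end{equation*}
which is the recursion $b_k \leq \gamma_k$ by induction. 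The base case $b_1 \leq C_\sigma^2(\|W_1\|_F^2)$ follows identically from $g_1^i = w_1^i$ and $\sum_i \|w_1^i\|^2 = \|W_1\|_F^2$; the Frobenius norm appears here because the first layer sums over all input channels at once rather than factoring through a spectral norm.

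Finally, for the linear prediction layer I would apply the step-one inequality to the single-output map $g_\sigma(u) = \sum_j w_{n+1}^j(u) f_n^j$, whose relevant $1 \times p_n$ matrix has spectral norm $(\sum_j |w_{n+1}^j(u)|^2)^{1/2}$; this gives $\|g_\sigma\|^2 \leq \|w_{n+1}\|^2 b_n$, and together with $\|f_\sigma\| \leq \|g_\sigma\|$ (from the linear-kernel RKHS and the identity $f_\sigma = f$ established in Appendix~\ref{sub:cnn_construction_rkhs}) and $b_n \leq \gamma_n$ we recover \eqref{eq:spectral_bound}. I expect the main obstacle to be the spectral inequality of the first step: it is the single ingredient that removes the dimension factors $p_k$ of Proposition~\ref{prop:cnns}, and it hinges on correctly handling Hilbert-space-valued coefficient vectors through the Gram matrix and the eigenvalue bound on $W_k(v)^\top W_k(v)$. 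A secondary subtlety is the superadditivity step, which genuinely requires $C_\sigma^2(0) = 0$ (equivalently $\sigma(0) = 0$); otherwise each layer reintroduces an additive term scaling with the number of feature maps.
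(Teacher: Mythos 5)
Your proof is correct and follows essentially the same route as the paper's: the key pointwise inequality $\sum_i \|g_k^i(v)\|^2 \leq \|W_k(v)\|_2^2 \sum_j \|f_{k-1}^j\|^2$ (which the paper establishes by expanding in an orthonormal basis of the Hilbert space and reducing to scalar coordinate vectors, rather than via your equivalent Gram-matrix argument $\Tr(W_k(v)^\top W_k(v) M) \leq \|W_k(v)\|_2^2 \Tr(M)$), followed by superadditivity of $C_\sigma^2$, the layerwise recursion, and Cauchy--Schwarz for the final linear prediction layer. Your explicit caveat that superadditivity requires the constant term $a_0^2/b_0$ of $C_\sigma^2$ to vanish (\ie, $\sigma(0)=0$) is in fact more careful than the paper, whose justification that ``polynomials with non-negative coefficients are super-additive on non-negative numbers'' is only valid when the constant coefficient is zero.
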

As an example, if we consider $\kappa_1 = \cdots = \kappa_n$ to be one of the kernels introduced in Section~\ref{sec:kernel_construction} and take $\sigma = \kappa_1$ so that~$C_\sigma^2(\lambda^2) = \kappa_1(\lambda^2)$,
then constraining the norms at each layer to be smaller than 1 ensures $\|f_\sigma\| \leq 1$, since for $\lambda \leq 1$ we have $C_\sigma^2(\lambda^2) \leq C_\sigma^2(1) = \kappa_1(1) = 1$.
If we consider linear kernels and $\sigma(u) = u$, we have $C_\sigma^2(\lambda^2) = \lambda^2$ and the bound becomes $\|f_\sigma\| \leq \|w_{n+1}\| \|W_n\|_2 \cdots \|W_2\|_2 \|W_1\|_F$.
If we ignore the convolutional structure (\ie, only taking 1x1 patches on a 1x1 image),
the norm involves a product of spectral norms at each layer (ignoring the first layer),
a quantity which also appears in recent generalization bounds~\citep{bartlett2017spectrally,neyshabur2017pac}.
While such quantities have proven useful to explain some generalization phenomena, such as the
behavior of networks trained on data with random labels~\citep{bartlett2017spectrally,zhang2016understanding},
some authors have pointed out that spectral norms may yield overly pessimistic generalization bounds
when comparing with simple parameter counting~\citep{arora2018stronger},
and our results may display similar drawbacks.
We note, however, that Proposition~\ref{prop:cnns_spectral} only gives an upper bound, and the actual RKHS
norm may be smaller in practice. It may also be that the norm is not well controlled during training,
and that the obtained bounds may not fully explain the generalization behavior observed in practice.
Using such quantities to regularize during training
may then yield bounds that are less vacuous~\citep{bietti2018regularization}.

\paragraph{Generalization and stability.}
The results of this section imply that our study of the geometry of the
kernel representations, and in particular the stability and invariance
properties of Section~\ref{sec:stability}, apply to the generic CNNs defined
above, thanks to the Lipschitz smoothness relation~\eqref{eq:cs}.
The smoothness is then controlled by the RKHS norm of these
functions, which sheds light on the links between generalization and stability.
In particular, functions with low RKHS norm provide better generalization guarantees
on unseen data, as shown by the margin bound in Eq.~\eqref{eq:margin_bound}.
This implies, for instance, that generalization is harder if the task requires classifying two
slightly deformed images with different labels,
since separating such predictions by some margin requires a function with large RKHS norm
according to our stability analysis.
In contrast, if a stable function (\ie, with small RKHS norm) is sufficient to do well on a training set, learning becomes ``easier'' and few samples may be enough for good generalization.

\subsection{Stability and Generalization with Generic Activations}
\label{sub:generic_activations}

Our study of stability and generalization so far has relied on kernel methods,
which allows us to separate learned models from data representations
in order to establish tight connections between the stability of representations
and statistical properties of learned CNNs through RKHS norms.
One important caveat, however, is that our study is limited to CNNs with
a class of smooth and homogeneous activations described in Section~\ref{sub:rkhs_activations},
which differ from generic activations used in practice such as ReLU or tanh.
Indeed, ReLU is homogeneous but lacks the required smoothness, while tanh is not homogeneous.
In this section, we show that our stability results can be extended to the predictions of CNNs with such
activations, and that stability is controlled by a quantity based on spectral norms,
which plays an important role in recent results on generalization.
This confirms a strong connection between stability and generalization in this more
general context as well.

\paragraph{Stability bound.}
We consider an activation function~$\sigma: \R \to \R$ that is $\rho$-Lipschitz and satisfies $\sigma(0) = 0$.
Examples include ReLU and tanh activations, for which $\rho = 1$.
The CNN construction is similar to Section~\ref{sub:cnns_rkhs}
with feature maps~$z_k$ in $L^2(\Omega,\R^{p_k})$, and a final prediction function~$f_\sigma$
defined with an inner product $\langle w_{n+1}, z_n \rangle$.
The only change is the non-linear mapping in Eq.~(\ref{eq:cnn}),
which is no longer homogeneized, and can be rewritten as
\begin{align*}
\tilde{z}_k(u) &= \varphi_k(P_k z_{\kmone}(u))
\defin \sigma\! \left( \int_{S_k} W_k(v) (P_k z_{\kmone}(u))(v) d \nu_k(v) \right),
\end{align*}
where $\sigma$ is applied component-wise. The non-linear mapping~$\varphi_k$ on patches 
satisfies
\[
\|\varphi_k(z) - \varphi_k(z')\| \leq \rho_k \|z - z'\| \quad \text{ and } \quad \|\varphi_k(z)\| \leq \rho_k \|z\|,
\]
where $\rho_k = \rho \|W_k\|_2$ and $\|W_k\|_2$ is the spectral norm of $W_k : L^2(S_k, \R^{p_\kmone}) \to \R^{p_k}$
defined~by
\[
\|W_k\|_2 = \sup_{\|z\| \leq 1} \left\|\int_{S_k} W_k(v) z(v) d \nu_k(v) \right\|.
\]
We note that this spectral norm is slightly different than the mixed norm used in Proposition~\ref{prop:cnns_spectral}.
By defining an operator~$M_k$ that applies~$\varphi_k$ pointwise as in Section~\ref{sec:kernel_construction},
the construction of the last feature map takes the same form as that of the multilayer kernel representation,
so that the results of Section~\ref{sec:stability} apply,
leading to the following stability bound on the final predictions:
\begin{equation}
\label{eq:generic_cnn_stability}
| f_\sigma(L_\tau x) - f_\sigma(x) | \leq \rho^n \|w_{n+1}\| \left(\prod_k \|W_k\|_2 \right) \left( C_1 \left(1 + n \right) \|\nabla \tau\|_\infty
		 + \frac{C_2}{\sigma_n} \|\tau\|_\infty \right) \|x\|.
\end{equation}

\paragraph{Link with generalization.}
The stability bound~\eqref{eq:generic_cnn_stability} takes a similar form to the one obtained for
CNNs in the RKHS, with the RKHS norm replaced by the product of spectral norms.
In contrast to the RKHS norm, such a quantity does not directly lead to generalization bounds;
however, a few recent works have provided meaningful generalization bounds for deep neural networks
that involve the product of spectral norms~\citep{bartlett2017spectrally,neyshabur2017pac}.
Thus, this suggests that stable CNNs have better generalization properties,
even when considering generic CNNs with ReLU or tanh activations.
Nevertheless, these bounds typically involve an additional factor consisting of other matrix norms
summed across layers, which may introduce some dependence on the number of parameters,
and do not directly support convolutional structure.
In contrast, our RKHS norm bound based on spectral norms given in Proposition~\ref{prop:cnns_spectral}
directly supports convolutional structure, and has no dependence on the number of parameters.

\section{Discussion and Concluding Remarks}
\label{sec:discussion_conclusion}

In this paper, we introduce a multilayer convolutional kernel representation (Section~\ref{sec:kernel_construction});
we show that it is stable to the action of diffeomorphisms, and that it can be made invariant to groups of transformations (Section~\ref{sec:stability});
and finally we explain connections between our representation and generic convolutional networks
by showing that certain classes
of CNNs with smooth activations are contained in the RKHS of the
full multilayer kernel (Section~\ref{sec:link_with_cnns}). A consequence of this last result is that
the stability results of Section~\ref{sec:stability} apply to any CNN function~$f$ from that class,
by using the relation
\begin{equation*}
|f(L_\tau x) - f(x)| \leq \|f\| \|\Phi_n(L_\tau x) - \Phi_n(x) \|,
\end{equation*}
which follows from~(\ref{eq:cs}), assuming a linear prediction layer.
In the case of CNNs with generic activations such as ReLU,
the kernel point of view is not applicable, and the separation between model and representation
is not as clear. However, we show in Section~\ref{sub:generic_activations} that a similar
stability bound can be obtained, with the product of spectral norms at each layer
playing a similar role to the RKHS norm of the CNN.
In both cases, a quantity that characterizes complexity of a model appears in the final
bound on predicted values --- either the RKHS norm or the product of spectral norms ---,
and this complexity measure is also closely related to generalization.
This implies that learning with stable CNNs is ``easier'' in terms of sample complexity,
and that the inductive bias of CNNs is thus suitable to tasks that present some invariance
under translation and small local deformation, as well as more general transformation groups,
when the architecture is appropriately constructed.

In order to ensure stability, the previous bounds suggest that one should control
the RKHS norm~$\|f\|$, or the product of spectral norms when using generic activations;
however, these quantities are difficult to control with standard approaches to learning CNNs,
such as backpropagation.
In contrast, traditional kernel methods typically control this norm by using it as an explicit regularizer
in the learning process, making such a stability guarantee more useful. In order to avoid the
scalability issues of such approaches, convolutional kernel networks
approximate the full kernel map~$\Phi_n$ by taking appropriate projections as explained in Section~\ref{subsec:ckn},
leading to a representation~$\tilde{\Phi}_n$ that can be represented with a practical
representation~$\psi_n$ that preserves the Hilbert space structure isometrically
(using the finite-dimensional descriptions of points in the RKHS given in~\eqref{eq:ckn_proj_repr}).
Section~\ref{sub:stability_with_approximation} shows that such representations
satisfy the same stability and invariance results as the full representation,
at the cost of losing information.
Then, if we consider a CKN function of the form~$f_w(x) = \langle w, \psi_n(x) \rangle$,
stability is obtained thanks to the relation
\begin{equation*}
|f_w(L_\tau x) - f_w(x)| \leq \|w\| \|\psi_n(L_\tau x) - \psi_n(x) \| = \|w\| \|\tilde{\Phi}_n(L_\tau x) - \tilde{\Phi}_n(x)\|.
\end{equation*}
In particular, learning such a function by controlling the norm of~$w$, \eg, with~$\ell_2$ regularization,
provides a natural way to explicitly control stability.
In the context of CNNs with generic activations, it has been
suggested \citep[see, \eg,~][]{zhang2016understanding} that optimization algorithms may play
an important role in controlling their generalization ability,
and it may be plausible that these impact the RKHS norm of a learned CNN, or its spectral norms.
A better understanding of such implicit regularization behavior would be interesting,
but falls beyond the scope of this paper.
Nevertheless, modern CNNs trained with SGD have been found to be highly unstable to small, additive
perturbations known as ``adversarial examples''~\citep{szegedy2013intriguing}, which suggests that
the RKHS norm of these models may be quite large, and that controlling it explicitly during learning
might be important to learn more stable models~\citep{bietti2018regularization,cisse2017parseval}.

\acks{This work was supported by a grant from ANR (MACARON project under grant
number ANR-14-CE23-0003-01), by the ERC grant number 714381 (SOLARIS project),
and from the MSR-Inria joint centre. }

\newpage
\appendix
\renewcommand{\theHsection}{A\arabic{section}}

\section{Useful Mathematical Tools} 
\label{sec:basic_tools}

In this section, we present preliminary mathematical tools that are used in our
analysis.
\paragraph{Harmonic analysis.}
We recall a classical result from harmonic analysis \citep[see, \eg,~][]{stein1993harmonic}, which was used many times by~\citet{mallat2012group} to prove
the stability of the scattering transform to the action of diffeomorphisms.
\begin{appxlemma}[Schur's test]\label{lemma:schur}
   Let $\Hcal$ be a Hilbert space and $\Omm$ a subset of $\Real^d$.
   Consider $T$ an integral operator with kernel $k: \Omm \times \Omm \to \Real$, meaning that for all~$u$ in~$\Omm$ and $x$ in~$L^2(\Omm,\Hcal)$,
   \begin{equation}
Tx(u) =
      \int_\Omm k(u,v) x(v) dv, \label{eq:schur}
   \end{equation}
   where the integral is a Bochner integral \citep[see,~][]{diestel,muandet2017kernel} when $\Hcal$ is infinite-dimensional.
   If 
   \begin{equation*}
\forall u \in \Omm, ~~~\int |k(u,v)| dv \leq C \quad \text{ and } \quad \forall v \in \Omm, ~~~\int |k(u,v)| du \leq C,
\end{equation*}
for some constant $C$,
then, $Tx$ is always in $L^2(\Omm,\Hcal)$ for all $x$ in $L^2(\Omm,\Hcal)$ and we have $\|T\| \leq C$.
\end{appxlemma}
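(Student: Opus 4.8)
The plan is to prove the operator-norm bound $\|T\| \le C$ directly, from which membership $Tx \in L^2(\Omm,\Hcal)$ follows automatically; the key tool is a Cauchy--Schwarz argument applied after symmetrically splitting the absolute value of the kernel into $|k|^{1/2}\cdot|k|^{1/2}$, exactly as in the classical scalar version of Schur's test. The only extra care needed over the scalar case concerns the Bochner integral and the exchange of the order of integration.

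First I would control the integrand pointwise. Using the triangle inequality for the Bochner integral, $\|Tx(u)\| \le \int_\Omm |k(u,v)|\,\|x(v)\|\,dv$. Writing $|k(u,v)| = |k(u,v)|^{1/2}\cdot|k(u,v)|^{1/2}$ and applying Cauchy--Schwarz to this scalar integral gives
\[
\|Tx(u)\| \le \left(\int_\Omm |k(u,v)|\,dv\right)^{1/2}\left(\int_\Omm |k(u,v)|\,\|x(v)\|^2\,dv\right)^{1/2}.
\]
The first factor is at most $C^{1/2}$ by the row-sum hypothesis, hence, after squaring,
\[
\|Tx(u)\|^2 \le C\int_\Omm |k(u,v)|\,\|x(v)\|^2\,dv .
\]

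The second step is to integrate this inequality over $u$ and swap the order of integration. Since the integrand $|k(u,v)|\,\|x(v)\|^2$ is non-negative, Tonelli's theorem justifies the exchange, yielding
\[
\|Tx\|^2 = \int_\Omm \|Tx(u)\|^2\,du \le C\int_\Omm \|x(v)\|^2\left(\int_\Omm |k(u,v)|\,du\right)dv .
\]
The inner integral is at most $C$ by the column-sum hypothesis, so $\|Tx\|^2 \le C^2\|x\|^2$. This establishes simultaneously that $Tx \in L^2(\Omm,\Hcal)$ and that $\|T\| \le C$.

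The main obstacle is purely technical and lies in the measure-theoretic bookkeeping for the Hilbert-space-valued (Bochner) integral rather than in the estimate itself: one must check that $v \mapsto k(u,v)x(v)$ is Bochner integrable for almost every~$u$, so that $Tx(u)$ is well defined, that $u \mapsto Tx(u)$ is measurable, and that the Bochner triangle inequality $\|\int f\| \le \int \|f\|$ applies. All of these follow from the finiteness obtained along the way: the pointwise estimate shows that $\int_\Omm |k(u,v)|\,\|x(v)\|\,dv < \infty$ for almost every~$u$ whenever $x \in L^2(\Omm,\Hcal)$, since its square integrates to a finite quantity by the very bound derived above. Apart from these standard verifications, the argument is an essentially verbatim transcription of the scalar proof.
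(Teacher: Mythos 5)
Your proof is correct, and it shares its crucial first step with the paper's argument: both use the triangle inequality for Bochner integrals, $\|Tx(u)\| \leq \int_\Omm |k(u,v)|\,\|x(v)\|\,dv$, to reduce the whole problem to the scalar kernel $|k|$ acting on the scalar function $u \mapsto \|x(u)\|$. The two proofs diverge after that. The paper stops at this reduction: it introduces the scalar integral operator $|T|$ with kernel $|k|$ on $L^2(\Omm,\R_{+})$, observes that $\|Tx\| \leq \| |T|\,|x| \|$ where $|x|(u) = \|x(u)\|$, and then invokes the classical scalar Schur test, $\||T|\| \leq C$, as a known result from harmonic analysis rather than proving it. You instead prove the scalar estimate outright, via the standard splitting $|k| = |k|^{1/2}\cdot|k|^{1/2}$, Cauchy--Schwarz, and Tonelli's theorem to exchange the order of integration. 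In effect, your argument is the paper's reduction with the cited scalar lemma unrolled inside it: it is fully self-contained and more elementary, at the cost of some length, whereas the paper's version isolates the vector-valued extension as the only genuinely new content and delegates the rest to a reference. Your handling of the measure-theoretic side conditions is also sound --- since Cauchy--Schwarz and Tonelli hold for arbitrary non-negative measurable integrands, the finiteness of $\int_\Omm \bigl(\int_\Omm |k(u,v)|\,\|x(v)\|\,dv\bigr)^2 du$ can be established first, which then gives Bochner integrability of $v \mapsto k(u,v)x(v)$ for almost every $u$ without circularity; the paper glosses over this point entirely.
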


Note that while the proofs of the lemma above are typically given for real-valued functions in~$L^2(\Omm, \R)$, the result can easily be extended to Hilbert space-valued functions $x$ in $L^2(\Omm, \Hc)$. In order to prove this, we consider the integral operator $|T|$ with kernel~$|k|$ that operates on $L^2(\Omm,\R_{+})$, meaning that $|T|$ is defined as in~(\ref{eq:schur}) by replacing $k(u,v)$ by the absolute value $|k(u,v)|$. Then, consider $x$ in $L^2(\Omm,\Hcal)$ and use the triangle inequality property of Bochner integrals:
\begin{align*}
\|Tx\|^2 = \int_\Omm \|Tx(u)\|^2 du \leq \int_\Omm \left( \int_\Omm |k(u,v)| \|x(v)\| dv \right)^2 du = \||T| |x| \|^2,
\end{align*}
where the function $|x|$ is such that $|x|(u) = \|x(u)\|$ and thus $|x|$ is in $L^2(\Omm,\R_{+})$. We may now apply Schur's test to the operator $|T|$ for real-valued functions, which gives $\||T|\| \leq C$.
Then, noting that $\||x|\| = \|x\|$, we conclude with the inequality $\|Tx\|^2 \leq \||T| |x| \|^2 \leq \||T|\|^2 \|x\|^2 \leq C^2 \|x\|^2$.

The following lemma shows that the pooling operators~$A_k$ defined in Section~\ref{sec:kernel_construction} are non-expansive.
\begin{appxlemma}[Non-expansiveness of pooling operators]\label{lemma:pooling}
If~$h(u) := (2 \pi)^{-d/2} \exp(-|u|^2/2)$, then the pooling operator~$A_\sigma$ defined for any~$\sigma > 0$ by
\begin{equation*}
A_\sigma x(u) = \int_{\R^d} \sigma^{-d} h\left(\frac{u - v}{\sigma}\right) x(v) dv,
\end{equation*}
has operator norm~$\|A_\sigma\| \leq 1$.
\end{appxlemma}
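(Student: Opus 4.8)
The plan is to recognize $A_\sigma$ as an integral operator and apply Schur's test (Lemma~\ref{lemma:schur}) with constant $C = 1$. Writing $A_\sigma x(u) = \int_{\R^d} k(u,v)\, x(v)\, dv$, the kernel is $k(u,v) = \sigma^{-d} h\!\left(\frac{u-v}{\sigma}\right)$. Since $h$ is the standard Gaussian density, it is nonnegative, so $|k| = k$ and I only need to bound the two marginal integrals of $k$ itself.

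First I would compute the row integral $\int_{\R^d} k(u,v)\, dv$ for fixed $u$. The substitution $w = (u-v)/\sigma$ gives $dv = \sigma^d\, dw$, the factor $\sigma^{-d}$ cancels, and what remains is $\int_{\R^d} h(w)\, dw$. The normalization constant $(2\pi)^{-d/2}$ in front of $h$ is precisely what makes this Gaussian integral equal to $1$. Since $k(u,v)$ depends on $u$ and $v$ only through the difference $u - v$, the column integral $\int_{\R^d} k(u,v)\, du$ for fixed $v$ is identical under the same change of variables, and also equals $1$.

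With both marginal integrals bounded uniformly by $C = 1$, Schur's test immediately yields $\|A_\sigma\| \leq 1$. I would note that the Hilbert-space-valued version of Schur's test, justified in the discussion following Lemma~\ref{lemma:schur}, is what allows this to apply to signals $x$ in $L^2(\R^d, \Hc)$ rather than merely scalar-valued functions. There is essentially no genuine obstacle here: the only point requiring any care is that the Gaussian normalization is tuned so that the bounding constant is exactly $1$ rather than some larger value, which is what delivers the non-expansiveness (operator norm at most $1$) as opposed to mere boundedness of the operator.
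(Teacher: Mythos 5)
Your proof is correct. It differs mildly from the paper's own argument: the paper's appendix proof first reduces to scalar-valued signals via the pointwise-norm trick, bounding $\|A_\sigma x\| \leq \| |A_\sigma| |x| \| = \| h_\sigma \ast |x| \|$ with $|x|(u) = \|x(u)\|$, and then concludes with Young's convolution inequality, $\| h_\sigma \ast |x| \| \leq \|h_\sigma\|_1 \, \|x\| = \|x\|$. You instead verify the hypotheses of Schur's test (Lemma~\ref{lemma:schur}) directly, computing both marginal integrals of the kernel $k(u,v) = \sigma^{-d} h((u-v)/\sigma)$ and finding them equal to $1$ by the Gaussian normalization. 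The two routes are essentially equivalent here—for a convolution kernel, Schur's test with constant $\|h_\sigma\|_1$ \emph{is} the $L^1 \ast L^2 \to L^2$ case of Young's inequality—and both hinge on the single fact that $h_\sigma$ has unit mass. If anything, your version is the one the paper's main text advertises (``By applying Schur's test to the integral operator $A_k$\dots''), and it has the advantage of being self-contained given the paper's Hilbert-valued extension of Schur's test, whereas the paper's appendix proof imports Young's inequality as an external standard result; the paper's version, in exchange, is a one-line argument once the reduction to $|x|$ is in place. Your closing remark is also apt: the content of the lemma is not boundedness but the exact constant $1$, which is what the normalization of $h$ delivers.
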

\begin{proof}
With the notations from above, we have $\|A_\sigma x\| \leq \| |A_\sigma| |x| \| = \| h_\sigma \ast |x| \|$,
where $h_\sigma := \sigma^{-d} h (\cdot / \sigma)$ and~$\ast$ denotes convolution.
By Young's inequality, we have $\| h_\sigma \ast |x| \| \leq \|h_\sigma\|_1 \cdot \||x|\| = 1 \cdot \||x|\| = \|x\|$,
which concludes the proof.
\end{proof}

\paragraph{Kernel methods.}
We now recall a classical result that characterizes the reproducing kernel
Hilbert space (RKHS) of functions defined from explicit Hilbert space mappings
\citep[see, \eg,~][\S 2.1]{saitoh1997integral}.
\begin{appxtheorem}
\label{thm:rkhs}
Let~$\psi: \Xcal \to H$ be a feature map to a Hilbert space~$H$, and let~$K(z, z') := \langle \psi(z), \psi(z') \rangle_H$ for~$z, z' \in \Xcal$. Let~$\Hcal$ be the linear subspace defined by
\begin{displaymath}
   \Hcal := \{ f_w ~;~ w \in H \} \st f_w: z \mapsto \langle w, \psi(z) \rangle_H,
\end{displaymath}
and consider the norm
\begin{displaymath}
   \|f_w\|_{\Hcal}^2 := \inf_{w'\in H} \{ \|w'\|_H^2  \st f_w  = f_{w'} \}.
\end{displaymath}
Then $\Hcal$ is the reproducing kernel Hilbert space associated to kernel~$K$.
\end{appxtheorem}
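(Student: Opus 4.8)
The plan is to exhibit $\Hcal$ explicitly as the image of $H$ under the linear map $w \mapsto f_w$, identify the resulting quotient with a closed subspace of $H$, and then verify the reproducing property directly. First I would note that $K$ is positive definite, since $K(z,z') = \langle \psi(z), \psi(z')\rangle_H$ is a Gram kernel; this guarantees that an RKHS associated with $K$ exists and is unique (Moore--Aronszajn), so it suffices to show that $\Hcal$, equipped with the stated norm, \emph{is} an RKHS whose reproducing kernel is $K$. Let $V := \overline{\vect\{\psi(z) : z \in \Xcal\}} \subseteq H$ be the closed linear span of the feature vectors, and let $N := \{w \in H : f_w \equiv 0\}$ denote the null space of the map $w \mapsto f_w$. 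Since $f_w \equiv 0$ means $\langle w, \psi(z)\rangle_H = 0$ for all $z \in \Xcal$, we have exactly $N = V^\perp$.

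Next I would make the infimum in the norm definition explicit. For a fixed $w$, the set of representatives $\{w' \in H : f_{w'} = f_w\}$ is the affine subspace $w + N = w + V^\perp$, so writing $w = P_V w + (w - P_V w)$ with $P_V$ the orthogonal projection onto $V$, the element of minimal norm in this set is $P_V w$, giving $\|f_w\|_\Hcal = \|P_V w\|_H$. This shows that $f_w \mapsto P_V w$ is a well-defined linear bijection from $\Hcal$ onto $V$ (it is injective because $f_w(z) = \langle P_V w, \psi(z)\rangle_H$ depends only on $P_V w$, and surjective since $P_V v = v$ for $v \in V$), and it is an isometry by construction of the norm. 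I would then transport the inner product, setting $\langle f_u, f_w \rangle_\Hcal := \langle P_V u, P_V w\rangle_H$; this is independent of the chosen representatives because any two representatives of a given function differ by an element of $V^\perp$, and polarization confirms compatibility with the norm. Completeness of $\Hcal$ then follows immediately, since $V$ is a closed subspace of the Hilbert space $H$, hence complete, and completeness transfers through the isometry.

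Finally I would check the reproducing property. Since $\psi(z) \in H$, the function $K_z := K(z, \cdot)$ belongs to $\Hcal$ and equals $f_{\psi(z)}$, because $K(z, z') = \langle \psi(z), \psi(z')\rangle_H = f_{\psi(z)}(z')$. Using that $\psi(z) \in V$ (so $P_V \psi(z) = \psi(z)$) together with the self-adjointness of $P_V$, for any $f_w \in \Hcal$ I obtain
\[
\langle f_w, K_z \rangle_\Hcal = \langle P_V w, P_V \psi(z)\rangle_H = \langle P_V w, \psi(z)\rangle_H = \langle w, \psi(z)\rangle_H = f_w(z),
\]
which is precisely the reproducing property. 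By uniqueness of the RKHS attached to a positive definite kernel, $\Hcal$ is therefore the RKHS of $K$. I expect the only delicate point to be verifying that the infimum-norm genuinely derives from an inner product and that $\Hcal$ is complete; both are handled cleanly once the minimal-norm representative is identified as the orthogonal projection $P_V w$, which reduces everything to the elementary geometry of the closed subspace $V \subseteq H$, with the remaining computations being routine.
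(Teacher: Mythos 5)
Your proof is correct and complete. Note that the paper itself gives no proof of this theorem---it is quoted as a classical result with a citation to Saitoh (\S 2.1)---and your argument, which identifies the null space of $w \mapsto f_w$ with $V^\perp$ for $V = \overline{\vect\{\psi(z) : z \in \Xcal\}}$, realizes the infimum by the orthogonal projection $P_V w$, transports the Hilbert structure of the closed subspace $V$ onto $\Hcal$, and then verifies the reproducing property (using $P_V\psi(z)=\psi(z)$ and self-adjointness of $P_V$) before invoking Moore--Aronszajn uniqueness, is precisely the standard proof of this classical fact; every delicate point (well-definedness of the transported inner product, attainment of the infimum, completeness, boundedness of evaluations via the reproducing identity) is handled correctly.
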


A consequence of this result is that the RKHS of the kernel~$\mathcal{K}_n(x, x') = \langle \Phi(x), \Phi(x') \rangle$,
defined from a given final representation~$\Phi(x) \in \Hcal_{n+1}$ such as the one introduced in Section~\ref{sec:kernel_construction},
contains functions of the form $f: x \mapsto \langle w, \Phi(x) \rangle$ with $w \in \Hcal_{n+1}$,
and the RKHS norm of such a function satisfies~$\|f\| \leq \|w\|_{\Hcal_{n+1}}$.


\section{Proofs Related to the Multilayer Kernel Construction} 
\label{sec:choices_of_kernels}

\subsection{Proof of Lemma~\ref{lemma:dp_kernels} and Non-Expansiveness of the Gaussian Kernel}
\label{sub:nonexp_proofs}
We begin with the proof of Lemma~\ref{lemma:dp_kernels} related to homogeneous dot-product kernels~\eqref{eq:dp_kernel_appx}.
\begin{proof}
   In this proof, we drop all indices $k$ since there is no ambiguity.
   We will prove the more general result that $\varphi$ is $\rho_k$-Lipschitz
   with $\rho_k = \max(1, \sqrt{\kappa'(1)})$ for any value of $\kappa'(1)$ (in particular,
   it is non-expansive when $\kappa'(1) \leq 1$).

   Let us consider the Maclaurin expansion $\kappa(u) =
   \sum_{j=0}^{+\infty} b_j u^j < +\infty$ with $b_j \geq 0$ for all $j$ and
   all $u$ in $[-1,+1]$. Recall that the condition $b_j \geq 0$ comes from the
   positive-definiteness of~$K$~\citep{schoenberg}.
   Then, we have $\kappa'(u) = \sum_{j=1}^{+\infty} j b_j u^{j-1}$.
   Noting that $j b_j u^{j-1} \leq j b_j$ for $u \in [-1, 1]$, we have $\kappa'(u) \leq \kappa'(1)$ on $[-1, 1]$.
   The fundamental theorem of calculus then yields, for $u \in [-1, 1]$,
   \begin{equation}
   \label{eq:kappa_inequality}
   \kappa(u) = \kappa(1) - \int_u^1 \kappa'(t) dt \geq \kappa(1) - \kappa'(1) (1 - u).
   \end{equation}
   Then, if $z, z' \neq 0$,
\begin{equation*}
\|\varphi(z) - \varphi(z')\|^2 = K(z,z) + K(z',z') - 2 K(z,z') = \|z\|^2 + \|z'\|^2 - 2\|z\| \|z'\| \kappa (u),
\end{equation*}
with $u= \langle z, z' \rangle / ( \|z\| \|z'\|)$. Using~\eqref{eq:kappa_inequality} with $\kappa(1) = 1$, we have
\begin{equation*}
   \begin{split}
      \|\varphi(z) - \varphi(z')\|^2 & \leq \|z\|^2 + \|z'\|^2 - 2\|z\| \|z'\|\left( 1-\kappa'(1) + \kappa'(1)u\right) \\ 
      & = (1-\kappa'(1)) \left( \|z\|^2 + \|z'\|^2 - 2\|z\| \|z'\| \right) + \kappa'(1)\left( \|z\|^2 + \|z'\|^2 - 2 \langle z, z' \rangle \right) \\
      & = (1-\kappa'(1)) \left| \|z\|- \|z'\|  \right|^2 + \kappa'(1)\|z-z'\|^2 \\
      &\leq \begin{cases}
         \|z - z'\|^2, &\text{ if }0 \leq \kappa'(1) \leq 1\\
         \kappa'(1) \|z - z'\|^2, &\text{ if }\kappa'(1) > 1
      \end{cases} \\
      & = \rho_k^2 \|z-z'\|^2,
   \end{split}
\end{equation*}
   with $\rho_k = \max(1, \sqrt{\kappa'(1)})$, which yields the desired result.
   Finally, we remark that we have shown the relation $\kappa(u) \geq \kappa(1)-\kappa'(1)
   + \kappa'(1)u$; when $\kappa'(1)=1$, this immediately
   yields~(\ref{eq:lower_linear}).

   If $z=0$ or $z' = 0$, the result also holds trivially. For example,
\begin{equation*}
\|\varphi(z) - \varphi(0)\|^2 = K(z,z) + K(0,0) - 2 K(z,0) = \|z\|^2 = \|z-0\|^2.
\end{equation*}
\end{proof}

   \paragraph{Non-expansiveness of the Gaussian kernel.}
We now consider the Gaussian kernel
\begin{equation*}
K(z, z') := e^{-\frac{\alpha}{2}\|z-z'\|^2},
\end{equation*}
with feature map~$\varphi$.
   We simply use the convexity inequality $e^u \geq 1 + u$ for all $u$, and 
\begin{equation*}
   \|\varphi(z) - \varphi(z')\|^2 = K(z,z) + K(z',z') - 2 K(z,z') = 2 - 2 e^{-\frac{\alpha}{2}\|z-z'\|^2} \leq \alpha \|z-z'\|^2.
\end{equation*}
In particular, $\varphi$ is non-expansive when~$\alpha \leq 1$.


\section{Proofs of Recovery and Stability Results} 
\label{sec:proofs}

\subsection{Proof of Lemma~\ref{lemma:signal_recovery}} 
\label{sub:proof_of_recovery}
\begin{proof}
We denote by~$\bar{\Omm}$ the discrete set of sampling points considered in this lemma.
The assumption on~$\bar{\Omm}$ can be written as $\{u + v ~;~ u \in \bar{\Omm}, v \in S_k\} = \Omega$.

Let~$B$ denote an orthonormal basis of the Hilbert space~$\Pcal_k = L^2(S_k,
\Hcal_{k-1})$, and define the linear function~$f_w$ in $\Hcal_k$ such that
   $f_w: z \mapsto \langle w, z \rangle $ for~$w$ in $\Pcal_k$. We thus have
\begin{align*}
P_k x_{\kmone}(u) &= \sum_{w \in B} \langle w, P_k x_{\kmone} (u) \rangle w \\
   &= \sum_{w \in B} f_w(P_k x_{\kmone}(u)) w \\
   &= \sum_{w \in B} \langle f_w, M_k P_k x_{\kmone}(u) \rangle w,
\end{align*}
using the reproducing property in the RKHS~$\Hcal_k$. Applying the pooling
operator~$A_k$ yields
\begin{align*}
A_k P_k x_{\kmone}(u) &= \sum_{w \in B} \langle f_w, A_k M_k P_k x_{\kmone}(u)\rangle w, \\
   &= \sum_{w \in B} \langle f_w, x_{k}(u)\rangle w.
\end{align*}
   Noting that $A_k P_k x_{\kmone} = A_k (L_v x_{\kmone})_{v \in S_k} = (A_k L_v x_{\kmone})_{v\in S_k} = 
   (L_v A_k x_{\kmone})_{v \in S_k} = P_k A_k x_{\kmone}$, with $L_v x_{\kmone}(u) := x_{\kmone}(u + v)$, we can
choose~$v$ in $S_k$ and obtain from the previous relations
\begin{align*}
A_k x_{\kmone}(u + v) = \sum_{w \in B} \langle f_w, x_{k}(u)\rangle w(v).
\end{align*}
Thus, taking all sampling points~$u \in \bar{\Omm}$ and all~$v \in S_k$, we
have a full view of the signal~$A_k x_{\kmone}$ on all of~$\Omega$ by our assumption
on the set~$\bar{\Omm}$.

For~$f \in \Hcal_{\kmone}$, the signal~$\langle
f, x_{\kmone}(u) \rangle$ can then be recovered by deconvolution as follows:
\begin{align*}
\langle f, x_{\kmone}(u) \rangle
   = \mathcal{F}^{-1} \left( \frac{\mathcal{F}(\langle f, A_k x_{\kmone}
   (\cdot)\rangle)}{\mathcal{F}(h_{\sigma_k})} \right)(u),
\end{align*}
where~$\mathcal{F}$ denotes the Fourier transform. Note that the inverse Fourier
transform is well-defined here because the signal~$\langle f, A_k x_k(\cdot)\rangle$ is
itself a convolution with~$h_{\sigma_k}$, and~$\mathcal{F}(h_{\sigma_k})$ is strictly
positive as the Fourier
transform of a Gaussian is also a Gaussian.

By considering all elements~$f$ in an orthonormal basis of~$\Hcal_{\kmone}$, we
can recover~$x_{\kmone}$. The map $x_k$ can then be reconstructed trivially by applying
   operators~$P_k$, $M_k$ and~$A_k$ on $x_{\kmone}$.

\end{proof}


\subsection{Proof of Lemma~\ref{lemma:discretization}} 
\label{sub:proof_of_discretization}
\begin{proof}
   In this proof, we drop the bar notation on all quantities for simplicity;
   there is indeed no ambiguity since all signals are discrete here.  First, we
   recall that~$\Hcal_k$ contains all linear functions on~$\Pcal_k =
   \Hcal_{\kmone}^{e_k}$; thus, we may consider in particular functions $f_{j,
   w}(z) := e_k^{1/2}\langle w, z_j \rangle$ for $j \in \{1, \ldots, e_k\}$,~$w \in
   \Hcal_{\kmone}$, and $z=(z_1,z_2,\ldots,z_{e_k})$ in $\Pcal_k$. 
   Then, we may evaluate
\begin{align*}
\langle f_{j,w}, s_k^{-1/2} x_k[n] \rangle
	&= \sum_{m \in \Z} {h}_k[n s_k - m] \langle f_{j,w}, {M}_k {P}_k x_{\kmone}[m] \rangle \\
        &= \sum_{m \in \Z} {h}_k[n s_k - m] \langle f_{j,w}, \varphi_k({P}_k x_{\kmone}[m] ) \rangle \\
	&= \sum_{m \in \Z} {h}_k[n s_k - m] f_{j,w}({P}_k x_{\kmone}[m]) \\
	&= \sum_{m \in \Z} {h}_k[n s_k - m] \langle w, x_{\kmone}[m + j] \rangle \\
	&= \sum_{m \in \Z} {h}_k[n s_k + j - m] \langle w, x_{\kmone}[m] \rangle \\
        &= (h_k \ast \langle w, x_{\kmone} \rangle ) [n s_k + j],
\end{align*}
where, with an abuse of notation, $\langle w, x_{\kmone} \rangle$ is the
real-valued discrete signal such that $\langle w, x_{\kmone} \rangle [n] = \langle w, x_{\kmone}[n] \rangle$.
Since integers of the form $(n s_k + j)$ cover all of~$\Z$ according to the assumption $e_k \geq s_k$, we have a full
view of the signal $(h_k \ast \langle w, x_{\kmone} \rangle )$ on $\Z$.
We will now follow the same reasoning as in the proof of Lemma~\ref{lemma:signal_recovery} to recover $\langle w, x_{\kmone} \rangle$:
\begin{align*}
\langle w, x_{k-1} \rangle = \mathcal{F}^{-1} \left( \frac{\mathcal{F}(h_k \ast \langle w, x_{\kmone} \rangle )}{\mathcal{F}({h}_{k})} \right),
\end{align*}
where~$\mathcal{F}$ is the Fourier transform. Since the signals involved there are discrete, their Fourier transform are periodic with period $2\pi$,
and we note that $\mathcal{F}({h}_{k})$ is strictly positive and bounded away from zero. The signal~$x_{\kmone}$ is then recovered exactly
   as in the proof of Lemma~\ref{lemma:signal_recovery} by considering for~$w$ the elements of an orthonormal basis of $\Hcal_{\kmone}$.
\end{proof}

\subsection{Proof of Proposition~\ref{prop:general_bound}} 
\begin{proof}
Define $(MPA)_{k:j} := M_k P_{k} A_{k-1} M_{k-1}  P_{k-1} A_{k-2} \cdots M_j  P_j A_{j-1}$. Using the fact that $\|A_k\| \leq 1$, $\|P_k\| = 1$ and $M_k$ is non-expansive, we obtain
\begin{align*}
\| \Phi_n(L_\tau x) - \Phi_n(x) \| &= \|A_n (MPA)_{n:2} M_1 P_1 A_0 L_\tau x - A_n (MPA)_{n:2} M_1 P_1 A_0 x \| \\
	&\leq \|A_n (MPA)_{n:2} M_1 P_1 A_0 L_\tau x - A_n (MPA)_{n:2} M_1 L_\tau P_1 A_0 x \| \\
		&\quad + \|A_n (MPA)_{n:2} M_1 L_\tau P_1 A_0 x - A_n (MPA)_{n:2} M_1 P_1 A_0 x \| \\
	&\leq  \|[P_1 A_0, L_\tau]\| \|x\| \\
		&\quad + \|A_n (MPA)_{n:2} M_1 L_\tau P_1 A_0 x - A_n (MPA)_{n:2} M_1 P_1 A_0 x \|.
\end{align*}
Note that $M_1$ is defined point-wise, and thus commutes with $L_\tau$:
\[
   M_1 L_\tau x (u) = \varphi_1(L_\tau x(u)) = \varphi_1(x(u - \tau(u)) = M_1 x(u - \tau(u)) = L_\tau M_1 x (u).
\]
By noticing that $\|M_1 P_1 A_0 x\| \leq \|x\|$, we can expand the second term above in the same way. Repeating this by induction yields
\begin{align*}
\| \Phi_n(L_\tau x) - \Phi_n(x) \| &\leq \sum_{k=1}^n \|[P_k A_{k-1}, L_\tau]\| \|x\|  + \|A_n L_\tau (MPA)_{n:1} x - A_n (MPA)_{n:1} x \| \\
	&\leq \sum_{k=1}^n \|[P_k A_{k-1}, L_\tau]\| \|x\| + \|A_n L_\tau - A_n \| \|x\|,
\end{align*}
and the result follows by decomposing $A_n L_\tau = [A_n, L_\tau] + L_\tau A_n$ and applying the triangle inequality.
\end{proof}

\subsection{Proof of Lemma~\ref{lemma:stability}} 
\label{sub:stability_proof}

\begin{proof}
   The proof follows in large parts the methodology introduced by~\citet{mallat2012group} in the analysis of the stability of the scattering transform. More precisely, we will follow in part the proof of Lemma E.1 of~\citet{mallat2012group}.
   The kernel (in the sense of Lemma~\ref{lemma:schur}) of $A_\sigma$ is $h_\sigma(z - u) = \sigma^{-d} h(\frac{z - u}{\sigma})$.
Throughout the proof, we will use the following bounds on the decay of~$h$ for simplicity,
as in~\citet{mallat2012group}:\footnote{Note that a more precise analysis may be obtained by using finer decay bounds.}
\begin{align*}
|h(u)| &\leq \frac{C_h}{(1 + |u|)^{d+2}} \\
|\nabla h(u)| &\leq \frac{C'_h}{(1 + |u|)^{d+2}},
\end{align*}
which are satisfied for the Gaussian function~$h$ thanks to its exponential decay.

We now decompose the commutator
\begin{align*}
[L_c A_\sigma, L_\tau] = L_c A_\sigma L_\tau - L_\tau L_c A_\sigma = L_c (A_\sigma - L_c^{-1} L_\tau L_c A_\sigma L_\tau^{-1}) L_\tau = L_c T L_\tau,
\end{align*}
with $T := A_\sigma - L_c^{-1} L_\tau L_c A_\sigma L_\tau^{-1}$. Hence,
\begin{align*}
\|[L_c A_\sigma, L_\tau]\| \leq \|L_c\| \|L_\tau\| \|T\|.
\end{align*}
We have $\|L_c\| = 1$ since the translation operator $L_c$ preserves the norm. Note that we have
\begin{equation}
\label{eq:det_diffeo}
2^{-d} \leq (1 - \| \nabla \tau\|_\infty)^d \leq \det(I - \nabla \tau(u)) \leq (1 + \| \nabla \tau\|_\infty)^d \leq 2^d,
\end{equation}
for all~$u \in \Omega$. Thus, for~$f \in L^2(\Omega)$,
\begin{align*}
\|L_\tau f\|^2 &= \int_\Omega |f(z - \tau(z))|^2 dz = \int_\Omega |f(u)|^2 \det(I - \nabla \tau(u))^{-1} du \\
	&\leq (1 - \|\nabla \tau\|_\infty)^{-d} \|f\|^2,
\end{align*}
such that $\|L_\tau\| \leq (1 - \|\nabla \tau\|_\infty)^{-d/2} \leq 2^{d/2}$. This yields
\begin{align*}
\|[L_c A_\sigma, L_\tau]\| \leq 2^{d/2} \|T\|.
\end{align*}

\paragraph{Kernel of $T$.}
We now show that~$T$ is an integral operator and describe its kernel.
Let $\xi = (I - \tau)^{-1}$, so that $L_\tau^{-1} f(z) = f(\xi(z))$ for any function~$f$ in~$L^2(\Omega)$. We have
\begin{align*}
A_\sigma L_\tau^{-1} f(z) &= \int h_\sigma(z - v) f(\xi(v)) dv \\
	&= \int h_\sigma(z - u + \tau(u)) f(u) \det(I - \nabla \tau(u)), du
\end{align*}
using the change of variable $v = u - \tau(u)$, giving $\left|\frac{dv}{du}\right| = \det(I - \nabla \tau(u))$. Then note that $L_c^{-1} L_\tau L_c f(z) = L_\tau L_c f(z + c) = L_c f(z+c - \tau(z + c)) = f(z - \tau(z + c))$. This yields the following kernel for the operator $T$:
\begin{equation}
   k(z, u) = h_\sigma(z - u) - h_\sigma(z - \tau(z + c) - u + \tau(u)) \det(I - \nabla \tau(u)). \label{eq:kernelmallat}
\end{equation}
   A similar operator appears in Lemma E.1 of~\citet{mallat2012group}, whose kernel is identical to~(\ref{eq:kernelmallat}) when $c=0$.

   Like~\citet{mallat2012group}, we decompose $T = T_1 + T_2$, with kernels
\begin{align*}
k_1(z, u) &= h_\sigma(z - u) - h_\sigma((I - \nabla \tau(u))(z - u)) \det(I - \nabla \tau(u)) \\
k_2(z, u) &= \det(I - \nabla \tau(u)) \left(h_\sigma((I - \nabla \tau(u))(z - u)) - h_\sigma(z - \tau(z + c) - u + \tau(u)) \right).
\end{align*}
   The kernel $k_1(z,u)$ appears in~\citep{mallat2012group}, whereas the kernel
   $k_2(z,u)$ involves a shift $c$ which is not present in~\citep{mallat2012group}.
   For completeness, we include the proof of the bound for both operators, even
   though only dealing with $k_2$ requires slightly new developments.

\paragraph{Bound on $\|T_1\|$.}
   We can write~$k_1(z, u) = \sigma^{-d} g(u, (z - u)/\sigma)$ with
\begin{align*}
g(u, v) &= h(v) - h((I - \nabla \tau(u)) v) \det(I - \nabla \tau(u)) \\
	&= (1 - \det(I - \nabla \tau(u))) h((I - \nabla \tau(u)) v) + h(v) - h((I - \nabla \tau(u)) v).
\end{align*}
Using the fundamental theorem of calculus on~$h$, we have
\begin{align*}
h(v) - h((I - \nabla \tau(u))v) &= \int_0^1 \langle \nabla h ((I + (t - 1) \nabla \tau(u))v), \nabla \tau(u) v \rangle dt.
\end{align*}
Noticing that
\[
|(I + (t - 1) \nabla \tau(u))v| \geq (1 - \|\nabla \tau\|_\infty) |v| \geq (1/2) |v|,
\]
and that $\det(I - \nabla \tau(u))) \geq (1 - \|\nabla \tau\|_\infty)^d \geq 1 - d\|\nabla \tau\|_\infty$, we bound each term as follows
\begin{align*}
|(1 - \det(I - \nabla \tau(u))) h((I - \nabla \tau(u)) v)|
	&\leq d \|\nabla \tau\|_\infty \frac{C_h}{(1 + \frac{1}{2}|v|)^{d+2}} \\
\left| \int_0^1 \langle \nabla h ((I + (t - 1) \nabla \tau(u))v), \nabla \tau(u) v \rangle dt \right|
	&\leq \|\nabla \tau\|_\infty \frac{C'_h |v|}{(1 + \frac{1}{2}|v|)^{d + 2}}.
\end{align*}
We thus have
\begin{equation*}
|g(u, v)| \leq \| \nabla \tau \|_\infty \frac{C_h d + C'_h |v|}{(1 + \frac{1}{2}|v|)^{d + 2}}.
\end{equation*}
Using appropriate changes of variables in order to bound $\int |k_1(z, u)| du$ and $\int |k_1(z, u)| dz$, Schur's test yields
\begin{equation}
\label{eq:k1_norm}
\|T_1\| \leq C_1 \| \nabla \tau \|_\infty,
\end{equation}
with
\begin{equation*}
C_1 = \int_\Omega \frac{C_h d + C'_h |v|}{(1 + \frac{1}{2}|v|)^{d + 2}} dv
\end{equation*}

\paragraph{Bound on $\|T_2\|$.}
Let $\alpha(z, u) = \tau(z + c) - \tau(u) - \nabla \tau(u) (z - u)$, and note that we have
\begin{align}
\label{eq:alpha_bound}
|\alpha(z, u)| &\leq  |\tau(z + c) - \tau(u)| + |\nabla \tau(u) (z - u)| \nonumber\\
	&\leq \|\nabla \tau\|_\infty |z + c - u| + \|\nabla \tau\|_\infty |z - u| \nonumber\\
	&\leq \|\nabla \tau\|_\infty (|c| + 2|z - u|).
\end{align}
The fundamental theorem of calculus yields
\begin{align*}
k_2(z, u) = -\det(I - \nabla \tau(u)) \int_0^1 \langle \nabla h_\sigma(z - \tau(z+c) - u + \tau(u) - t \alpha(z, u)), \alpha(z, u) \rangle dt.
\end{align*}
We note that $|\det(I - \nabla \tau(u))| \leq 2^d$, and $\nabla h_\sigma(v) = \sigma^{-d-1} \nabla h(v/\sigma)$. Using the change of variable $z' = (z - u)/\sigma$, we obtain
\begin{align*}
\int & |k_2(z, u)|dz \\
& \leq 2^d \int \int_0^1 \left|\nabla h \left(z' + \frac{\tau(u + \sigma z' + c) - \tau(u) -t \alpha(u + \sigma z', u)}{\sigma}\right) \right| \left|\frac{\alpha(u + \sigma z', u)}{\sigma}\right| dt dz'.
\end{align*}
We can use the upper bound~\eqref{eq:alpha_bound}, together with our assumption $|c| \leq \kappa \sigma$:
\begin{equation}
\label{eq:alpha_normalized_bound}
\left|\frac{\alpha(u + \sigma z', u)}{\sigma}\right| \leq \|\nabla \tau\|_\infty (\kappa + 2|z'|).
\end{equation}
Separately, we have $|\nabla h(v(z'))| \leq C'_h / (1 + |v(z')|)^{d + 2}$, with
\begin{equation*}
v(z') := z' + \frac{\tau(u + \sigma z' + c) - \tau(u) -t \alpha(u + \sigma z', u)}{\sigma}.
\end{equation*}
For $|z'| > 2 \kappa$, we have
\begin{align*}
\left| \frac{\tau(u + \sigma z' + c) - \tau(u) -t \alpha(u + \sigma z', u)}{\sigma} \right|
	&= \left| t \nabla \tau(u) z' + (1-t)\frac{\tau(u + \sigma z' + c) - \tau(u)}{\sigma} \right| \\
	&\leq t \|\nabla \tau\|_\infty |z'| + (1 - t) \|\nabla \tau\|_\infty (|z'| + \kappa) \\
	&\leq \frac{3}{2} \|\nabla \tau\|_\infty |z'| \leq \frac{3}{4} |z'|,
\end{align*}
and hence, using the reverse triangle inequality, $|v(z')| \geq |z'| - \frac{3}{4} |z'| = \frac{1}{4}|z'|$. This yields the upper bound
\begin{equation}
\label{eq:grad_h_bound}
|\nabla h(v(z'))| \leq \begin{cases}
	C'_h, &\text{ if }|z'| \leq 2\kappa\\
	\frac{C'_h }{(1 + \frac{1}{4}|z'|)^{d + 2}}, &\text{ if }|z'| > 2\kappa.
\end{cases}
\end{equation}
Combining these two bounds, we obtain
\begin{align*}
\int |k_2(z, u)|dz \leq C_2 \|\nabla \tau\|_\infty,
\end{align*}
with
\begin{equation*}
C_2 := 2^d C'_h \left( \int_{|z'| < 2\kappa} (\kappa + 2|z'|)dz' + \int_{|z'| > 2\kappa} \frac{\kappa + 2|z'|}{(1 + \frac{1}{4}|z'|)^{d + 2}} dz' \right).
\end{equation*}
Note that the dependence of the first integral on~$\kappa$ is of order~$k^{d+1}$.
Following the same steps with the change of variable $u' = (z - u)/\sigma$, we obtain the bound $\int |k_2(z, u)|du \leq C_2 \|\nabla \tau\|_\infty$. Schur's test then yields
\begin{equation}
\label{eq:k2_norm}
\|T_2\| \leq C_2 \|\nabla \tau\|_\infty.
\end{equation}

We have thus proven
\begin{equation*}
\|[L_c A_\sigma, L_\tau]\| \leq 2^{d/2} \|T\| \leq 2^{d/2}(C_1 + C_2) \|\nabla \tau\|_\infty.
\end{equation*}
\end{proof}

\subsection{Discussion and Proof of Norm Preservation}
\label{sub:norm_preservation}
We now state a result which shows that while the kernel representation may lose some of the energy of the original
signal, it preserves a part of it, ensuring that the stability bound in Theorem~\ref{thm:stability} is non-trivial.
We consider in this section the full kernel representation, including a prediction layer, which is given by
$\Phi(x) = \varphi_{n+1}(\Phi_n(x))$, where~$\varphi_{n+1}$ is the kernel feature map of either
a Gaussian kernel~\eqref{eq:gauss_prediction_kernel} with~$\alpha = 1$, or a linear kernel~\eqref{eq:linear_prediction_kernel}.
In both cases, $\varphi_{n+1}$ is non-expansive, which yields
\begin{equation*}
\| \Phi(L_\tau x) - \Phi(x) \| \leq \| \Phi_n(L_\tau x) - \Phi_n(x) \|,
\end{equation*}
such that the stability result of Theorem~\ref{thm:stability} also applies to~$\Phi$.
For the Gaussian case, we trivially have a representation with norm~$1$,
which trivially shows a preservation of norm, while for the linear case,
at least part of the signal energy is preserved, in particular the energy in the low frequencies,
which is predominant, for instance, in natural images~\citep{torralba2003statistics}.

\begin{lemma}[Norm preservation]
\label{lemma:norm_lower_bound}
For the two choices of prediction layers, $\Phi(x)$ satisfies
\begin{equation*}
\|\Phi(x)\| = 1 \quad \text{(Gaussian),} \qquad
\|\Phi(x)\| \geq \|A_n A_{\nmone} \ldots A_0 x\| \quad \text{(Linear)}.
\end{equation*}
It follows that the representation~$\Phi$ is not contractive:
    \begin{equation}
       \sup_{x,x' \in L^2(\Omega,\Hcal_0)} \frac{\|\Phi(x) - \Phi(x')\|}{\|x-x'\|} = 1. \label{eq:non_contractive}
    \end{equation}
\end{lemma}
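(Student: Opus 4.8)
The proof splits into the two norm identities (one per prediction layer) and the non-contractiveness claim, which is then deduced from those bounds together with the fact that $\Phi$ is $1$-Lipschitz. The Gaussian identity and the upper bound in~\eqref{eq:non_contractive} are essentially free; the linear lower bound and the matching lower bound in~\eqref{eq:non_contractive} are where the work is.

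\textbf{Gaussian identity and the easy reduction.} For the Gaussian prediction layer, $\varphi_{n+1}$ is the feature map of $K(z,z') = e^{-\frac{\alpha}{2}\|z-z'\|^2}$, so $\|\Phi(x)\|^2 = K(\Phi_n(x),\Phi_n(x)) = 1$ for every~$x$, giving the first identity immediately. For the linear layer $\varphi_{n+1}$ is an isometry, so $\|\Phi(x)\| = \|\Phi_n(x)\|$ and it remains to prove $\|\Phi_n(x)\| \geq \|A_n \cdots A_0 x\|$.

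\textbf{The linear lower bound (the core step).} I would introduce the intermediate maps $y_0 = A_0 x$, $u_k = M_k P_k y_{k-1}$, $y_k = A_k u_k$ (so $\Phi_n(x) = y_n$) and the cumulative pooling operators $B_k := A_n A_{n-1} \cdots A_k$. The engine is the pointwise domination $\langle \varphi_k(z), \varphi_k(z')\rangle = K_k(z,z') \geq \langle z,z'\rangle$ from~\eqref{eq:lower_linear}, combined with the observation that $B_k^\ast B_k$ is a convolution against a \emph{nonnegative} Gaussian kernel $G_k$. Writing $\|B_k u_k\|^2 = \iint G_k(s-t)\langle u_k(s), u_k(t)\rangle\, ds\, dt$ and replacing each fiber inner product $\langle u_k(s), u_k(t)\rangle = K_k(P_k y_{k-1}(s), P_k y_{k-1}(t))$ by the smaller quantity $\langle P_k y_{k-1}(s), P_k y_{k-1}(t)\rangle$ — which is legitimate precisely because $G_k \geq 0$ — yields $\|B_k u_k\| \geq \|B_k P_k y_{k-1}\|$. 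Since spatial convolutions commute with patch extraction, $B_k P_k = P_k B_k$ (exactly as in the proof of Lemma~\ref{lemma:signal_recovery}), and $P_k$ is an isometry, the right-hand side equals $\|B_k y_{k-1}\|$. Because $y_{k-1} = A_{k-1} u_{k-1}$ gives $B_k y_{k-1} = B_{k-1} u_{k-1}$, chaining this inequality from $k=n$ down to $k=1$ telescopes to $\|\Phi_n(x)\| = \|B_n u_n\| \geq \|B_1 y_0\| = \|A_n \cdots A_0 x\|$, as desired.

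\textbf{Non-contractiveness.} The bound $\sup \leq 1$ holds because $\Phi$ is a composition of norm-preserving maps ($P_k$, $M_k$) and non-expansive ones ($A_k$, $\varphi_{n+1}$), hence $1$-Lipschitz. For the matching lower bound I would take $x' = 0$ (so $\Phi_n(0) = 0$) and a sequence of low-frequency inputs $x_m$ whose Fourier support shrinks to the origin, so that the cumulative Gaussian filter $A_n \cdots A_0$ acts almost as the identity and $c_m := \|A_n \cdots A_0 x_m\| / \|x_m\| \to 1$. In the linear case this gives $\|\Phi(x_m) - \Phi(0)\| / \|x_m\| = \|\Phi_n(x_m)\|/\|x_m\| \geq c_m \to 1$. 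In the Gaussian case I would additionally scale $x_m$ by $\epsilon \to 0$ and use $2 - 2e^{-s^2/2} = s^2(1 + o(1))$ with $s = \|\Phi_n(\epsilon x_m)\|$ to transfer the lower bound on $\|\Phi_n(\epsilon x_m)\|$ to $\|\Phi(\epsilon x_m) - \Phi(0)\|$, again driving the ratio to~$1$. The main obstacle is the inductive step above: the delicate points are justifying the interchange of the inequality $K_k \geq \langle \cdot,\cdot\rangle$ inside the double Bochner integral (using only $G_k \geq 0$, not normalization) and the commutation $B_k P_k = P_k B_k$ for the infinite-dimensional fibers — together these encode the idea that the ``linear channel'' of the representation survives every nonlinear layer.
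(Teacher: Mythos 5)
Your proposal is correct and follows essentially the same route as the paper's proof: the core lower bound uses exactly the paper's engine, namely the domination $K_k(z,z') \geq \langle z, z'\rangle$ applied inside a double integral against the nonnegative kernel of the composed Gaussian pooling operators, followed by commutation of pooling with patch extraction (translations) and telescoping down to $\|A_n \cdots A_0 x\|$. Your non-contractiveness argument is a cosmetic variant — you compare $\Phi(\epsilon x_m)$ with $\Phi(0)$ and let $\epsilon \to 0$, whereas the paper compares $\Phi(\lambda x)$ with $\Phi(x)$ as $\lambda \to 1$ using homogeneity of $\Phi_n$ — but both rest on the same two facts: the norm lower bound and the choice of low-frequency signals making $\|A_\sigma x\|/\|x\|$ arbitrarily close to $1$.
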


\begin{proof}
We begin by studying~$\|\Phi(x)\|$.
The Gaussian case is trivial since the Gaussian kernel mapping~$\varphi_{n+1}$ maps all points to the sphere.
In the linear case, we have
\begin{align*}
\|\Phi(x)\|^2 &= \|\Phi_n(x)\|^2 = \|A_n M_n P_n x_{\nmone}\|^2 \\
   &= \int \|A_n M_n P_n x_{\nmone}(u)\|^2 du \\
   &= \int \langle \int h_{\sigma_n}(u - v) M_n P_n x_{\nmone}(v) dv, \int h_{\sigma_n}(u - v') M_n P_n x_{\nmone}(v') dv' \rangle du \\
   &= \int \int \int h_{\sigma_n}(u - v) h_{\sigma_n}(u - v') \langle \varphi_n(P_n x_{\nmone}(v)), \varphi_n(P_n x_{\nmone}(v')) \rangle dv dv' du \\
   &\geq \int \int \int h_{\sigma_n}(u - v) h_{\sigma_n}(u - v') \langle P_n x_{\nmone}(v), P_n x_{\nmone}(v') \rangle dv dv' du \\
   &= \int \|A_n P_n x_{\nmone}(u)\|^2 du = \|A_n P_n x_{\nmone}\|^2,
\end{align*}
where the inequality follows from~$\langle \varphi_n(z), \varphi_n(z') \rangle = K_n(z, z')
\geq \langle z, z' \rangle$ (see Lemma~\ref{lemma:dp_kernels}).
Using Fubini's theorem and the fact that~$A_n$ commutes with translations, we have
\begin{align*}
\|A_n P_n x_{\nmone}\|^2 &= \int_{S_n} \|A_n L_v x_{\nmone}\|^2 d\nu_n(v) \\
 &= \int_{S_n} \|L_v A_n x_{\kmone}\|^2 d\nu_n(v) \\
 &= \int_{S_n} \|A_n x_{\kmone}\|^2 d\nu_n(v) \\
 &= \|A_n x_{\nmone}\|^2,
\end{align*}
where we used the fact that translations~$L_v$ preserve the norm.
Note that we have
\begin{align*}
A_n x_{\nmone} = A_n A_{\nmone} M_{\nmone} P_{\nmone} x_{n-2} = A_{n,\nmone} M_{\nmone} P_{\nmone} x_{n-2},
\end{align*}
where $A_{n,\nmone}$ is an integral operator with positive kernel~$h_{\sigma_n} \ast h_{\sigma_{\nmone}}$. Repeating the above relation then yields
\begin{align*}
\|\Phi(x)\|^2 \geq \|A_n x_{\nmone} \|^2 \geq \|A_n A_{\nmone} x_{n-1} \|^2
   \geq \ldots \geq \|A_n A_{\nmone} \ldots A_0 x\|^2,
\end{align*}
and the result follows.

We now show~\eqref{eq:non_contractive}.
By our assumptions on~$\varphi_{n+1}$ and on the operators~$A_k, M_k, P_k$, we have that~$\Phi$ is non-expansive,
so that
 \begin{displaymath}
    \sup_{x,x' \in L^2(\Omega,\Hcal_0)} \frac{\|\Phi(x) - \Phi(x')\|}{\|x-x'\|} \leq 1.
 \end{displaymath}
It then suffices to show that one can find $x, x'$ such that
the norm ratio $\frac{\|\Phi(x) - \Phi(x')\|}{\|x-x'\|}$ is arbitrarily close to 1.
In particular, we begin by showing that for any signal~$x \ne 0$ we have
 \begin{equation}
 \label{eq:phi_n_limlambda}
    \lim_{\lambda \to 1} \frac{\|\Phi(\lambda x) - \Phi(x)\|}{\|\lambda x-x\|} \geq \frac{\|A_\sigma x\|}{\|x\|},
 \end{equation}
where~$A_\sigma$ is the pooling operator with scale~$\sigma = (\sigma_n^2 +  \sigma_{\nmone}^2 + \ldots + \sigma_1^2)^{1/2}$,
and the result will follow by considering appropriate signals~$x$ that make this lower bound arbitrarily close to~1.

Note that by homogeneity of the kernels maps~$\varphi_k$ (which follows from the homogeneity of kernels~$K_k$),
and by linearity of the operators~$A_k$ and~$P_k$, we have~$\Phi_n(\lambda x) = \lambda \Phi_n(x)$ for
any~$\lambda \geq 0$. Taking~$\lambda > 0$, we have
\begin{align*}
\|\Phi_n(\lambda x) - \Phi_n(x)\| = (\lambda - 1) \|\Phi_n(x)\| \geq (\lambda - 1) \|A_n A_{\nmone} \ldots A_0 x\| = (\lambda - 1) \|A_\sigma x\|,
\end{align*}
adapting Lemma~\ref{lemma:norm_lower_bound} to the representation~$\Phi_n$.
Thus,
 \begin{displaymath}
    \lim_{\lambda \to 1} \frac{\|\Phi_n(\lambda x) - \Phi_n(x)\|}{\|\lambda x-x\|} \geq \frac{\|A_\sigma x\|}{\|x\|}.
 \end{displaymath}
When~$\varphi_{n+1}$ is linear, we immediately obtain~\eqref{eq:phi_n_limlambda} since $\|\Phi(\lambda x) - \Phi(x)\| = \|\Phi_n(\lambda x) - \Phi_n(x)\|$. For the Gaussian case, we have
\begin{align*}
\|\Phi(\lambda x) - \Phi(x)\|^2 &= 2 - 2 e^{-\frac{1}{2}\|\Phi_n(\lambda x) - \Phi_n(x)\|^2} \\
   &= 2 - 2 e^{-\frac{1}{2}(\lambda - 1)^2 \|\Phi_n(x)\|^2} \\
   &= (\lambda - 1)^2 \|\Phi_n(x)\|^2 + o((\lambda - 1)^2) \\
   &= \|\Phi_n(\lambda x) - \Phi_n(x)\|^2 + o((\lambda - 1)^2),
\end{align*}
which yields~\eqref{eq:phi_n_limlambda}.

By considering a Gaussian signal with scale~$\tau \gg \sigma$, we can make~$\frac{\|A_\sigma x\|}{\|x\|}$
arbitrarily close to~$1$ by taking an arbitrarily large~$\tau$. It follows that
\begin{align*}
\sup_x \lim_{\lambda \to 1} \frac{\|\Phi(\lambda x) - \Phi(x)\|}{\|\lambda x - x\|} = 1,
\end{align*}
which yields the result.
\end{proof}

\subsection{Proof of Lemma~\ref{lemma:roto_stability}}
\begin{proof}
We have
\begin{align*}
P x((u, \eta)) &= (v \in \tilde S \mapsto x((u, \eta)(v, 0))) \\
   &= (v \in \tilde S \mapsto x((u + R_\eta v, \eta))) \\
   &= (v \in R_\eta \tilde S \mapsto x((u + v, \eta))) \\
A x((u, \eta)) &= \int_{\R^2} x((u, \eta) (v, 0)) h(v) dv \\
   &= \int_{\R^2} x((u + R_\eta v, \eta)) h(v) dv \\
   &= \int_{\R^2} x((v, \eta)) h(R_{-\eta} (v - u)) dv \\
   &= \int_{\R^2} x((v, \eta)) h(u - v) dv,
\end{align*}
where the last equality uses the circular symmetry of a Gaussian around the origin.
For a diffeomorphism~$\tau$, we denote by $L_\tau$ the action operator given by
$L_\tau x((u, \eta)) = x((\tau(u), 0)^{-1}(u, \eta)) = x((u - \tau(u), \eta))$.
If we denote $x(\cdot, \eta)$ the $L^2({\R^2})$ signal obtained from a signal $x \in L^2(G)$ at a fixed angle,
we have shown
\begin{align*}
(Px)(\cdot, \eta) &= \tilde P_\eta(x(\cdot, \eta)) \\
(Ax)(\cdot, \eta) &= \tilde A (x(\cdot, \eta)) \\
(L_\tau x)(\cdot, \eta) &= \tilde L_\tau (x(\cdot, \eta)),
\end{align*}
where $\tilde P_\eta, \tilde A, \tilde L_\tau$ are defined on $L^2({\R^2})$ as in Section~\ref{sec:kernel_construction},
with a rotated patch $R_\eta \tilde S$ for $\tilde P_\eta$.
Then, we have, for a signal $x \in L^2(G)$,
\begin{align*}
\|[P A, L_\tau] x\|_{L^2(G)}^2 &= \int \|([P A, L_\tau] x) (\cdot, \eta)\|_{L^2({\R^2})}^2 d \mu_c(\eta) \\
   &= \int \|[\tilde P_\eta \tilde A, \tilde L_\tau] (x (\cdot, \eta))\|_{L^2({\R^2})}^2 d \mu_c(\eta) \\
   &\leq \int \|[\tilde P_\eta \tilde A, \tilde L_\tau] \|^2 \|x(\cdot, \eta)\|_{L^2({\R^2})}^2 d \mu_c(\eta) \\
   &\leq \left( \sup_\eta \|[\tilde P_\eta \tilde A, \tilde L_\tau] \|^2 \right) \|x\|_{L^2(G)}^2,
\end{align*}
so that $\|[P A, L_\tau]\|_{L^2(G)} \leq \sup_\eta \|[\tilde P_\eta \tilde A, \tilde L_\tau] \|_{L^2({\R^2})}^2$.
Note that we have $\sup_{c \in R_{\eta} \tilde S} |c| = \sup_{c \in \tilde S} |c| \leq \kappa \sigma$,
since rotations preserve the norm,
so that we can bound each $\|[\tilde P_\eta \tilde A, \tilde L_\tau]\|$ as in Section~\ref{sub:stability_results}
to obtain the desired result.
Similarly, $\|L_\tau A - A\|$ can be bounded as in Section~\ref{sub:stability_results}.

\end{proof}

\subsection{Proof of Theorem~\ref{thm:roto_stability}}
\begin{proof}
First, note that $A_c$ can be written as an integral operator
\[
A_c x(u) = \int x((v, \eta)) k(u, (v, \eta)) d \mu((v, \eta)),
\]
with $k(u, (v, \eta)) = \delta_u(v)$, where $\delta$ denotes the Dirac delta function.
We have
\[
\int |k(u, (v, \eta))| d \mu((v, \eta)) = \int |k(u, (v, \eta))| du = 1.
\]
By Schur's test, we thus obtain $\|A_c\| \leq 1$.
Then, note that $(\tau(u), \theta) = (0, \theta) (R_{-\theta} \tau(u), 0)$,
so that $L_{(\tau, \theta)} = L_{(0, \theta)} L_{\tau_\theta}$,
where we write $\tau_\theta(u) = R_{-\theta} \tau(u)$.
Additionally, it is easy to see that $A_c L_{(0, \theta)} = A_c$. We have
\begin{align*}
\| A_c \Phi_n(L_{(\tau, \theta)} x) - A_c \Phi_n(x) \|
   &= \| A_c \Phi_n(L_{(0, \theta)} L_{\tau_\theta} x) - A_c \Phi_n(x) \| \\
   &= \| A_c L_{(0, \theta)} \Phi_n(L_{\tau_\theta} x) - A_c \Phi_n(x) \| \\
   &= \| A_c \Phi_n(L_{\tau_\theta} x) - A_c \Phi_n(x) \| \\
   &\leq \|\Phi_n(L_{\tau_\theta} x) - \Phi_n(x) \|,
\end{align*}
using the fact that the representation $\Phi_n$ is equivariant to roto-translations by construction.

We conclude by using Lemma~\ref{lemma:roto_stability} together with an adapted version of Proposition~\ref{prop:general_bound},
and by noticing that $\|\nabla \tau_\theta\|_\infty = \|\nabla \tau\|_\infty$ and $\|\tau_\theta\|_\infty = \|\tau\|_\infty$.
\end{proof}


\section{Proofs Related to the Construction of CNNs in the RKHS} 
\label{sec:kernel_space_description}
\subsection{Proof of Lemma~\ref{lemma:activations}}
\begin{proof}
   Here, we drop all indices~$k$ since there is no ambiguity.
We will now characterize the functional space~$\Hcal$ by following the same strategy
as \citet{zhang2016l1,zhang2016convexified} for the non-homogeneous Gaussian and
inverse polynomial kernels on Euclidean spaces.
Using the Maclaurin expansion of~$\kappa$, we can define the following explicit
   feature map for the dot-product kernel $K_{\text{dp}}(z, z') := \kappa(\langle z, z' \rangle)$,
   for any $z$ in the unit-ball of~$\Pcal$:
\begin{equation}
   \begin{split}
      \psi_{\text{dp}}(z) & = \left(\sqrt{b_0}, \sqrt{b_1} z, \sqrt{b_2} z \otimes z,  \sqrt{b_3}  z \otimes z \otimes z, \ldots \right) \\
              & = \left(\sqrt{b_j} z^{\otimes j}   \right)_{j \in \N},
   \end{split}\label{eq:explicit_dp}
\end{equation}
where $z^{\otimes j}$ denotes the tensor product of order $j$ of the vector
$z$. Technically, the explicit mapping lives in the Hilbert space $\oplus_{j=0}^n \otimes^j \Pcal$,
where $\oplus$ denotes the direct sum of Hilbert spaces, and with the abuse of notation that $\otimes^0 \Pcal$ is simply $\Real$. 
Then, we have that $K_{\text{dp}}(z,z') = \langle \psi(z), \psi(z') \rangle$ for all $z,z'$
   in the unit ball of~$\Pcal$.  Similarly, we can construct an explicit
   feature map for the homogeneous dot-product
   kernels~(\ref{eq:dp_kernel_appx}):
\begin{equation}
   \begin{split}
      \psi_{\text{hdp}}(z) & = \left(\sqrt{b_0}\|z\|, \sqrt{b_1} z, \sqrt{b_2} \|z\|^{-1} z \otimes z,  \sqrt{b_3} \|z\|^{-2} z \otimes z \otimes z, \ldots \right) \\
              & = \left(\sqrt{b_j} \|z\|^{1-j} z^{\otimes j}   \right)_{j \in \N}.
   \end{split}\label{eq:explicit_hdp}
\end{equation}
   From these mappings, we may now conclude the proof by following the same strategy as~\citet{zhang2016l1,zhang2016convexified}. 
By first considering the restriction of $K$ to unit-norm vectors~$z$, 
   \begin{displaymath}
      \sigma(\langle w, z \rangle) = \sum_{j=0}^{+\infty}a_j \langle w, z \rangle^j = \sum_{j=0}^{+\infty}a_j \langle w^{\otimes j}, z^{\otimes j} \rangle = \langle \bar{w}, \psi(z) \rangle,
   \end{displaymath}
   where 
   \begin{displaymath}
      \bar{w} = \left(  \frac{a_j}{\sqrt{b_j}}  w^{\otimes j}  \right)_{j \in \N}.
   \end{displaymath}
   Then, the norm of $\bar{w}$ is 
   \begin{displaymath}
      \|\bar{w}\|^2 = \sum_{j=0}^{+\infty} \frac{a_j^2}{b_j} \| w^{\otimes j} \|^2 = \sum_{j=0}^{+\infty} \frac{a_j^2}{b_j} \| w \|^{2j} = C_{\sigma}^2( \|w\|^2) < +\infty. 
   \end{displaymath}
   Using Theorem~\ref{thm:rkhs}, we conclude that $f$ is in the RKHS of $K$, with norm $\|f\| \leq C_\sigma(\|w\|^2)$.
   Finally, we extend the result to non unit-norm vectors $z$ with similar calculations and we obtain the desired result.
\end{proof}

\subsection{CNN construction and RKHS norm}
\label{sub:cnn_construction_rkhs}

In this section, we describe the space of functions (RKHS) $\Hcal_
{\mathcal{K}_n}$ associated to the kernel~$\mathcal{K}_n(x_0, x_0') = \langle x_n, x_n' \rangle$ defined in~\eqref{eq:linear_prediction_kernel},
where~$x_n$, $x_n'$ are the final representations given by Eq.~\eqref{eq:final_repr},
in particular showing it contains the set of CNNs with
activations described in Section~\ref{sub:rkhs_activations}.

\subsubsection{Construction of a CNN in the RKHS.}
\label{ssub:rkhs_construction}
Let us consider the definition of the CNN presented in Section~\ref{sec:link_with_cnns}. We will show
that it can be seen as a point in the RKHS of~${\mathcal K}_n$.
According to Lemma~\ref{lemma:activations}, we consider~$\Hc_k$ that contains all functions of the form $z \in \Pcal_k \mapsto \|z\| \activ
(\langle w, z \rangle / \|z\|)$, with~$w \in \Pcal_k$. 

We recall the intermediate quantities introduced in Section~\ref{sec:link_with_cnns}.
That is, we define the initial quantities
 $f_1^{i} \in \mathcal{H}_1, g^i_1 \in \Pcal_1$ for~$i = 1, \ldots, p_1$ such that
\begin{align*}
   g_1^i &= w_1^{i}  \in L^2(S_1,\Real^{p_0}) = L^2(S_1,\Hcal_0) = \Pcal_1\\
f_1^i(z) &= \|z\| \activ(\langle g^0_i, z \rangle / \|z\|) \quad \text{ for } z \in \Pcal_1, 
\end{align*}
and we recursively define, from layer $\kmone$, the quantities $f_k^{i} \in \mathcal{H}_k, g_{k}^i \in \Pcal_k$
for $i = 1, \ldots, p_k$:
\begin{align*}
   g_{k}^i(v) &= \sum_{j=1}^{p_{\kmone}} w_{k}^{ij}(v) f_{\kmone}^j ~~~\text{where}~~~ w_k^i(v) = (w_k^{ij}(v))_{j=1,\ldots,p_{\kmone}}\\
f_k^i(z) &= \|z\| \activ(\langle g_{k}^i, z \rangle / \|z\|) \quad \text{ for } z \in \Pcal_k.
\end{align*}

Then, we will show that $\tilde{z}_k^i(u) = f_k^i(P_k x_{\kmone} (u)) = \langle f_k^i, M_k P_k x_{\kmone}(u)\rangle$, which correspond to feature maps at layer~$k$ and index~$i$ in a CNN. Indeed, this is easy to see for~$k = 1$ by construction with filters~$w_1^i(v)$, and for $k \geq 2$, we have
\begin{align*}
   \tilde{z}_k^i(u) &= n_k(u) \sigma\! \left( \langle w_k^i, P_k z_{\kmone}(u) \rangle / n_k(u) \right) \\
    &= n_k(u) \sigma\! \left( \langle w_k^i, P_k A_{\kmone} \tilde{z}_{\kmone}(u) \rangle / n_k(u) \right) \\
        &= n_k(u) \activ \left( \frac{1}{n_k(u)} \sum_{j=1}^{p_{\kmone}} \int_{S_k} w_{k}^{ij}(v)  A_{\kmone} \tilde{z}_{\kmone}^j(u+v)  d \nu_{k}(v) \right)\\
        &= n_k(u) \activ \left( \frac{1}{n_k(u)} \sum_{j=1}^{p_{\kmone}} \int_{S_k} w_{k}^{ij}(v)   \langle f_{\kmone}^j,  A_{\kmone}M_{\kmone} P_{\kmone} x_{\kmtwo}(u+v)\rangle d \nu_{k}(v) \right)\\
        &= n_k(u) \activ \left( \frac{1}{n_k(u)}  \int_{S_k}  \langle g_k^i(v) ,  A_{\kmone} M_{\kmone} P_{\kmone} x_{\kmtwo}(u+v)\rangle d \nu_{k}(v) \right)\\
        &= n_k(u) \activ \left( \frac{1}{n_k(u)}  \int_{S_k}  \langle g_k^i(v) ,  x_{\kmone}(u+v)\rangle d \nu_{k}(v) \right)\\
        &= n_k(u) \activ \left( \frac{1}{n_k(u)}  \langle g_k^i(v) ,  P_k x_{\kmone}(u)\rangle \right) \\
        & = f_k^i(P_k x_{\kmone}(u)),
\end{align*}
where $n_k(u) := \| P_k x_{\kmone} (u) \|$. Note that we have used many times the fact that $A_k$ operates on each channel independently when applied to a finite-dimensional map.

The final prediction function is of the form~$f_\sigma(x_0)= \langle w_{n+1}, z_n \rangle$ with $w_{n+1}$ in $L^2(\Omega, \Real^{p_n})$.
Then, we can define the following function $g_{\sigma}$ in $L^2(\Omega,\Hcal_n)$ such that
\begin{displaymath}
   g_{\sigma}(u) = \sum_{j=1}^{p_n} w_{n+1}^j(u) f_n^j,
\end{displaymath}
which yields
\begin{align*}
   \langle g_{\sigma}, x_n \rangle &= \sum_{j=1}^{p_n} \int_\Omega w_{n+1}^j(u) \langle f_n^j, x_n(u) \rangle du \\
   &= \sum_{j=1}^{p_n} \int_\Omega w_{n+1}^j(u) \langle f_n^j, A_n M_n P_n x_{n-1}(u) \rangle du \\
   &= \sum_{j=1}^{p_n} \int_\Omega w_{n+1}^j(u) A_n\tilde{z}_n^j(u)  du \\
   &= \sum_{j=1}^{p_n} \int_\Omega w_{n+1}^j(u) {z}_n^j(u)  du \\
        &= \sum_{j=1}^{p_n} \langle w_{n+1}^j, z_n^j \rangle = f_{\sigma}(x_0),
\end{align*}
which corresponds to a linear layer after pooling. Since the RKHS of ${\mathcal K}_n$ in the linear case~\eqref{eq:linear_prediction_kernel} contains all functions of the form $f(x_0) = \langle g, x_n \rangle$, for $g$ in $L^2(\Omega,\Hcal_n)$, we have that $f_\sigma$ is in the RKHS.

\subsubsection{Proof of Proposition~\ref{prop:cnns}}
\label{ssub:cnn_norm_proof}
\begin{proof}
As shown in Lemma~\ref{lemma:activations}, the RKHS norm of a function $f: z \in \Pcal_k \mapsto \|z\| \activ(\langle w, z \rangle/\|z\|)$ in~$\Hc_k$ is bounded by $C_\activ(\|w\|^2)$, where~$C_\activ$ depends on the activation~$\activ$. We then have
\begin{align*}
\|f_1^i\|^2 &\leq C_\activ^2(\|w_1^i\|_2^2 ) \quad \text{where} \quad \|w_1^i\|_2^2 = \int_{S_1}\|w_1^i(v)\|^2 d \nu_1(v) \\
\|f_{k}^i\|^2 &\leq C_\activ^2(\|g_k^i\|^2) \\
   \|g_k^i\|^2 &= \int_{S_{k}} \|\sum_{j=1}^{p_{\kmone}} w_k^{ij}(v) f_{\kmone}^j \|^2 d \nu_{k}(v) \\
        &\leq p_{\kmone} \sum_{j=1}^{p_{\kmone}} \left(\int_{S_k} |w_k^{ij}(v)|^2 d \nu_{k}(v) \right) \|f_{\kmone}^j\|^2 \\
        &= p_{\kmone} \sum_{j=1}^{p_{\kmone}} \|w_k^{ij}\|^2_2 \|f_{\kmone}^j\|^2,
\end{align*}
where in the last inequality we use $\|a_1 + \ldots + a_n\|^2 \leq n (\|a_1\|^2 + \ldots + \|a_n\|^2)$.
Since $C_\activ^2$ is monotonically increasing (typically exponentially in its argument),
we have for $k = 1, \ldots, n-1$ the recursive relation
\begin{equation*}
   \|f_{k}^i\|^2 \leq C_\activ^2 \left(p_{\kmone} \sum_{j=1}^{p_{\kmone}} \|w_k^{ij}\|^2_2 \|f_{\kmone}^j\|^2 \right).
\end{equation*}
The norm of the final prediction function~$f \in L^2(\Omega, \Hc_n)$ is bounded
as follows, using similar arguments as well as Theorem~\ref{thm:rkhs}:
\begin{align*}
   \|f_{\sigma}\|^2 \leq \|g_\sigma\|^2 \leq p_n \sum_{j=1}^{p_n} \left(\int_\Omega |w_{n+1}^j(u)|^2 du\right) \|f_n^j\|^2.
\end{align*}
This yields the desired result.
\end{proof}

\subsubsection{Proof of Proposition~\ref{prop:cnns_spectral}}
\label{ssub:cnn_spectral_norm_proof}
\begin{proof}
Define
\begin{align*}
F_k &= (f_k^1, \ldots, f_k^{p_k}) \in \Hc_k^{p_k} \\
G_k &= (g_k^1, \ldots, g_k^{p_k}) \in \Pcal_k^{p_k} \\
W_k(u) &= (w_k^{ij}(u))_{ij} \in \R^{p_k \times p_{\kmone}} \quad \text{ for }u \in S_k.
\end{align*}
We will write, by abuse of notation,~$G_k(u) = (g_k^1(u), \ldots, g_k^{p_k}(u))$ for~$u \in S_k$, so that we can write $G_k(u) = W_k(u) F_{\kmone}$.
In particular, we have~$\|G_k(u)\| \leq \|W_k(u)\|_2 \|F_{\kmone}\|$.
This can be seen by considering an orthonormal basis~$B$ of~$\Hc_k$,
and defining real-valued vectors $F_k^w = (\langle w, f_k^1 \rangle, \ldots, \langle w, f_k^{p_k} \rangle)$,
$G_k^w(u) = (\langle w, g_k^1(u) \rangle, \ldots, \langle w, g_k^{p_k}(u) \rangle)$ for~$w \in B$.
Indeed, we have $G_k^w(u) = W_k(u) F_{\kmone}^w$ and hence $\|G_k^w(u)\| \leq \|W_k(u)\|_2 \|F^w_{\kmone}\|$ for all~$w\in B$, and we conclude using
\begin{align*}
\|G_k(u)\|^2 = \sum_{w\in B} \|G_k^w(u)\|^2 \leq \|W_k(u)\|_2^2 \sum_{w\in B} \|F^w_{\kmone}\|^2 = \|W_k(u)\|_2^2 ~ \|F_{\kmone}\|^2.
\end{align*}
Then, we have
\begin{align*}
\|G_k\|^2 &= \sum_i \|g_k^i\|^2 = \sum_i \int_{S_k} \|g_k^i(u)\|^2 d \nu_k(u) = \int_{S_k} \|G_k(u)\|^2 d \nu_k(u) \\
  &\leq \int_{S_k} \|W_k(u)\|_2^2 ~\|F_{\kmone}\|^2 \nu_k(u) = \|W_k\|_2^2 ~\|F_{\kmone}\|^2.
\end{align*}
Separately, we notice that $C_\sigma^2$ is super-additive, \ie,
\begin{align*}
C_\sigma^2(\lambda_1^2 + \ldots + \lambda_n^2) \geq C_\sigma^2(\lambda_1^2) + \ldots + C_\sigma^2(\lambda_n^2).
\end{align*}
Indeed, this follows from the definition of~$C_\sigma^2$, noting that polynomials with non-negative coefficients are super-additive on non-negative numbers.
Thus, we have
\begin{align*}
\|F_1\|^2 &= \sum_{i=1}^{p_1} \|f_1^i\|^2 \leq \sum_{i=1}^{p_1} C_\sigma^2(\|w_1^i\|^2) \leq C_\sigma^2(\|W_1\|_F^2) \\
\|F_k\|^2 &\leq \sum_{i=1}^{p_k} C_\sigma^2(\|g_k^i\|^2) \leq C_\sigma^2(\|G_k\|^2), \quad \text{ for }k=2, \ldots, n.
\end{align*}
Finally, note that
\[
\|g_\sigma(u)\|^2 \leq \left( \sum_{j=1}^{p_n} |w_{n+1}^j(u)| \|f_n^j\|\right)^2 \leq \|w_{n+1}(u)\|^2 \|F_n\|^2,
\]
by using Cauchy-Schwarz, so that $\|g_\sigma\|^2 \leq \|w_{n+1}\|^2 \|F_n\|^2$.
Thus, combining the previous relations yields
\begin{align*}
\|f_{\sigma}\|^2 &\leq \|g_\sigma\|^2 \leq \|w_{n+1}\|^2 ~ C_\sigma^2(\|W_n\|_2^2 ~ C_\sigma^2(\|W_{n-1}\|_2^2 \ldots C_\sigma^2(\|W_1\|_F^2) \ldots)),
\end{align*}
which is the desired result.
\end{proof}


\vskip 0.2in
\bibliography{full,bibli}

\end{document}